\newtheorem{theorem}{Theorem}
\newtheorem{example}[theorem]{Example}
\newtheorem{definition}[theorem]{Definition}
\newtheorem{proposition}[theorem]{Proposition}
\newtheorem{lemma}[theorem]{Lemma}
\newtheorem{corollary}[theorem]{Corollary}
\newtheorem{remark}[theorem]{Remark}
\title{On the Complexity and Properties of Preferential Propositional Dependence Logic}
\author{%
    Kai Sauerwald$^1$\and Arne Meier$^2$\and Juha Kontinen$^3$
    \affiliations
    $^1$Artificial Intelligence Group, University of Hagen, Hagen, Germany\\
        $^2$Theoretical Computer Science, Leibniz University Hannover, Hannover, Germany\\
    $^3$Department of Mathematics and Statistics, University of Helsinki, Helsinki, Finland
%    \emails
%    email@example.com    %
}
\newcommand{\xmark}{\ding{55}}%
\newcommand{\minOf}[2]{\ensuremath{\min(#1,#2)}}
\renewcommand{\tuple}[1]{\ensuremath{\langle{#1}\rangle}}
\newcommand{\statesOf}[1]{\ensuremath{S(#1)}}
\newcommand{\modelsOf}[1]{\ensuremath{\llbracket #1\rrbracket}}
\newcommand{\Int}{\ensuremath{X}}   %
\newcommand{\pmW}{\ensuremath{\mathbb{W}}}
\newcommand{\pmS}{\ensuremath{\mathcal{S}}}
\newcommand{\pmL}{\ensuremath{\ell}}
\newcommand{\pmP}{\ensuremath{\prec}}
\newcommand{\pmWsub}{\ensuremath{\pmW_{\!\text{sub}}}}
\newcommand{\pmSsub}{\ensuremath{\pmS_{\text{sub}}}}
\newcommand{\pmLsub}{\ensuremath{\pmL_{\text{sub}}}}
\newcommand{\pmPsub}{\ensuremath{\pmP_{\text{sub}}}}
\newcommand{\pmWsup}{\ensuremath{\pmW_{\!\text{sup}}}}
\newcommand{\pmSsup}{\ensuremath{\pmS_{\text{sup}}}}
\newcommand{\pmLsup}{\ensuremath{\pmL_{\text{sup}}}}
\newcommand{\pmPsup}{\ensuremath{\pmP_{\text{sup}}}}
\newcommand{\pmWpq}{\ensuremath{\pmW_{\!\text{pq}}}}
\newcommand{\pmSpq}{\ensuremath{\pmS_{\text{pq}}}}
\newcommand{\pmLpq}{\ensuremath{\pmL_{\text{pq}}}}
\newcommand{\pmPpq}{\ensuremath{\pmP_{\text{pq}}}}
\DeclareMathOperator{\nmableitW}{\nmableit_{\mkern-2mu\pmW}}
\newcommand{\nmableitWparam}[1]{\nmableit_{\mkern-2mu#1}}
\DeclareMathOperator{\notnmableitW}{\notnmableit_{\mkern-2mu\pmW}}
\newcommand{\notnmableitWparam}[1]{\notnmableit_{\mkern-2mu#1}}
\newcommand*{\centernot}{%
	\mathpalette\@centernot
}
\def\@centernot#1#2{%
	\mathrel{%
		\rlap{%
			\settowidth\dimen@{$\m@th#1{#2}$}%
			\kern.5\dimen@
			\settowidth\dimen@{$\m@th#1=$}%
			\kern-.5\dimen@
			$\m@th#1\not$%
		}%
		{#2}%
	}%
}
\DeclareRobustCommand\nmableitSymb{\mathrel{|\mkern-8.5mu\sim}} %
\newcommand{\nmableit}{\nmableitSymb} %
\newcommand{\notnmableit}{\centernot\nmableitSymb} %
\newcommand{\problemdef}[3]{%
\begin{center}
\begin{tabular}{l@{\hskip 0.2cm}p{6.1cm}}\toprule
\textsf{\bfseries Problem:}& #1 \\\midrule
\textsf{\bfseries Input:}& #2.\\
\textsf{\bfseries Question:}& #3?\\\bottomrule
\end{tabular}
\end{center}
} \newcommand{\allModels}[1]{\mathbb{A}_{#1}}
\newif\ifhideproofs
\newcommand*\linenomathpatch[1]{%
    \cspreto{#1}{\linenomath}%
    \cspreto{#1*}{\linenomath}%
    \csappto{end#1}{\endlinenomath}%
    \csappto{end#1*}{\endlinenomath}%
}
\newcommand*\linenomathpatchAMS[1]{%
    \cspreto{#1}{\linenomathAMS}%
    \cspreto{#1*}{\linenomathAMS}%
    \csappto{end#1}{\endlinenomath}%
    \csappto{end#1*}{\endlinenomath}%
}
\let\linenomathAMS\linenomathWithnumbers
\patchcmd\linenomathAMS{\advance\postdisplaypenalty\linenopenalty}{}{}{}
\let\linenomathAMS\linenomathNonumbers
\newcommand{\ENT}{\problemFont{\textsc{Ent}}}
\newcommand{\pmCircOrderModelChecking}{\ENT(\CPLPref)}
\newcommand{\pmCircOrderModelCheckingTS}{\ENT(\PDLPref)}
\newcommand{\pmCircOrderModelCheckingTPL}{\ENT(\TPLPref)} %
\newcommand{\succinct}{\textsc{Succ}}
\newcommand{\PDL}{\logicFont{PDL}}
\renewcommand{\PL}{\ensuremath{\mathrm{PL}}\xspace}
\newcommand{\TPL}{\ensuremath{\logicFont{TPL}}\xspace}
\newcommand{\CPL}{\logicFont{CPL}\xspace}
\newcommand{\AC}[1]{\complClFont{AC}^{#1}}
\newcommand{\formulaOne}{\ensuremath{\varphi}}
\newcommand{\formulaTwo}{\ensuremath{\psi}}
\newcommand{\formulaThree}{\ensuremath{\gamma}}
\newcommand{\pmLogicPref}[1]{\ensuremath{{#1}\kern-0.5ex\raise0.95ex\hbox{\tiny\logicFont{pref}}}\xspace}
\newcommand{\TPLPref}{\pmLogicPref{\TPL}}
\newcommand{\CPLPref}{\pmLogicPref{\CPL}}
\newcommand{\PDLPref}{\pmLogicPref{\PDL}}
\DeclareMathOperator{\leqlogm}{\leq_{m}^{\mathrm{log}}}
\DeclareMathOperator{\lex}{<_{\mathrm{lex}}}
\DeclareMathOperator{\rlex}{>_{\mathrm{lex}}}
\newcommand{\complClFont}[1]{\mathsf{#1}}
\newcommand{\Ptime}{\complClFont{P}}
\newcommand{\OLMS}{\problemFont{OLMS}}
\DeclareMathOperator{\id}{\mathrm{id}}
\newcommand{\NC}[1]{\complClFont{NC}^{#1}}
\newcommand{\OLMSco}{\ensuremath{\overline{\problemFont{OLMS}}}}
\newcommand{\ssLogic}{\ensuremath{\mathscr{L}}}
\newcommand{\ssSystem}{\ensuremath{\mathbb{S}}}
\newcommand{\ssFormulas}{\ensuremath{\mathcal{L}}}
\newcommand{\ssInt}{\ensuremath{\Omega}}
\newcommand{\MYParagraph}[1]{\par\smallskip\noindent\textbf{#1}}
\newcommand{\textlabelmarker}[1]{%
    \protected@edef\@currentlabel{#1}%
    \phantomsection%
}
\begin{document}
\thispagestyle{plain}

\maketitle

\begin{abstract}
This paper considers the complexity and properties of KLM-style preferential reasoning in the setting of propositional logic with team semantics and dependence atoms, also known as propositional dependence logic.
Preferential team-based reasoning is shown to be cumulative, yet violates System~P. We give intuitive conditions that fully characterise those cases where preferential propositional dependence logic satisfies System~P.  We show that these characterisations do, surprisingly, not carry over to preferential team-based propositional logic. Furthermore, we show how classical entailment and dependence logic entailment can be expressed in terms of non-trivial preferential models. 
Finally, we present the complexity of preferential team-based reasoning for two natural representations.
This includes novel complexity results for classical (non-team-based) preferential reasoning.
\end{abstract}

\section{Introduction}
\label{sec:introduction}

Preferential reasoning in style of Kraus, Lehmann and Magidor (\citeyear{KS_KrausLehmannMagidor1990})---henceforth abbreviated by KLM---is one of the main non-monotonic reasoning approaches that is well accepted in knowledge representation and reasoning, with connections to, e.g., belief change \cite{KS_MakinsonGaerdenfors1991} and human-like reasoning \cite{KS_RagniKernIsbernerBeierleSauerwald2020}; see also Gabbay et al. (\citeyear{KS_GabbayHoggerRobinson1993}) and Brewka et al. (\citeyear{KS_Brewka1997}) for a general placement within non-montonic reasoning.
The semantic core of KLM-style preferential reasoning is its very elegant construction by preferential models.
Roughly, a preferential model provides a strict partial order~$ \prec $ for a set of interpretations of some underlying logic (which is often classical propositional logic). 
Then, one says a formula \emph{$ \formulaTwo $ is preferentially entailed  from $ \formulaOne $} if all $\prec$-minimal models of $ \formulaOne $ are models of $ \formulaTwo $, i.e.,
\begin{center}
    \( \formulaOne \nmableit \formulaTwo \) if \( \minOf{\modelsOf{\formulaOne}}{\prec} \subseteq \modelsOf{\formulaTwo} \).
\end{center}
Intuitively, when a non-monotonic inference \( \formulaOne \nmableit \formulaTwo \) 
is generically understood
as \enquote{when \( \formulaOne \) holds, then usually \( \formulaTwo \) holds}, the preferential reasoning reading of \emph{\enquote{usually}} is \emph{\enquote{one expects that}} \cite{KS_GaerdenforsMakinson1994}. 
Hence, the intuition is that $\prec$ expresses a degree of exceptionality on the interpretations, i.e., the more preferred interpretations are less exceptional.
Another feature of preferential reasoning is that it is exactly characterized by the System~P postulates when the underlying logic is classical (KLM, \citeyear{KS_KrausLehmannMagidor1990}).
Because the System~P postulates are so widely accepted, preferential reasoning is sometimes considered as the \enquote{conservative core of non-monotonic reasoning} \cite{KS_Pearl1989,KS_Gabbay1984}.
Team semantics is a logical framework for studying concepts and phenomena that arise in the presence of plurality of objects. These concepts include, e.g.,   functional dependence ubiquitous in database theory and conditional independence of random variables in statistics. %
 The start of the field of team semantics can be traced back to the introduction of (first-order) dependence logic by Väänänen in \cite{vaananen07}.
In dependence logic, formulas are interpreted by sets of assignments (teams) instead of single assignments as in the usual classical semantics.
Syntactically, dependence logic 
introduces
new atomic formulas called dependence atoms $\dep{\vec{x},y}$ expressing that the values of the variables $\vec x$ functionally determine the value of the variable~$y$. %
During the past decade, the expressivity and complexity aspects of dependence logic and other team-based logics have been extensively studied and interesting connections have been found to areas such as database theory \cite{HannulaKV20,HannulaK16},  meta-finite model theory  \cite{abs-2003-00644}, inquisitive logic \cite{10.1215/00294527-2019-0033}, and epistemic logic \cite{Galliani15}. These works focus on logics in the first-order, propositional and modal team semantics, and more recently also in the multiset \cite{DurandHKMV18}, probabilistic \cite{HKMV18} and semiring settings \cite{BarlagHKPV23}. 

In this paper, we study preferential propositional dependence logic, i.e., preferential entailment with propositional dependence logic as underlying logic. A far as the authors know, a merger of logics in team semantics and non-monotonic reasoning has not been studied so far except for \cite{JY23}, where the former applies a certain (non-monotonic) team-based modal logic to the formal analysis of natural language.
In the following, we present the motivation for our study and then present an overview of this paper, including our main contributions.

\MYParagraph{Motivation.} 
Combining team-based reasoning and preferential reasoning is a promising way to obtain a novel conceptually rich family of reasoning approaches.
Consider, for instance, the classical example with \emph{birds} (\( b \)), \emph{flies} (\( f \)), and penguin (\( p \)). 
First, preferential entailment \( \dep{b,f} \nmableit \neg p \) reads technically as \enquote{all maximally preferred teams that satisfy \( \dep{b,f} \) also satisfy \( \neg p \)}. 
There is no obvious way to formulate the latter kind of expression in existing team-based logics, so injecting non-monotonicity is a valuable extension of team logic.
Note that  \enquote{\( \dep{b,f} \nmableit \neg p \)} does not imply that \( {=}\,(b,f) \land p  \) is inconsistent.
Then, when employing the typical understanding of preferential reasoning as realising inference by expectation, we obtain the following.
The dependence atom $\dep{b,f}$ expresses that whether it is a \emph{bird} fully determines whether it \emph{flies}. Thus, the (monotonic) entailment \( \dep{b,f} \models \neg p \) states that
\begin{center}
    \emph{\enquote{when whether it is a \emph{bird} (\( b \)) determines whether it \emph{flies}\emph{flies} (\( f \)), then it is not a penguin (\( \neg p \))}}
\end{center}
and the preferential entailment \( \dep{b,f} \nmableit \neg p \) reads as 
\begin{center}
    \emph{\enquote{when whether it is a \emph{bird} (\( b \)) determines whether it \emph{flies} (\( f \)), then one \textbf{expects} not a penguin (\( \neg p \))}}.
\end{center}
This is an expression that preferential reasoning with an underlying classical logic does not permit.
But we do not have to stop with this kind of understanding. A team corresponds to a plurality of objects, which permits various understandings of what a team stands for. Moreover, the preferential setting allows us to explore new understandings of the underlying order $\prec$.
Dependent on the application context, one reads \( \dep{b,f} \nmableit \neg p \), e.g., as follows:
\begin{itemize}
    \item\emph{Teams as databases:}
\enquote{When the value of \( b \) determines the value of \( f \) in a database, then one expects that the value of \( p \) is~\( 0 \).}

\item\emph{Teams as possible worlds.} \enquote{When the agent is convinced that whether \( f \) holds in a world always depends on \( b \), then usually the agent expects that \( p \) does not hold.}

\item\emph{Teams as answers to a question (inquisitive reading).} \enquote{One expects that \( p \) does not hold whenever in all answers \( f \) depends on \( b \).}

\item\emph{Teams as datasets and \( \prec \) orders them by reliability.} \enquote{In all most reliable datasets in which \( b \) determines the value of \( f \), it does not hold \( p \).} 
\end{itemize}
These are examples of interpretations of preferential dependence logic. 
We expect that one discovers more potential interpretations and applications of preferential team-based logics, when one considers preferential versions of other team-based logics.

\MYParagraph{Contributions.}  In this paper, we consider the complexity and properties of KLM-style preferential logics in the context of team-based logics. Specifically, we will encounter the preferential counterparts of the following logics:
\begin{itemize}
    \item Propositional logic with classical semantics  \hfill(\( \CPL \))
    \item Propositional logic with team-based semantics \hfill(\( \TPL \))
    \item Propositional dependence logic \hfill(\( \PDL \))
\end{itemize}
Our study will focus on preferential propositional dependence logic (\PDLPref).
But, we will also discuss preferential entailment of propositional logic with classical semantics (\CPLPref) and team-based semantics (\TPLPref).
The following list summarizes the main contribution of this paper:
\begin{itemize}
    \item{[\emph{Relationship of \PDLPref to System~P.}]} 
    It is shown that \PDLPref satisfies System~C and violates System~P. 
    We present two properties, \eqref{eq:StarProperty} and \eqref{eq:TriangleProperty} (see p.\pageref{eq:StarProperty}), for which each of them precisely characterize those preferential models in which System~P is satisfied.

\pagebreak[3]
    \item{[\emph{Properties of \TPLPref.}]} 
    We observe that characterization of System~P  via \eqref{eq:StarProperty} and \eqref{eq:TriangleProperty} does not carry over to \TPLPref from \PDLPref. This is surprisingly, as \TPL is a fragment of \PDL. 
    It is shown that \TPLPref still satisfies System~C and violates System~P.

    \item{[\emph{Complexity of Preferential Reasoning.}]} We give a full classification in terms of tractable and intractable cases for the problem of inference from a given preferential model for.
    Note that, unlike the problem of inference from set of conditional assertions~\cite{KS_LehmannMagidor1992,KS_EiterGottlob1992} for \CPLPref, complexity of inference from preferential mode for \PDLPref, \CPLPref and \TPLPref has not been studied before.
    We provide upper and lower bounds for the complexity of preferential classical propositional logic and preferential propositional dependence logic. 
    Table~\ref{tab:compl-overview} summarises the complexity results. 
\end{itemize}
\begin{table}[tb]
\centering
\begin{tabular}{@{}l@{}c@{}c@{\hskip0.7em}l@{\hskip0.3em}}
    \toprule
    \textbf{Problem}                             & \textbf{Tract.} &       \textbf{Complexity}       & \textbf{Result}                         \\ \midrule
    $\pmCircOrderModelChecking$                  &     \checkmark      &   $\in\Ptime$, $\NC{1}$-hard    & Thm.~\ref{thm:pent-pl}                  \\
    $\succinct\pmCircOrderModelChecking_{\rlex}$ &   \xmark        &      $\Delta^p_2$-complete      & Thm.~\ref{thm:COMC-deltap2-complete}    \\
    $\succinct\pmCircOrderModelChecking$         &   \xmark & $\in\Pi^p_2$, $\Delta^p_2$-hard & Thm.~\ref{thm:suc-pent-pl-upper-lower}  \\ %
    $\pmCircOrderModelCheckingTS$                &  \xmark  &   $\in\Theta_2^p$, $\NP$-hard   & Thm.~\ref{thm:pent-pdl}                 \\
    $\succinct\pmCircOrderModelCheckingTS$       &  \xmark   & $\in\Pi^p_2$, $\Delta^p_2$-hard & Thm.~\ref{thm:suc-pent-pdl-upper-lower} \\
    $\pmCircOrderModelCheckingTPL$               &     \checkmark      &   $\in\Ptime$, $\NC{1}$-hard    & Col.~\ref{cor:pent-tpl}                 \\
    $\succinct\pmCircOrderModelCheckingTPL$      &  \xmark   & $\in\Pi^p_2$, $\Delta^p_2$-hard & Col.~\ref{cor:pent-tpl}                 \\ \bottomrule
\end{tabular}
\caption{Overview of novel complexity results for entailment based on preferential models. 
    \textbf{Tract.} stands for tractability.}
\label{tab:compl-overview}
\end{table}
In the next section, we present the preliminaries on logic and computational complexity.
Section~\ref{sec:background_KLM_logic} presents the background on preferential reasoning. In Section~\ref{sec:axiomaticsPDL} we study the relationship of preferential propositional dependence logic to System~P.
A non-trivial preferential representation of standard entailment is presented in Section~\ref{sec:prefReconstruct}.
In Section~\ref{sec:discussion}, we discuss implications for preferential proposition logic with team semantics.
Section~\ref{sec:complexity} is dedicated to presenting upper and lower bounds for the complexity of preferential entailment.
Finally, Section~\ref{sec:conclusion} concludes the paper.

\section{Preliminaries}
\label{sec:background_team_based_logic}
We present the background on propositional logics with classical semantics, team semantics and propositional dependence logic %
(a survey on team-based logics can be found by \citeauthor{AKV16}, \citeyear{AKV16}).
Furthermore, we present the background on computational complexity.

\MYParagraph{Language of Propositional Logic.}
In the following, we describe some logics we consider in this paper.
We denote by $\Prop=\{\;p_i\mid i\in \mathbb{N}\;\}$ the countably infinite set of propositional variables. 
We will use letters $p,q,r,\dots$ (with or without subscripts) to stand for elements of $\Prop$. 
In this paper, we consider propositional formulas in negation normal form, i.e., 
well-formed \PL-formulas $\formulaOne$  are formed by the grammar:
 \[\formulaOne\Coloneqq p\mid \neg p \mid \bot\mid\top\mid \formulaOne\wedge\formulaOne\mid\formulaOne\vee\formulaOne,\]
where $p\in\Prop$, and $\top,\bot$ are the usual syntactic sugars for true and false. We write $\Prop(\formulaOne)$ for the set of propositional variables occurring in $\formulaOne$.

\MYParagraph{Classical Propositional Logic (\CPL).}
We consider the \emph{classical semantics} for \PL-formulas.
If one considers a non-empty finite subset $N\subseteq \Prop$ of propositional variables, then define for valuations $v\colon N\to\{0,1\}$ over \( N \) and \PL-formulas \( \formulaOne \):
\[ \llbracket \formulaOne\rrbracket^c \coloneqq\{v\colon N\to\{0,1\}\mid v\models\formulaOne\}. \]
We also write $v\models p$ in case $v(p)=1$, and $v\not\models p$ otherwise. 
The valuation function $v$ is extended to the set of all \PL-formulas in the usual way.
Let denote by $\allModels{N}$ the set of all assignments over $N$. 
Furthermore, we will write $\PL(N)$ for all propositional formulas using variables only from $N$.
%
	\begin{comment}
		Let $v:\Prop\to\{0,1\}$ be a valuation. For any classical $\formulaOne$, the truth relation $v\models\formulaOne$ is defined inductively as:
		\begin{itemize}
			\item $v\models p$ ~~iff~~ $v(p)=1$;
			\item $v\models \bot$ ~~is never the case;
			\item $v\models \top$ ~~is always the case;
			\item $v\models\neg\formulaOne$ ~~iff~~ $v\not\models\formulaOne$;
			\item $v\models\formulaOne\wedge\formulaTwo$ ~~iff~~ $v\models\formulaOne$ and $v\models\formulaTwo$;
			\item $v\models\formulaOne\vee\formulaTwo$ ~~iff~~ $v\models\formulaOne$ or $v\models\formulaTwo$.
		\end{itemize}
	\end{comment}
	%
%
%
We write $\formulaOne\models^c\formulaTwo$ for $ \llbracket \formulaOne\rrbracket^c \subseteq \llbracket \formulaTwo\rrbracket^c $ and $\formulaOne\equiv^c\formulaTwo$ if both $\formulaOne\models^c\formulaTwo$ and $\formulaTwo\models^c\formulaOne$.
When we talk in this paper about \CPL, we refer to the logic with \PL-formulas and classical semantics. 

%

%
%
%

\begin{comment}

	\begin{theorem}\label{PL_usual_sem_expressive_complete}
		Let $N=\{p_1,\dots,p_n\}$. For every set $X\subseteq 2^N$, there is a \PL-formula $\formulaOne$
		such that $\llbracket \formulaOne\rrbracket_N^c=X$.
	\end{theorem}
	\begin{proof}
		Let 
		\[\formulaOne=\bigvee_{v\in X}(p_1^{v(1)}\wedge\dots\wedge p_n^{v(n)}),\]
		where $v(i)$ is short for $v(p_i)$, $p_i^{1}=p_i$ and $p_i^0=\neg p_i$ for any $1\leq i\leq n$, and in particular $\bigvee\emptyset:=\bot$. For any valuation $u:N\to\{0,1\}$, we have that 
		\[u\models \formulaOne\iff u\models p_1^{v(1)}\wedge\dots\wedge p_n^{v(n)}\text{ for some }v\in X\iff u=v\in X.\] 
	\end{proof}

	\begin{corollary}\label{PL_usual_sem_dnf}
		Let $N=\{p_1,\dots,p_n\}$. For every classical formula $\formulaOne(p_1,\dots,p_n)$, we have that
		\[\formulaOne\equiv^c\bigvee_{v\in\llbracket \formulaOne\rrbracket^c_N}(p_1^{v(1)}\wedge\dots\wedge p_n^{v(n)}).\]
	\end{corollary}
\end{comment}

\MYParagraph{Propositional Logic with Team Semantics (\TPL).}
Next, we define \emph{team semantics} for \PL-formulas (cf. \cite{HannulaKVV15,YangV16}). A  team $X$ is a set of valuations for some finite set $N\subseteq \Prop$. We write $\dom(X)$ for the domain $N$ of $X$.%

\begin{definition}[Team semantics of \PL]
	Let $X$ be a team. For any \PL-formula $\formulaOne$ with $\dom(X)\supseteq \Prop(\formulaOne)$, the satisfaction relation $X\models\formulaOne$ is defined inductively as:
	\begin{align*}
		&X\models p&&\text{if for all }v\in X: v\models p\\
		&X\models\neg p&&\text{if for all } v\in X: v\not\models p\\
		&X\models \bot&&\text{if } X=\emptyset\\
		&X\models \top&&\text{is always the case}\\
		&X\models\formulaOne\wedge\formulaTwo&&\text{if } X\models\formulaOne \text{ and } X\models\formulaTwo\\
		&X\models\formulaOne\vee\formulaTwo&&\text{if there exist } Y,Z\subseteq X\\
		&&&\text{ s.t. }X=Y\cup Z, Y\models\formulaOne, \text{ and }Z\models\formulaTwo.
	\end{align*}
\end{definition}
\noindent The set of all teams \( X \) with \( X\models\formulaOne \) is denoted by \( \modelsOf{\formulaOne} \).
For any two \PL-formulas \( \formulaOne,\formulaTwo \), we write $\formulaOne\models^t\formulaTwo $ if \( \modelsOf{\formulaOne} \subseteq \modelsOf{\formulaTwo} \).
Write $\formulaOne\equiv^t\formulaTwo$ if both $\formulaOne\models^t\formulaTwo$ and $\formulaTwo\models^t\formulaOne$.
With \TPL we refer to the logical setting of \PL-formulas with team semantics.
We define the following properties for a formula~$\formulaOne$:
\begin{itemize}
		\item $X\models \formulaOne \iff \text{for all } v\in X,~\{v\}\models\formulaOne$. \hfill {\small(\textbf{Flatness})}
		\item $\emptyset \models \formulaOne$.\hfill{\small(\textbf{Empty team property})}
		\item If $X \,{\models}\, \formulaOne$ and $Y\,{\subseteq}\, X$, then ${Y\models \formulaOne}$.\hfill{\small(\textbf{Downwards closure})}
\end{itemize}
\begin{proposition}\label{prop:pdl_pincl_properties}
	\TPL has the flatness property, empty team property and downwards closure property. 
\end{proposition}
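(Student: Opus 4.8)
The plan is to establish all three properties by structural induction on the \PL-formula \formulaOne, carried out in the order: empty team property first, then flatness, and finally downwards closure as an immediate corollary of flatness. Since \TPL contains no dependence atoms, every connective in the grammar is one of the standard (``flat'') operators, so the whole argument is driven by the team-semantic clauses given above.

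First I would dispatch the \textbf{empty team property}, i.e.\ $\emptyset \models \formulaOne$ for every \formulaOne. The literal and constant cases are immediate: $\emptyset \models p$ and $\emptyset \models \neg p$ hold vacuously (there is no $v \in \emptyset$), $\emptyset \models \top$ always, and $\emptyset \models \bot$ since $\emptyset = \emptyset$. For $\formulaOne = \formulaTwo \wedge \formulaThree$ the claim reduces to the two induction hypotheses, and for $\formulaOne = \formulaTwo \vee \formulaThree$ one splits $\emptyset = \emptyset \cup \emptyset$ and applies the hypotheses to both disjuncts.

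Next I would prove \textbf{flatness}, i.e.\ $X \models \formulaOne$ iff for all $v \in X$ we have $\{v\} \models \formulaOne$, again by induction. The atomic and constant cases follow directly from the corresponding satisfaction clauses (for $\bot$ both sides are equivalent to $X = \emptyset$). Conjunction is routine, as the universal quantifier over $v \in X$ commutes with the conjunction of the two inductive equivalences. The only delicate case is disjunction $\formulaOne = \formulaTwo \vee \formulaThree$. For the forward direction, fix a splitting $X = Y \cup Z$ with $Y \models \formulaTwo$ and $Z \models \formulaThree$; any $v \in X$ lies in $Y$ or in $Z$, and the induction hypothesis applied to that side yields $\{v\} \models \formulaTwo$ or $\{v\} \models \formulaThree$ --- whence $\{v\} \models \formulaTwo \vee \formulaThree$ by splitting $\{v\} = \{v\} \cup \emptyset$ and invoking the \emph{empty team property} already established for the other disjunct. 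For the converse, given that each singleton $\{v\}$ with $v \in X$ satisfies $\formulaTwo$ or $\formulaThree$, I would set $Y = \{\, v \in X \mid \{v\} \models \formulaTwo \,\}$ and $Z = X \setminus Y$; then $X = Y \cup Z$, the induction hypothesis gives $Y \models \formulaTwo$, and every $v \in Z$ satisfies $\{v\} \models \formulaThree$, so $Z \models \formulaThree$, establishing $X \models \formulaTwo \vee \formulaThree$.

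Finally, \textbf{downwards closure} falls out of flatness with no further induction: if $X \models \formulaOne$ and $Y \subseteq X$, then flatness gives $\{v\} \models \formulaOne$ for all $v \in X$, hence for all $v \in Y$, and flatness applied to $Y$ yields $Y \models \formulaOne$. I expect the disjunction step of flatness to be the main obstacle, precisely because it is where the existential split in the semantics interacts with singletons; handling it cleanly is the reason for proving the empty team property first, so that the degenerate split $\{v\} = \{v\} \cup \emptyset$ is legitimate.
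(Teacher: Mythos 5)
Your proof is correct. Note that the paper itself gives no proof of this proposition: it is stated as a known, standard fact about team semantics for \PL-formulas (with a pointer to the literature, cf.\ \cite{HannulaKVV15,YangV16}), so there is no argument in the paper to compare yours against. What you have written is the standard textbook induction, and you handle the two genuinely delicate points correctly: (i) the disjunction case of flatness needs the empty team property (to legitimise the degenerate split $\{v\}=\{v\}\cup\emptyset$), which is why your ordering of the three properties matters; and (ii) in the converse direction you implicitly use that for a singleton team $\{v\}$, satisfaction of $\formulaTwo\vee\formulaThree$ forces $\{v\}\models\formulaTwo$ or $\{v\}\models\formulaThree$, since the only subteams of $\{v\}$ are $\emptyset$ and $\{v\}$ itself and their union must be $\{v\}$ --- it would be worth making that one-line observation explicit. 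Deriving downwards closure as a corollary of flatness is also fine, and is exactly why this argument does not extend to \PDL, where flatness fails and downwards closure must be proved directly.
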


Due to the flatness property, logical entailment of propositional logic with team-based semantics $\models^t$ and logical entailment of propositional logic with classical semantics $\models^c$ coincide.
However, we will see later that these different semantic approaches will lead to different preferential entailment relations.

\MYParagraph{Propositional Dependence Logic.}
\label{sec:pdl_pincl}
A {\em (propositional) dependence atom} is a string $\dep{a_1\dots a_k,b}$, 
in which $a_1,\dots,a_k,b$ are propositional variables from \Prop. 
The  team semantics of dependence atoms is defined  as follows, whereby \( \vec{a} \) stands for \( a_1,\dots, a_k \):
\begin{align*}
	& X \models \dep{\vec{a},b} && \text{if}~\text{for all }v, v' \in X, \\
	&&& \phantom{if} v(\vec{a})=v'(\vec{a}) \textrm{ implies }v(p)=v'(p).
\end{align*}
A dependence atom with the empty sequence in the first component will be abbreviated as $\dep{p}$ and called {\em constancy atoms}. The team semantics of the constancy atoms is reduced to
\begin{equation*}
	X \models \dep{p} ~~\text{if}~~ \text{for all }v, v' \in X, v(p)=v'(p).
\end{equation*}
We define the language of {\em propositional dependence logic} (denoted as \pdl) as the extension of \PL-formulas with dependence atoms.
With \PDL we refer to the whole logical approach, including the language and the above-mentioned semantics.
We consider an example for \PDL.
\begin{example}\label{example_dep_atm_propositional}
	Consider the team $X$ over $\{p,q,r\}$ defined by:
	\begin{center}
		\begin{tabular}{cccc}\toprule
			&$p$&$q$&$r$\\\midrule
			$v_1$&$1$&$0$&$0$\\
			$v_2$&$0$&$1$&$0$\\
			$v_3$&$0$&$1$&$0$\\\bottomrule
		\end{tabular}
	\end{center}
	We have $X\models\dep{p,q}$ and $X\models\dep{r}$. Moreover,  $X\models\dep{p}\vee \dep{p}$ but  $X\not \models\dep{p}$. %
\end{example}
\begin{comment}
	As illustrated in the above example, in general, propositional dependence atoms with multiple arguments can be easily defined using constancy atoms.

	\begin{fact}
		$\displaystyle\dep{p_1\dots p_n,q}\equiv\bigvee_{v\in 2^N}(p_1^{v(1)}\wedge\dots\wedge p_n^{v(n)}\wedge\dep{q})$, where $N=\{p_1,\dots,p_n\}$.
	\end{fact}
	
\end{comment}

The following proposition describes properties of \PDL.
\begin{proposition}\label{prop:pdl_pincl_properties}
	\PDL has the empty team property and downwards closure property. 
\end{proposition}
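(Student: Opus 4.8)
The plan is to establish both properties simultaneously by structural induction on the \PDL-formula $\formulaOne$, since the two inductions share the same case analysis and, as will become clear, the downward-closure step for disjunction feeds on both the empty-team and the downward-closure hypotheses of its subformulas. Because \PDL extends the \PL-formulas only by dependence atoms, the cases for the literals $p,\neg p$, the constants $\bot,\top$, and the connectives $\wedge,\vee$ are handled by exactly the same analysis one uses for \TPL; I would therefore present them briefly and isolate the genuinely new atomic case.

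For the base cases the empty-team property is, in each instance, a vacuous universal quantification: $\emptyset\models p$, $\emptyset\models\neg p$, and $\emptyset\models\top$ hold because the quantified condition ranges over no valuations, $\emptyset\models\bot$ holds directly from the clause $X\models\bot\iff X=\emptyset$, and $\emptyset\models\dep{\vec a,b}$ holds because there is no pair $v,v'\in\emptyset$ that could violate the implication. Downward closure for the literals and constants is equally immediate: restricting a team satisfying $p$ (resp.\ $\neg p$) to a subteam leaves every surviving valuation still satisfying $p$ (resp.\ $\neg p$), while $X\models\bot$ forces $X=\emptyset$ and hence $Y=\emptyset$ for every $Y\subseteq X$. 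The one genuinely new point is the dependence atom: if $X\models\dep{\vec a,b}$, then the functional-dependence condition holds for all pairs drawn from $X$, hence a fortiori for all pairs drawn from any $Y\subseteq X$, so $Y\models\dep{\vec a,b}$. Constancy atoms $\dep{p}$ are the special case in which the first component is the empty sequence, and require no separate treatment.

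For conjunction both properties transfer directly, since $X\models\formulaOne\wedge\formulaTwo$ unfolds to $X\models\formulaOne$ and $X\models\formulaTwo$: the empty team satisfies the conjunction by the two induction hypotheses, and a subteam $Y\subseteq X$ inherits each conjunct by downward closure. For disjunction, the empty-team property follows from the trivial split $\emptyset=\emptyset\cup\emptyset$ together with the empty-team hypotheses for $\formulaOne$ and $\formulaTwo$. The step I expect to be the crux is downward closure for disjunction: given a witnessing split $X=Y\cup Z$ with $Y\models\formulaOne$ and $Z\models\formulaTwo$, and an arbitrary subteam $W\subseteq X$, I would set $Y'=Y\cap W$ and $Z'=Z\cap W$. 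One then checks $Y'\cup Z'=(Y\cup Z)\cap W=X\cap W=W$, and since $Y'\subseteq Y$ and $Z'\subseteq Z$, the downward-closure hypotheses for $\formulaOne$ and $\formulaTwo$ give $Y'\models\formulaOne$ and $Z'\models\formulaTwo$, whence $W\models\formulaOne\vee\formulaTwo$. This intersection argument is the only move that is not purely mechanical, so it is where the write-up should concentrate; the remaining cases are bookkeeping. Note, finally, that the proposition deliberately omits flatness, which indeed fails for \PDL because dependence atoms are not flat, as Example~\ref{example_dep_atm_propositional} already illustrates.
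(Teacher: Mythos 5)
Your proof is correct. Note that the paper itself gives no proof of this proposition at all---it is stated as a known, standard fact of team semantics (these closure properties are classical for dependence logic and for the propositional team-semantics literature the paper cites)---so there is no paper argument to compare against step by step; your structural induction is precisely the standard argument the paper implicitly relies on. All cases check out: the vacuous-quantification base cases, the a-fortiori argument for dependence atoms (every pair of valuations in a subteam is a pair in the original team), and the intersection split $W=(Y\cap W)\cup(Z\cap W)$ for disjunction, which is indeed the only step requiring an idea. One small remark: under the paper's formulation of downward closure (``If $X\models\formulaOne$ and $Y\subseteq X$, then $Y\models\formulaOne$''), the empty subteam is already covered, so the disjunction case of downward closure needs only the downward-closure induction hypotheses for the disjuncts; your statement that it ``feeds on both the empty-team and the downward-closure hypotheses'' is harmless but slightly overstated, and the empty-team property is genuinely needed only for the empty-team half of the induction.
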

Note that \PDL does not satisfy the flatness property. 
We can define the flattening $\phi^f$ of a  \pdl-formula by replacing all dependence atoms in $\phi$ by $\top$. Clearly, $\phi^f$ is a \PL formula. Furthermore, one checks easily that $\phi \models^t \phi^f$
and that \( \{v\}\models \phi \Leftrightarrow v\models \phi^f \) for all assignments $v$.

\MYParagraph{Generic View on Logics.} 
Some parts of this paper will require a generic view on logics. 
The following provides such a view that provides the right abstraction to capture those aspects of logics that we need in this paper in a generic way.
A satisfaction system is a triple \( \ssSystem = \tuple{\ssFormulas,\ssInt,\models} \), where \( \ssFormulas \) is the set of formulas, \( \ssInt \) is the set of interpretations, and  \( {\models} \subseteq \ssInt \times \ssFormulas \) is the model-relation.
An entailment relation for a satisfaction system is a relation \( {\nmableit} \subseteq \ssFormulas \times \ssFormulas\). 
A satisfaction system \( \ssSystem \) together with an entailment relation \( \nmableit \) is denoted as logic
 \( \ssLogic = \tuple{\ssFormulas,\ssInt,\models,\nmableit} \).
We say a logic \( \ssLogic \) is \emph{standard}, if \( \nmableit \) is the canonical logical entailment given by \( \formulaOne \nmableit  \formulaTwo \) if \( \modelsOf{\formulaOne} \subseteq \modelsOf{\formulaTwo} \), whereby \( \modelsOf{\formulaOne} = \{ v \in \ssInt \mid v \models \formulaOne  \}  \). The canonical logical entailment is often written with the same symbol $\models$.
The propositional logics described in this section provide the following instances of the generic approach for each \( N \subseteq \Prop \), which are all standard logics:
\begin{center}
    \begin{tabular}{@{}l@{\hskip0.05em}l@{\hskip0.05em}l@{}}
    \( \CPL_{N} \)  & \( =\! \tuple{\ssFormulas^{\CPL}_{N},\ssInt^{\CPL}_{N},\models,\models^{\CPL}}  \) & \( {=}\, \tuple{\PL(N),\allModels{N},\models,\models^c} \)\\
    \( \TPL_{N} \) & \( =\! \tuple{\ssFormulas^{\TPL}_{N},\ssInt^{\TPL}_{N},\models,\models^{\TPL}}\) & \( {=}\, \tuple{\PL(N),\mathbb{T}_{N},\models,\models^t} \)\\
    \( \PDL_{N} \) & \( =\! \tuple{\ssFormulas^{\PDL}_{N},\ssInt^{\PDL}_{N},\models,\models^{\PDL}} \) & \( {=}\, \tuple{\pdl, \mathbb{T}_{N},\models,\models^t}\)
\end{tabular}
\end{center}
Often, we will not mention \( N \) explicitly and assume that there is some \( N \) of appropriate size.
Moreover, we will write $\models$ instead of $\models^t$ when there is no ambiguity.

\MYParagraph{Computational Complexity.}
We assume basic familiarity with computational complexity theory~\cite{KS_Papadimitriou1994}. 
We will make use of the complexity classes $\Ptime$ and $\Delta_2^p=\Ptime^{\NP}$, as well as standard reducibility notions, e.g., logspace-many-to-one reductions~${\leqlogm}$.

Before we define a natural complete problem for $\Delta_2^p$, we need to formally introduce the lexicographic order on assignments. 
Let $P\subsetneq\Prop$ with $|P|=n$. 
An assignment function $v\colon P\to\{0,1\}$ is interpreted as a string $v_s$ from $\{0,1\}^n$ in the natural way, i.e., $v_s=v(x_1)\cdots v(x_n)$ if $P=\{x_1,\dots,x_n\}$.  
We denote by $v_s[i]$ for $1\leq i \leq n$ the $i$th bit of $v_s$. 

\begin{definition}[lexicographic order]
    For two strings $s,s'\in\{0,1\}^n$ with $s\neq s'$ we say that $s\lex s'$ if there exists a (possibly empty) prefix $0\leq j\leq n$ such that $s[k]=s'[k]$ for all $0\leq k\leq j$ and $s[k+1]<s'[k+1]$. 
\end{definition}

\problemdef{\OLMS}{A propositional formula $\formulaOne$ over variables $\{x_1,\dots,x_n\}$}{$\formulaOne$ is satisfiable, and for the largest satisfying assignment $\theta$ with respect to $\lex$ do we have that $\theta(x_n)=1$}

\begin{proposition}[\citeauthor{KS_Krentel1988}~\citeyear{KS_Krentel1988}]
    $\OLMS$ is $\Delta_2^p$-complete under $\leqlogm$-reductions.
\end{proposition}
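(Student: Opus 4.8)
The plan is to prove the two directions of $\Delta_2^p$-completeness separately. For \emph{membership} in $\Delta_2^p = \Ptime^{\NP}$, I would reconstruct the $\lex$-largest satisfying assignment of $\formulaOne$ bit by bit using an $\NP$ oracle. A single oracle query first decides whether $\formulaOne$ is satisfiable; if not, we reject. Otherwise we process the variables in the order $x_1,\dots,x_n$, maintaining the partial assignment $\sigma$ of the already-fixed variables. At step $i$ we ask the oracle whether $\formulaOne \wedge \sigma \wedge x_i$ is satisfiable, i.e. whether the current prefix extends to a satisfying assignment with $x_i = 1$, and we set $\theta(x_i)=1$ exactly when the answer is yes. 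Since $\lex$ prefers $1$'s in the more significant (earlier) positions, this greedy choice produces precisely the $\lex$-largest satisfying assignment, and we accept iff $\theta(x_n)=1$. This is a polynomial-time machine making $n+1$ adaptive $\NP$ queries, so $\OLMS \in \Ptime^{\NP}$.

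For \emph{hardness}, fix $L \in \Delta_2^p$ decided by a polynomial-time oracle machine $M$ making $m=\mathrm{poly}(|x|)$ adaptive queries $q_1,\dots,q_m$ to a $\mathrm{SAT}$ oracle, where each $q_i$ is computed from the input $x$ and the previous answers $a_1,\dots,a_{i-1}$. I would build, in logarithmic space, a formula $\Phi_x$ over blocks of variables ordered from most to least significant as $a_1, c_1, w_1, a_2, c_2, w_2, \dots, a_m, c_m, w_m, \mathrm{acc}$. Here $a_i$ is the answer bit for query $i$; the auxiliary variables $c_i$ Cook--Levin-encode the deterministic computation of the query formula $q_i$ from $x$ and $a_1,\dots,a_{i-1}$ (so the $c_i$ are functionally determined by the answer bits); $w_i$ are candidate witness variables for $q_i$; and $\mathrm{acc}$ is a single bit, placed last, constrained to equal $M$'s accept/reject decision computed from $a_1,\dots,a_m$. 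The only nontrivial clauses are, for each $i$, the implication ``$a_i$ implies that $w_i$ satisfies the formula encoded by $c_i$'', together with the functional definitions of the $c_i$ and of $\mathrm{acc}$. Note that $\Phi_x$ is always satisfiable (set every $a_i=0$), so the satisfiability requirement of $\OLMS$ is automatic and only the value $\theta(\mathrm{acc})$ matters.

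Correctness rests on the lemma that the $\lex$-largest satisfying assignment $\theta$ satisfies $\theta(a_i)=a_i^*$ for all $i$, where $a_i^*$ is the true oracle answer to $q_i$. I would prove this by induction on $i$: assuming the more significant bits $a_1,\dots,a_{i-1}$ already carry their correct values in $\theta$, the query encoded in $c_i$ is the genuine $q_i$, and a satisfying extension with $a_i=1$ exists iff $q_i$ is satisfiable (the variables $w_i$ supply a certificate, while the implication clause blocks $a_i=1$ whenever $q_i$ is unsatisfiable). Since $a_i$ is more significant than $w_i$ and than all later blocks, $\lex$-maximization forces $\theta(a_i)=a_i^*$. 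Hence $\theta(\mathrm{acc})=1$ iff $M$ accepts $x$ iff $x\in L$, and as $\mathrm{acc}$ is the last variable this is exactly the $\OLMS$ question, giving an $\leqlogm$-reduction.

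The main obstacle I anticipate is making this induction airtight in the presence of \emph{adaptive} queries: one must rule out that the intermediate witness and auxiliary bits, which are lexicographically more significant than the later answer bits and are maximized before them, ever get assigned in a way that corrupts a subsequent query computation. This is guaranteed precisely because each $q_i$ depends only on the answer bits $a_1,\dots,a_{i-1}$ and not on the $w_j$ or $c_j$, so once those answer bits are pinned to their correct values the query is determined regardless of how the witnesses are chosen. A secondary point needing care is verifying that the whole Cook--Levin encoding, including the threading of answer bits into each successive query, is genuinely logspace-computable, which is standard but must be stated to justify $\leqlogm$.
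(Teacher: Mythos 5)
The paper does not prove this proposition at all; it imports it directly from Krentel (1988), so the only thing to compare against is the standard argument from the cited source. Your reconstruction is correct and is essentially that argument: membership via the greedy bit-by-bit $\NP$-oracle computation of the $\lex$-maximum satisfying assignment, and hardness via the standard block encoding (answer bits, Cook--Levin traces of the query computations, witness variables, accept bit last) of an adaptive $\Ptime^{\NP}$ computation, where lexicographic maximization together with the significance ordering of the blocks forces each answer bit to the true oracle answer --- including the key observation that later queries depend only on earlier answer bits, never on witness or trace variables.
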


The complementary problem to \OLMS is denoted here as \OLMSco, where $\theta(x_n)\neq1$.
As \( \Delta_2^p \) is a deterministic complexity class, \( \OLMSco \) of \( \OLMS \) is also $\Delta_2^p$-complete.
Now we will present the standard notion of a circuit family. 
For a comprehensive overview of the topic of circuit complexity, refer to the textbook by Vollmer~(\citeyear{v99}).

\begin{definition}
    Let $B=\{\lor,\land,\lnot,0,1\}$. 
    A \emph{circuit family} over $B$ is a sequence $\mathcal C=(C_0,C_1,\dots)$, where for every $n\in\N$, $C_n$ is a circuit over $B$ with $n$ inputs. 
    Let $f^n$ be the function computed by $C_n$. 
    Then we say that $\mathcal C$ computes the function $f\colon \{0,1\}^*\to \{0,1\}^*$, defined for every $w\in\{0,1\}^*$ by $f(w)\coloneqq f^{|w|}(w)$. 
    We write $f=(f^n)_{n\in\N}$ and $\mathcal C=(C_n)_{n\in\N}$.
\end{definition}

As usual $\AC0$ denotes the class of all polynomial-sized circuit families of constant depth using gates with unbounded fan-in. 
$\NC{1}$ denotes the class of all polynomial-sized circuit families of logarithmic depth using gates with fan-in two.

\begin{example}\label{ex:lex-circ}
    Given two binary strings $\bar a=a_{n-1}\cdots a_{0},\bar b=b_{n-1}\cdots b_{0}$ (so the least significant bit is the rightmost bit), the lexicographic order $\lex$ can be defined via an $\AC0$-circuit, where the overline $\overline{\,\cdot\,}$ means an outermost negation:
    \[
    \overline{\bigvee_{i=0}^{n-1} \big(a_i > b_i  \land \bigwedge_{j=i+1}^{n-1} (a_j = b_j)\big)},
    \]
    and conjunctions (disjunctions) over the empty set are defined as true (false). 
    Here $a>b$ is then merely encoded via $a\land\lnot b$, and $a=b$ with $(a\land b)\lor(\lnot a\land\lnot b)$.
\end{example} 

\section{Background on Preferential Logics}
\label{sec:background_KLM_logic}
In this section, we present background on preferential logics in the style of Kraus, Lehmann and Magidor (\citeyear{KS_KrausLehmannMagidor1990}) (KLM).
In preferential logic, an entailment  \( \formulaOne \nmableit \formulaTwo \) holds, when minimal models of \( \formulaOne \) are models of \( \formulaTwo \). This is formalized via preferential models, which we introduce in the following.

	For a  strict partial order \( {\pmP} \subseteq \mathcal{S} \times \mathcal{S} \) on a set \( \mathcal{S} \) and a subset \( S \subseteq \mathcal{S} \), an element \( s \in S \) is called \emph{minimal in \( S \) with respect to \( \pmP \)} if for each \( s' \in S \) holds  \( s' \not \pmP s \). 
	Then, \( \minOf{S}{{\pmP}} \)  is the set of all \( s \in S \)  that are minimal in \( S \) with respect to \( \pmP \).
\begin{definition}[KLM, \citeyear{KS_KrausLehmannMagidor1990}]
    Let \( \ssLogic = \tuple{\ssFormulas,\ssInt,\models,\models^{\ssLogic}}  \) be a logic.
    A \emph{preferential model} for \(  \ssLogic  \) is a triple \( \pmW=\tuple{\pmS,\pmL,\pmP} \) where \( \pmS \) is a set, \( \ell\colon \pmS \to \Omega \), \( \pmP \) is a strict partial order on \( \pmS \), and the following condition is satisfied:
    \begin{itemize}
        \item[]\emph{[Smoothness]} \( \statesOf{\formulaOne}=\{ s \in \pmS \mid \ell(s) \models \formulaOne \} \) is smooth with respect to \( \pmP \) for every formula \( \formulaOne \in \ssFormulas \), i.e, for each \( s \in \statesOf{\formulaOne} \) holds
        \( s\in \minOf{\statesOf{\formulaOne}}{\pmS} \) or there exists an \( s' \in \statesOf{\formulaOne} \) with \( s'\in \minOf{\statesOf{\formulaOne}}{\pmS} \) and \( s' \pmP s \).
    \end{itemize}
We say a preferential model is finite if $\pmS$ is finite.
\end{definition}
Smoothness guarantees the existence of minimal elements.
For convenience, we make use of the following abbreviations: \( \minOf{\modelsOf{A}}{\pmP} = \{ \pmL(s) \mid s\in \minOf{S(A)}{\pmP}  \} \) and we write \( s \models \formulaOne \) for \( \pmL(s) \models \formulaOne \).
\begin{definition}[KLM, \citeyear{KS_KrausLehmannMagidor1990}]
    Let \( \ssLogic = \tuple{\ssFormulas,\ssInt,\models,\models^{\ssLogic}}  \) be a standard logic.
    The entailment relation \( {\nmableitW} \subseteq \ssFormulas \times \ssFormulas \) for a preferential model \( \pmW \) for \(  \ssLogic \) is given by
    \begin{equation*}
    	\formulaOne \nmableitW \formulaTwo \text{ if }  \minOf{\modelsOf{\formulaOne}}{\pmP} \subseteq \modelsOf{\formulaTwo}\ .
    \end{equation*}
    An entailment relation \( {\nmableit} \subseteq \ssFormulas \times \ssFormulas \) is called \emph{preferential} if there is a preferential model \( \pmW \) for \( \mathscr{L} \) such that  \( {\nmableit} =   {\nmableitW}  \). 
\end{definition}%

Every preferential model \( \pmW \) for a standard logic \( \ssLogic = \tuple{\ssFormulas,\ssInt,\models,\models^{\ssLogic}} \) give rise to a logic \( \ssLogic_{\pmW} = \tuple{\ssFormulas,\ssInt,\models,{\nmableitW}} \). We say that \( \ssLogic_{\pmW} \) is a preferential logic for \( \ssLogic \). 
\emph{Preferential propositional dependence logic} (\PDLPref) refers to the preferential logics for \( \PDL \); and analogously for \emph{preferential classical propositional logic} (\CPLPref) and  \emph{preferential propositional logic with team semantics} (\TPLPref).

\section{On the Relationship of {\PDLPref} to System~P}
\label{sec:axiomaticsPDL}
In this section, we consider the satisfaction of properties by preferential propositional dependence logic.
We make use of the following postulates for non-monotonic entailment \( \nmableit \):
\begin{center}
    \vspace{-1em}
    \begin{minipage}[b]{0.51\linewidth}
        \begin{align}
            &\frac{}{\formulaOne\nmableit\formulaOne} \tag{Ref}\label{pstl:Ref}\\[0.25em]
            &\frac{\formulaOne\equiv\formulaTwo\hspace{0.5cm}\formulaOne\nmableit\formulaThree}{\formulaTwo\nmableit\formulaThree} \tag{LLE}\label{pstl:LLE}\\[0.25em]
            &\frac{\formulaOne\land\formulaTwo\nmableit\formulaThree\hspace{0.5cm}\formulaOne\nmableit\formulaTwo}{\formulaOne\nmableit\formulaThree} \tag{Cut}\label{pstl:Cut}
        \end{align}
    \end{minipage}\begin{minipage}[b]{0.45\linewidth}
        \begin{align}
            &\frac{\formulaOne\models\formulaTwo\hspace{0.5cm}\formulaThree\nmableit\formulaOne}{\formulaThree\nmableit\formulaTwo} \tag{RW}\label{pstl:RW}\\[0.25em]
            &\frac{\formulaOne\nmableit\formulaTwo\hspace{0.5cm}\formulaOne\nmableit\formulaThree}{\formulaOne\land\formulaTwo\nmableit\formulaThree} \tag{CM}\label{pstl:CM}\\[0.25em]
            &\frac{\formulaOne\nmableit\formulaThree\hspace{0.5cm}\formulaTwo\nmableit\formulaThree}{\formulaOne\lor\formulaTwo\nmableit\formulaThree} \tag{Or}\label{pstl:Or}
        \end{align}
    \end{minipage}
\end{center}
Note that \( \models \) is the entailment relation of the underlying monotonic logic, and \( \equiv \) the respective semantic equivalence.
The rules \eqref{pstl:Ref}, \eqref{pstl:RW}, \eqref{pstl:LLE},  \eqref{pstl:CM} and  \eqref{pstl:Cut} forming \emph{System~C}.
Notably, the rule \eqref{pstl:CM} goes back to the foundational paper on non-monotonic reasoning system by Gabbay (\citeyear{KS_Gabbay1984}) and is a basic wakening of monotonicity. 
\emph{System~P}  consists of all rules of System~C and the rule \eqref{pstl:Or}. The rule of \eqref{pstl:Or} is motivated by reasoning by case \cite{KS_Pearl1989}.

A seminal result for {\CPL} by KLM~(\citeyear{KS_KrausLehmannMagidor1990}) is the direct correspondence between preferential entailment and  System~P.
\begin{proposition}[KLM, \citeyear{KS_KrausLehmannMagidor1990}]\label{prop:KLM_preferential_representation}
    A entailment relation \( {\nmableit} \) for \CPL satisfies System~P if and only if \( {\nmableit} \) is preferential.
\end{proposition}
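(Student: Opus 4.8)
The plan is to prove the two directions separately. The soundness direction---if $\nmableit$ is preferential then it satisfies System~P---is a direct semantic verification, while the representation direction---if $\nmableit$ satisfies System~P then $\nmableit = \nmableitW$ for some preferential $\pmW$---requires building a canonical model and is where the real work lies.

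For soundness, fix a preferential model $\pmW = \tuple{\pmS,\pmL,\pmP}$ and recall that $\formulaOne \nmableitW \formulaTwo$ iff $\minOf{\modelsOf{\formulaOne}}{\pmP} \subseteq \modelsOf{\formulaTwo}$. Then \eqref{pstl:Ref}, \eqref{pstl:LLE} and \eqref{pstl:RW} are immediate, since $\statesOf{\cdot}$ depends only on the classical models and $\modelsOf{\cdot}$ is monotone under $\models^c$. The two substantial rules reduce to set-theoretic facts about minimal states. For \eqref{pstl:Or} I would use $\statesOf{\formulaOne \lor \formulaTwo} = \statesOf{\formulaOne} \cup \statesOf{\formulaTwo}$ to show $\minOf{\statesOf{\formulaOne\lor\formulaTwo}}{\pmP} \subseteq \minOf{\statesOf{\formulaOne}}{\pmP} \cup \minOf{\statesOf{\formulaTwo}}{\pmP}$: a minimal state of the union satisfies one disjunct and, lying in the smaller set, is a fortiori minimal there. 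For \eqref{pstl:CM} (and likewise \eqref{pstl:Cut}) I would invoke smoothness to prove $\minOf{\statesOf{\formulaOne\land\formulaTwo}}{\pmP} = \minOf{\statesOf{\formulaOne}}{\pmP}$ whenever $\formulaOne \nmableit \formulaTwo$; the inclusion from left to right uses that minimal $\formulaOne$-states already satisfy $\formulaTwo$, and smoothness forbids a strictly smaller $(\formulaOne\land\formulaTwo)$-state below a minimal $\formulaOne$-state.

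For the representation direction, assume $\nmableit$ satisfies System~P. Because $\CPL$ here is over a finite variable set $N$, there are finitely many formulas up to $\equiv^c$, and for each $\formulaOne$ I set $n(\formulaOne) = \bigcap_{\formulaOne\nmableit\formulaThree}\modelsOf{\formulaThree} \subseteq \allModels{N}$, the set of \emph{normal} $\formulaOne$-worlds. I would take as states the pairs $\tuple{v,\formulaOne}$ with $\formulaOne \notnmableit \bot$ and $v \in n(\formulaOne)$, put $\pmL(\tuple{v,\formulaOne}) = v$, and (quotienting formulas by $\equiv^c$ via \eqref{pstl:LLE}, so that $\pmS$ is finite and smoothness is automatic) define $\pmP$ so that, among all states whose world satisfies $\formulaOne$, the minimal ones are exactly the pairs with second component $\formulaOne$ and world in $n(\formulaOne)$. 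The design goal is the identity $\minOf{\modelsOf{\formulaOne}}{\pmP} = n(\formulaOne)$ for every $\formulaOne$, which yields $\formulaOne \nmableitW \formulaTwo$ iff $n(\formulaOne) \subseteq \modelsOf{\formulaTwo}$. From this, $\nmableit \subseteq \nmableitW$ is trivial ($\formulaOne \nmableit \formulaTwo$ puts $\formulaTwo$ among the conjuncts defining $n(\formulaOne)$), and $\nmableitW \subseteq \nmableit$ follows from finiteness together with the derived And-rule and \eqref{pstl:RW}: if $n(\formulaOne) \subseteq \modelsOf{\formulaTwo}$ then finitely many consequences $\formulaThree_1,\dots,\formulaThree_k$ of $\formulaOne$ already entail $\formulaTwo$, and And plus \eqref{pstl:RW} give $\formulaOne \nmableit \formulaTwo$.

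The main obstacle is the definition of $\pmP$ and the verification that it is a genuine strict partial order realising $\minOf{\modelsOf{\formulaOne}}{\pmP} = n(\formulaOne)$ \emph{simultaneously} for all $\formulaOne$. Achieving $\minOf{\modelsOf{\formulaOne}}{\pmP} \subseteq n(\formulaOne)$ forces the order to rank every normal world of a formula strictly below any world violating one of that formula's consequences, and the delicate point is that these preferences, pooled over all formulas, must remain acyclic. This is precisely where \eqref{pstl:Or}---the single rule separating System~P from System~C---is indispensable: it delivers the disjunction property $\minOf{\modelsOf{\formulaOne\lor\formulaTwo}}{\pmP} \subseteq \minOf{\modelsOf{\formulaOne}}{\pmP} \cup \minOf{\modelsOf{\formulaTwo}}{\pmP}$ at the level of the relation, which is what guarantees that the induced constraints are mutually consistent and extend to a strict partial order. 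I expect the bulk of the argument to be this consistency and antisymmetry check, with smoothness and the two entailment inclusions then following routinely.
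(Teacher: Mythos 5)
First, a point of reference: the paper does not prove this proposition at all---it is imported as a known result of Kraus, Lehmann and Magidor (1990), so your attempt can only be compared against their original argument. Your soundness half (preferential $\Rightarrow$ System~P) is correct and matches the standard argument: \eqref{pstl:Ref}, \eqref{pstl:LLE}, \eqref{pstl:RW} are immediate; \eqref{pstl:Or} follows from $\statesOf{\formulaOne\lor\formulaTwo}=\statesOf{\formulaOne}\cup\statesOf{\formulaTwo}$, which is exactly the classical-semantics fact that fails for teams (cf.\ Example~\ref{ex:violateOR}); and \eqref{pstl:Cut}/\eqref{pstl:CM} follow from smoothness via the identity $\minOf{\statesOf{\formulaOne\land\formulaTwo}}{\pmP}=\minOf{\statesOf{\formulaOne}}{\pmP}$ whenever $\formulaOne\nmableit\formulaTwo$. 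Your reduction of $\nmableitW\subseteq\nmableit$ to finiteness, the derived And rule and \eqref{pstl:RW} is also sound, given the paper works over a finite variable set $N$.

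The genuine gap is in the representation half, and it is not a peripheral one. You introduce the normal worlds $n(\formulaOne)$, the states $\tuple{v,\formulaOne}$, and the target identity $\minOf{\modelsOf{\formulaOne}}{\pmP}=n(\formulaOne)$---this is precisely the KLM scaffolding---but you never define the order $\pmP$: you only describe the property you want it to have and then defer the ``consistency and antisymmetry check'' as expected future work. That check is the entire content of KLM's proof, not a routine verification that \eqref{pstl:Or} hands you for free. Concretely, KLM define $\tuple{v,\formulaOne}\pmP\tuple{w,\formulaTwo}$ by an explicit condition phrased in terms of normality for the disjunction $\formulaOne\lor\formulaTwo$ (roughly: $v$ is normal for $\formulaOne\lor\formulaTwo$ while $w$ is not), and they then need a chain of \eqref{pstl:Or}-based lemmas---e.g., that a world normal for $\formulaOne\lor\formulaTwo$ which satisfies $\formulaOne$ is normal for $\formulaOne$---to establish (i) irreflexivity and transitivity of $\pmP$, (ii) smoothness, (iii) that the labels of the $\pmP$-minimal states of $\statesOf{\formulaOne}$ are exactly $n(\formulaOne)$, and (iv) that $\statesOf{\formulaOne}$ is empty when $\formulaOne\nmableit\bot$. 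Without these, the model $\pmW$ is not even well defined, so neither inclusion between $\nmableit$ and $\nmableitW$ can be carried out. A further, smaller inaccuracy: your design goal that the minimal states of $\statesOf{\formulaOne}$ be \emph{exactly} the pairs with second component $\formulaOne$ is stronger than what the canonical model satisfies---states $\tuple{v,\formulaTwo}$ with $\formulaTwo\not\equiv\formulaOne$ can be minimal in $\statesOf{\formulaOne}$---and only the label-level identity $\minOf{\modelsOf{\formulaOne}}{\pmP}=n(\formulaOne)$ is needed or true.
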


Systems~C and P are of utmost importance for non-monotonic reasoning. System~C is considered to be the basic properties of good non-monotonic reasoning, and
System~P is considered the \enquote{conservative core} of non-monotonic reasoning~(KLM, \citeyear{KS_KrausLehmannMagidor1990}).

\MYParagraph{Satisfaction  of \PDLPref of System~C.}
Our first observation is that preferential entailment for \PDL satisfies System~C. This is a novel result as the proof of System~C satisfaction for \CPLPref given by KLM does not carry over to preferential propositional dependence logic.
\begin{proposition}\label{prop:systemCteams}
    \PDLPref satisfies System~C.
\end{proposition}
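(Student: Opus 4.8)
The plan is to verify each of the five System~C rules---\eqref{pstl:Ref}, \eqref{pstl:LLE}, \eqref{pstl:RW}, \eqref{pstl:CM}, \eqref{pstl:Cut}---directly from the semantic definition \( \formulaOne \nmableitW \formulaTwo \text{ iff }  \minOf{\modelsOf{\formulaOne}}{\pmP} \subseteq \modelsOf{\formulaTwo} \), where an arbitrary preferential model \( \pmW=\tuple{\pmS,\pmL,\pmP} \) for \PDL is fixed. The essential difference from the classical KLM argument is that interpretations here are \emph{teams}, and \PDL is not flat---but, crucially, \PDL does enjoy the \emph{downwards closure} and \emph{empty team} properties (Proposition~\ref{prop:pdl_pincl_properties}), and I expect the proof to lean on \emph{smoothness} together with the set-theoretic meaning of \( \minOf{\cdot}{\pmP} \) rather than on any flatness assumption.

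First I would dispatch the three easy rules. For \eqref{pstl:Ref}, note \( \minOf{\modelsOf{\formulaOne}}{\pmP} \subseteq \statesOf{\formulaOne} \) always holds by construction, so \( \formulaOne \nmableitW \formulaOne \) is immediate. For \eqref{pstl:LLE}, since \( \formulaOne \equiv \formulaTwo \) means \( \modelsOf{\formulaOne} = \modelsOf{\formulaTwo} \) in the underlying \PDL semantics, the state sets \( \statesOf{\formulaOne} \) and \( \statesOf{\formulaTwo} \) coincide, hence so do their \( \pmP \)-minimal elements, and the conclusion follows. For \eqref{pstl:RW}, if \( \formulaThree \nmableitW \formulaOne \) then \( \minOf{\modelsOf{\formulaThree}}{\pmP} \subseteq \modelsOf{\formulaOne} \subseteq \modelsOf{\formulaTwo} \), the last inclusion because \( \formulaOne \models \formulaTwo \); so \( \formulaThree \nmableitW \formulaTwo \).

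The two genuine steps are \eqref{pstl:Cut} and \eqref{pstl:CM}, which both hinge on comparing \( \minOf{\modelsOf{\formulaOne}}{\pmP} \) with \( \minOf{\modelsOf{\formulaOne \land \formulaTwo}}{\pmP} \). The key lemma I would isolate is: if \( \formulaOne \nmableitW \formulaTwo \), i.e.\ every \( \pmP \)-minimal state of \( \formulaOne \) already satisfies \( \formulaTwo \), then \( \minOf{\statesOf{\formulaOne \land \formulaTwo}}{\pmP} = \minOf{\statesOf{\formulaOne}}{\pmP} \). For \eqref{pstl:CM} I would argue the inclusion \( \minOf{\statesOf{\formulaOne \land \formulaTwo}}{\pmP} \subseteq \minOf{\statesOf{\formulaOne}}{\pmP} \): a state minimal in the smaller set \( \statesOf{\formulaOne \land \formulaTwo} \subseteq \statesOf{\formulaOne} \) that also lies in \( \minOf{\statesOf{\formulaOne}}{\pmP} \) requires ruling out a \( \prec \)-smaller \( \formulaOne \)-state; here smoothness of \( \statesOf{\formulaOne} \) supplies a minimal \( \formulaOne \)-state below any given one, which satisfies \( \formulaTwo \) by hypothesis and hence lies in \( \statesOf{\formulaOne\land\formulaTwo} \), yielding the needed contradiction. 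The conclusion \( \formulaOne \land \formulaTwo \nmableitW \formulaThree \) then follows from \( \formulaOne \nmableitW \formulaThree \). For \eqref{pstl:Cut} I would establish the reverse inclusion and combine it with the premise \( \formulaOne \land \formulaTwo \nmableitW \formulaThree \). The one subtlety specific to teams is that \( X \models \formulaOne \land \formulaTwo \) holds iff \( X\models\formulaOne \) and \( X\models\formulaTwo \)---conjunction in team semantics is genuine intersection of the model classes, \( \modelsOf{\formulaOne\land\formulaTwo} = \modelsOf{\formulaOne}\cap\modelsOf{\formulaTwo} \)---which is exactly what these inclusions require, and this works for \PDL-formulas regardless of flatness. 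The main obstacle is making the smoothness-plus-minimality interchange fully rigorous (ensuring a minimal \( \formulaOne \)-state below a given \( \formulaOne\land\formulaTwo \)-state is itself an \( \formulaOne\land\formulaTwo \)-state), but since smoothness is assumed per-formula and conjunction behaves intersectively on teams, the classical KLM reasoning should transfer without appeal to flatness.
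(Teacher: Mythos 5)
Your proposal is correct and takes essentially the same approach as the paper: the rules \eqref{pstl:Ref}, \eqref{pstl:LLE}, \eqref{pstl:RW} are dispatched identically, and \eqref{pstl:Cut} and \eqref{pstl:CM} rest on the same comparison of \( \minOf{\statesOf{\formulaOne\land\formulaTwo}}{\pmP} \) with \( \minOf{\statesOf{\formulaOne}}{\pmP} \), using smoothness exactly where the paper does (the \eqref{pstl:CM} direction) and the fact that team-semantic conjunction is intersective for the \eqref{pstl:Cut} direction. The only difference is presentational: you isolate the two inclusions as a stand-alone lemma, whereas the paper proves them inline within the respective rules.
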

\begin{proof}
   We show that \( {\nmableitW} \) satisfies all rules of System~C:\\
       \noindent[\emph{\ref{pstl:Ref}.}]
       Considering the definition of \( \nmableitW \) yields that \( \formulaOne \nmableitW \formulaOne \) if for all minimal \( s\in\statesOf{\formulaOne} \) holds \( \ell(s)\models\formulaOne \). By the definition of \( \statesOf{\formulaOne} \), we have \( s\in\statesOf{\formulaOne} \) if \( \ell(s)\models\formulaOne \). Consequently, we have \( \formulaOne\nmableitW \formulaOne \).

       \noindent[\emph{\ref{pstl:LLE}.}] 
       From \( \formulaOne\equiv\formulaTwo \), we obtain that \( \statesOf{\formulaOne}=\statesOf{\formulaTwo} \) holds.
       By using this last observation and the definition of \( \nmableitW \), we obtain \( \formulaTwo\nmableitW\formulaThree \) from \( \formulaOne\nmableitW\formulaThree \).

       \noindent[\emph{\ref{pstl:RW}.}] 
       Clearly, by definition of \( \formulaOne\models\formulaTwo \) we have \( \modelsOf{\formulaOne}\subseteq\modelsOf{\formulaTwo} \).
       From the definition of \( \formulaThree\nmableitW\formulaOne \), we obtain that \( \ell(s)\models\formulaOne \) holds for each minimal \( s\in\statesOf{\formulaThree} \).
       The condition \( \ell(s)\models\formulaOne \) in the last statement is equivalent to stating \( \ell(s)\in\modelsOf{\formulaOne} \). Because of \( \modelsOf{\formulaOne}\subseteq\modelsOf{\formulaTwo} \), we also have \( \ell(s)\in\modelsOf{\formulaTwo} \); and hence, \( \ell(s)\models\formulaTwo \)  for each minimal  \( s\in\statesOf{\formulaThree} \).
       This shows that \( \formulaThree\nmableitW\formulaTwo \) holds.

       \noindent[\emph{\ref{pstl:Cut}.}] 		
       By unfolding the definition of \( \nmableitW \), we obtain 
       \(  {\minOf{\statesOf{\formulaOne\land\formulaTwo}}{\prec}}  \subseteq \statesOf{\formulaThree} \) 
       from \( \formulaOne\land\formulaTwo \nmableitW \formulaThree \).
       Analogously, \( \formulaOne \nmableitW \formulaTwo \) unfolds to 
       \( \minOf{\statesOf{\formulaOne}}{\prec} \subseteq \statesOf{\formulaTwo} \).
       Moreover, employing basic set theory yields that \( \statesOf{\formulaOne\land\formulaTwo}=\statesOf{\formulaOne}\cap\statesOf{\formulaTwo}\subseteq\statesOf{\formulaOne} \) holds.
       From \( \statesOf{\formulaOne\land\formulaTwo}\subseteq\statesOf{\formulaOne} \) and \( \minOf{\statesOf{\formulaOne}}{\prec} \subseteq \statesOf{\formulaTwo} \), we obtain \( \minOf{\statesOf{\formulaOne}}{\prec} \subseteq \statesOf{\formulaOne\land\formulaTwo} \).
       Consequently, we also have that \( \minOf{\statesOf{\formulaOne}}{\prec}=\minOf{\statesOf{\formulaOne\land\formulaTwo}}{\prec} \) holds.
       Using the last observation and \( \minOf{\statesOf{\formulaOne\land\formulaTwo}}{\prec}  \subseteq \statesOf{\formulaThree} \), we obtain \( \minOf{\statesOf{\formulaOne}}{\prec}  \subseteq \statesOf{\formulaThree} \). Hence also \( \formulaOne \nmableitW \formulaThree \) holds.

       \noindent[\emph{\ref{pstl:CM}.}] 		
       By unfolding the definition of \( \nmableitW \), we obtain \(  \minOf{\statesOf{\formulaOne}}{\prec}  \subseteq \statesOf{\formulaTwo} \)  and \(  \minOf{\statesOf{\formulaOne}}{\prec}  \subseteq \statesOf{\formulaThree} \).
       We have to show that \( \minOf{\statesOf{\formulaOne\land\formulaTwo}}{\prec} \subseteq S(\formulaThree) \) holds.
       Let \( s \) be element of \( \minOf{\statesOf{\formulaOne\land\formulaTwo}}{\prec} \).
       Clearly, we have that \( s \in S(\formulaOne)  \) holds. We show by contradiction that \( s \) is minimal in \( S(\formulaOne) \). 
       Assume that \( s \) is not minimal in \( S(\formulaOne) \).
       From the smoothness condition, we obtain that there is an \( s'\in S(\formulaOne) \) such that \( s' \prec s \) and \( s' \) is minimal in \( S(\formulaOne) \) with respect to \( \prec \). 
       Because \( s' \) is minimal and because we have \( \minOf{\statesOf{\formulaOne}}{\prec}  \subseteq \statesOf{\formulaTwo} \), we also have that \(  s'\in S(\formulaTwo) \) holds and hence that \( s'\in S(\formulaOne\land\formulaTwo) \) holds. The latter contradicts the minimality of \( s \) in \( S(\formulaOne\land\formulaTwo) \).
       Consequently, we have that \( s \in \minOf{\statesOf{\formulaOne}}{\prec} \) holds. 
       Because we have \(  \minOf{\statesOf{\formulaOne}}{\prec}  \subseteq \statesOf{\formulaThree} \), we obtain \( \formulaOne \land \formulaTwo \nmableitW \formulaThree \).\qedhere
\end{proof}

\pagebreak[3]
\MYParagraph{Relationship of \PDLPref to System~P.}
The following example shows that Proposition~\ref{prop:KLM_preferential_representation} does not carry over to \PDL, i.e., there are preferential entailments for \PDL that witness a violation of \eqref{pstl:Or}. 
\begin{example}\label{ex:violateOR}
    Assume that \( N=\{p,q\}\subseteq\Prop \) holds.
    The following valuations \( v_1,v_2 ,v_3 \) will be important: %
    \begin{align*}
        v_1(p) &=v_1(q)=v_2(q)=1 & v_2(p)&= v_3(p)=v_3(q)=0
    \end{align*}
    We consider the teams \( X_{pq}=\{ v_1\} \), \( X_{\overline{p}{q}}=\{ v_2\} \), and  \( X_{p\leftrightarrow q}=\{ v_1,v_3\} \).
    Let \( \pmWpq = \tuple{\pmSpq,\pmLpq,\pmPpq} \) be the preferential model such that
    \begin{align*}
        \pmSpq & = \{ s_{\Int} \mid \Int \text{ is a  non-empty team} \} & \pmLpq(s_X) & = X         
    \end{align*}
    holds, and such that \( \pmPpq \) is the strict partial order given by (for the sake of readability we identify \( s_X \) with \( X \)).
    \begin{align*}
        X_{p\leftrightarrow q} & \pmPpq  X_{pq}             & X_{pq}              & \pmPpq X \\
        X_{p\leftrightarrow q} & \pmPpq X_{\overline{p}{q}} & X_{\overline{p}{q}} & \pmPpq X
    \end{align*}
    where \( X \) stands for every team different from \( X_{\overline{p}{q}} \) and  \(  X_{p\leftrightarrow q} \).
    We     obtain the following preferential entailments:
    \begin{align*}
        p & \nmableitWparam{\pmWpq} q &  \neg p & \nmableitWparam{\pmWpq} q & p\lor\neg p &\notnmableitWparam{\!\!\pmWpq} q
    \end{align*}
    This shows that \( \nmableitWparam{\pmWpq} \) violates \eqref{pstl:Or}, and thus, System~P.
\end{example}
\begin{proposition}
\PDLPref violates System~P.
\end{proposition}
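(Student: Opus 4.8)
The proof plan is essentially to invoke the counterexample already constructed in Example~\ref{ex:violateOR} and observe that it directly witnesses the failure of System~P. Since System~P is defined as System~C augmented with the rule \eqref{pstl:Or}, to show that \PDLPref violates System~P it suffices to exhibit a single preferential model \( \pmW \) for \( \PDL \) whose induced entailment relation \( \nmableitW \) fails to satisfy \eqref{pstl:Or}. This is exactly what \( \pmWpq \) from Example~\ref{ex:violateOR} provides.

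Concretely, I would proceed as follows. First, I would recall the three preferential entailments established in Example~\ref{ex:violateOR}, namely \( p \nmableitWparam{\pmWpq} q \), \( \neg p \nmableitWparam{\pmWpq} q \), and \( p \lor \neg p \notnmableitWparam{\!\!\pmWpq} q \). Instantiating the rule \eqref{pstl:Or} with \( \formulaOne = p \), \( \formulaTwo = \neg p \), and \( \formulaThree = q \), its premises are precisely \( p \nmableitWparam{\pmWpq} q \) and \( \neg p \nmableitWparam{\pmWpq} q \), both of which hold, while its conclusion is \( p \lor \neg p \nmableitWparam{\pmWpq} q \), which fails. Hence \( \nmableitWparam{\pmWpq} \) does not satisfy \eqref{pstl:Or}, so it does not satisfy System~P.

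Since \( \nmableitWparam{\pmWpq} \) is by definition a preferential entailment relation for \( \PDL \), and \PDLPref refers to the class of preferential logics for \( \PDL \), the existence of this one witness establishes that \PDLPref does not satisfy System~P in general. This completes the argument.

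The only substantive content of this proof lies in Example~\ref{ex:violateOR} itself — in particular in verifying that the stated entailments hold for the given order \( \pmPpq \), which hinges on the non-flatness of \( \PDL \): the team \( X_{p \leftrightarrow q} = \{v_1, v_3\} \) satisfies \( p \lor \neg p \) by splitting into \( \{v_1\} \models p \) and \( \{v_3\} \models \neg p \), yet is minimal in \( \statesOf{p \lor \neg p} \) while failing to satisfy \( q \). Since the excerpt asks me to prove the Proposition given Example~\ref{ex:violateOR} as established, no obstacle remains; the main conceptual point — that the disjunction-splitting of team semantics is what breaks \eqref{pstl:Or} — is already encapsulated in that example, and the Proposition is an immediate corollary.
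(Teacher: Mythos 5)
Your proposal is correct and takes exactly the paper's route: the paper gives no separate proof of this proposition but relies on Example~\ref{ex:violateOR} as the witness, where it records $p \nmableitWparam{\pmWpq} q$, $\neg p \nmableitWparam{\pmWpq} q$, and $p\lor\neg p \notnmableitWparam{\!\!\pmWpq} q$ and notes that this violates \eqref{pstl:Or} and hence System~P---precisely your instantiation with $\formulaOne = p$, $\formulaTwo = \neg p$, $\formulaThree = q$. Your supporting check that $X_{p\leftrightarrow q}$ is the $\pmPpq$-minimal model of $p\lor\neg p$ (via the split $\{v_1\}\models p$, $\{v_3\}\models\neg p$) yet fails $q$ is also the content the example intends.
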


We will now identify two properties that will fully capture those preferential entailment relations that satisfy all rules of System~P within \PDL. 
As first, we can observe that one obtains System~P, when the underlying preferential model \( \pmW =\tuple{\pmS,\pmL,\pmP} \) satisfies the following property:
\begin{equation}
    \minOf{\modelsOf{A \lor \formulaTwo}}{\pmP} \subseteq \minOf{\modelsOf{A}}{\pmP} \cup \minOf{\modelsOf{\formulaTwo}}{\pmP}	\tag{\( \star \)}\label{eq:StarProperty}
\end{equation}
One checks easily that \eqref{eq:StarProperty} guarantees satisfaction of \eqref{pstl:Or}.
\begin{proposition}\label{prop:SystemPifStar}
    Let \( \pmW \) be a preferential model for \PDL.
    If \eqref{eq:StarProperty} is satisfied for all formulas \( \formulaOne,\formulaTwo \), then \( \nmableitW \) satisfies \eqref{pstl:Or}.
\end{proposition}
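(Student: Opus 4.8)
The plan is to show that \eqref{eq:StarProperty} reduces the inference rule \eqref{pstl:Or} to a short chain of set inclusions, so that no appeal to smoothness is needed. First I would fix formulas \( \formulaOne,\formulaTwo,\formulaThree \) and assume the two premises of \eqref{pstl:Or}, namely \( \formulaOne \nmableitW \formulaThree \) and \( \formulaTwo \nmableitW \formulaThree \). Unfolding the definition of \( \nmableitW \) turns these into \( \minOf{\modelsOf{\formulaOne}}{\pmP} \subseteq \modelsOf{\formulaThree} \) and \( \minOf{\modelsOf{\formulaTwo}}{\pmP} \subseteq \modelsOf{\formulaThree} \), where I use the convention \( \minOf{\modelsOf{A}}{\pmP} = \{ \pmL(s) \mid s \in \minOf{\statesOf{A}}{\pmP} \} \) from the abbreviations above. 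The goal is to derive \( \formulaOne \lor \formulaTwo \nmableitW \formulaThree \), i.e.\ \( \minOf{\modelsOf{\formulaOne \lor \formulaTwo}}{\pmP} \subseteq \modelsOf{\formulaThree} \).

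The main step is then purely set-theoretic. Taking the union of the two premise inclusions gives \( \minOf{\modelsOf{\formulaOne}}{\pmP} \cup \minOf{\modelsOf{\formulaTwo}}{\pmP} \subseteq \modelsOf{\formulaThree} \). Instantiating property \eqref{eq:StarProperty} with \( A = \formulaOne \) yields \( \minOf{\modelsOf{\formulaOne \lor \formulaTwo}}{\pmP} \subseteq \minOf{\modelsOf{\formulaOne}}{\pmP} \cup \minOf{\modelsOf{\formulaTwo}}{\pmP} \). Composing these two inclusions gives \( \minOf{\modelsOf{\formulaOne \lor \formulaTwo}}{\pmP} \subseteq \modelsOf{\formulaThree} \), which is exactly \( \formulaOne \lor \formulaTwo \nmableitW \formulaThree \), the conclusion of \eqref{pstl:Or}.

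The only point requiring care — and the place I would slow down — is bookkeeping of the two different roles that the bracket notation plays: on the left of \( \nmableitW \) the object \( \minOf{\modelsOf{\cdot}}{\pmP} \) denotes the set of \emph{labels} of \(\pmP\)-minimal states, whereas \( \modelsOf{\formulaThree} \) on the right is the full set of interpretations satisfying \( \formulaThree \). I would note that both sets live in the common universe of interpretations \( \ssInt \) (here teams), so the inclusions genuinely compose and \eqref{eq:StarProperty}, which is also phrased in terms of the label-level minima, can be chained directly with the premises. It is worth remarking that smoothness is not invoked at all in this direction: \eqref{eq:StarProperty} alone carries the argument, which is why the proof is essentially the one-line inclusion chain sketched above.
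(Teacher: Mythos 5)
Your proof is correct and matches the paper's intended argument: the paper omits an explicit proof of Proposition~\ref{prop:SystemPifStar}, remarking only that ``one checks easily'' that \eqref{eq:StarProperty} guarantees \eqref{pstl:Or}, and the easy check is exactly your inclusion chain — union the two premise inclusions, apply \eqref{eq:StarProperty} to \( \formulaOne \lor \formulaTwo \), and compose. Your observations that both sides of the inclusions live in the common universe of teams and that smoothness plays no role in this direction are accurate and consistent with the paper's abbreviation \( \minOf{\modelsOf{A}}{\pmP} = \{ \pmL(s) \mid s\in \minOf{\statesOf{A}}{\pmP} \} \).
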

\noindent Second, we say that \( \pmW \) satisfies the \eqref{eq:TriangleProperty}-property if for all state \( s\in\pmS \) hold:
\begin{equation}
\text{If\,} |\pmL(\!s\!)|\,{>}\,1 \text{,\,then\,} \pmL(\!s') \,{\subsetneq}\, \pmL(\!s\!) \text{\,and\,} s'\,{\pmP}\, s \text{\,for\,some\,} s'{\in\,}\pmS . \tag{\( \triangle \)}\label{eq:TriangleProperty}
\end{equation}
The \eqref{eq:TriangleProperty}-property demands (when understanding states as teams) that for each non-singleton team \( X \) exists a proper subteam \( Y \) of \( X \) that is preferred im \( \pmW \) over \( X \).

In the following theorem, we show that the  \( \triangle \)-property and \eqref{eq:StarProperty}-property guarantee satisfaction of System~P.
\begin{theorem}\label{main}
    Let \( \pmW=\tuple{\pmS,\pmL,\pmP} \)  be a preferential model for \PDL.
    The following statements are equivalent:
    \begin{enumerate}[(i)]
        \item \( \nmableitW \) satisfies System~P.
        \item \( \pmW \) satisfies the \( \triangle \)-property.
        \item The \eqref{eq:StarProperty}-property holds for all \( \formulaOne,\formulaTwo\in\PDL \).
    \end{enumerate}
\end{theorem}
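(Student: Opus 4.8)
The plan is to prove the equivalence by establishing the cycle $(iii)\Rightarrow(i)\Rightarrow(ii)\Rightarrow(iii)$. The implication $(iii)\Rightarrow(i)$ is essentially free: by Proposition~\ref{prop:systemCteams} the relation $\nmableitW$ always satisfies System~C, and by Proposition~\ref{prop:SystemPifStar} the \eqref{eq:StarProperty}-property entails \eqref{pstl:Or}. Since System~P is exactly System~C together with \eqref{pstl:Or}, assuming $(iii)$ immediately gives $(i)$. So the real work lies in the remaining two implications.

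For $(i)\Rightarrow(ii)$ I would argue by contraposition, reconstructing an \eqref{pstl:Or}-violation in the spirit of Example~\ref{ex:violateOR} out of an arbitrary failure of the \eqref{eq:TriangleProperty}-property. Suppose \eqref{eq:TriangleProperty} fails at a state $s$, i.e.\ $X\coloneqq\pmL(s)$ has $|X|>1$ yet no $s'\in\pmS$ satisfies both $\pmL(s')\subsetneq X$ and $s'\pmP s$. The central tool is the characteristic formula $\theta_Y$ of a team $Y$: the $\PL$-formula $\theta_Y=\bigvee_{v\in Y}\chi_v$, where $\chi_v$ is the complete conjunction of literals describing $v$, whose team-models are precisely the subteams of $Y$. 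Because $|X|>1$, two valuations of $X$ disagree on some variable $p$, so $X_1\coloneqq\{v\in X\mid v(p)=1\}$ and $X_2\coloneqq\{v\in X\mid v(p)=0\}$ are both nonempty and partition $X$. I then take $\formulaOne=\theta_{X_1}$, $\formulaTwo=\theta_{X_2}$, and, crucially exploiting the non-flat expressive power of \PDL, the constancy atom $\formulaThree=\dep{p}$.

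The three ingredients of an \eqref{pstl:Or}-violation are then checked as follows. First, every model of $\theta_{X_1}$ is a subteam of $X_1$, hence assigns $p$ constantly and satisfies $\dep{p}$; thus $\modelsOf{\theta_{X_1}}\subseteq\modelsOf{\dep p}$ and a fortiori $\formulaOne\nmableitW\formulaThree$, symmetrically $\formulaTwo\nmableitW\formulaThree$. Second, $\theta_{X_1}\lor\theta_{X_2}\equiv\theta_X$, whose models are all subteams of $X$, and I claim $X\in\minOf{\modelsOf{\theta_X}}{\pmP}$: by smoothness of $\statesOf{\theta_X}$ there is a $\pmP$-minimal $s''\in\statesOf{\theta_X}$ with $s''=s$ or $s''\pmP s$, and the assumed failure of \eqref{eq:TriangleProperty} forbids $\pmL(s'')\subsetneq X$, forcing $\pmL(s'')=X$. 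Third, $X$ contains valuations disagreeing on $p$, so $X\not\models\dep p$. Together these show that $X$ is a $\pmP$-minimal model of $\formulaOne\lor\formulaTwo$ that fails $\formulaThree$, i.e.\ $\formulaOne\lor\formulaTwo\notnmableitWparam{\pmW}\formulaThree$, violating \eqref{pstl:Or} and hence System~P. This construction is the technical heart of the theorem and the step I expect to be the main obstacle, chiefly because minimality lives on states rather than teams, so one must use \eqref{eq:TriangleProperty} to rule out same-label states strictly beneath~$s$.

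Finally, for $(ii)\Rightarrow(iii)$ I would argue directly. Let $Z\in\minOf{\modelsOf{\formulaOne\lor\formulaTwo}}{\pmP}$, witnessed by a $\pmP$-minimal state $s\in\statesOf{\formulaOne\lor\formulaTwo}$ with $\pmL(s)=Z$, and split $Z=Z_1\cup Z_2$ with $Z_1\models\formulaOne$ and $Z_2\models\formulaTwo$. If neither $Z\models\formulaOne$ nor $Z\models\formulaTwo$, then $Z_1,Z_2\subsetneq Z$, and using the empty team property one forces $|Z|>1$; then \eqref{eq:TriangleProperty} provides $s'$ with $\pmL(s')\subsetneq Z$ and $s'\pmP s$, and since \PDL is downwards closed (Proposition~\ref{prop:pdl_pincl_properties}) we get $\pmL(s')\models\formulaOne\lor\formulaTwo$, contradicting the minimality of $s$. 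Hence $Z\models\formulaOne$ (say). As $\statesOf{\formulaOne}\subseteq\statesOf{\formulaOne\lor\formulaTwo}$, minimality of $s$ in the larger set yields minimality in the smaller, so $Z\in\minOf{\modelsOf{\formulaOne}}{\pmP}$. This establishes \eqref{eq:StarProperty} for all $\formulaOne,\formulaTwo\in\PDL$ and closes the cycle.
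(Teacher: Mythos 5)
Your proof is correct, and it reaches the theorem by a slightly leaner route than the paper. The paper first proves a standalone bi-implication (Lemma~\ref{lem:triangle=star}) stating that \eqref{eq:TriangleProperty} and \eqref{eq:StarProperty} describe the same preferential models, and then adds \eqref{eq:StarProperty}$\Rightarrow$\eqref{pstl:Or} (Proposition~\ref{prop:SystemPifStar}, with Proposition~\ref{prop:systemCteams} supplying System~C) and \eqref{pstl:Or}$\Rightarrow$\eqref{eq:TriangleProperty}; you instead close the single cycle (iii)$\Rightarrow$(i)$\Rightarrow$(ii)$\Rightarrow$(iii), so you never need the direction \eqref{eq:StarProperty}$\Rightarrow$\eqref{eq:TriangleProperty} that the paper's lemma establishes (and which is indeed redundant for the equivalence). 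Your (ii)$\Rightarrow$(iii) is in the spirit of the paper's \eqref{eq:TriangleProperty}$\Rightarrow$\eqref{eq:StarProperty} argument, though you work with an arbitrary minimal model $Z$ via splitting and downward closure rather than first claiming that all minimal states carry singleton labels; both versions work. The main divergence is in (i)$\Rightarrow$(ii): where the paper, given a bad team $X$ of size $j=l+k$, builds cardinality-bounding formulas $\Theta_X\wedge(\theta\vee\cdots\vee\theta)$ with $\theta=\bigwedge_{i}\dep{p_i}$ and derives the \eqref{pstl:Or}-violation via \eqref{pstl:Ref} and \eqref{pstl:RW}, you split $X$ along a variable $p$ on which its valuations disagree and take $\formulaOne=\Theta_{X_1}$, $\formulaTwo=\Theta_{X_2}$, $\formulaThree=\dep{p}$ --- arguably simpler, and it isolates the same essential use of dependence atoms (consistent with the theorem's failure for \TPLPref). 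A bonus of your write-up is that you explicitly dispatch the state-versus-team subtlety (a minimal state below $s$ could in principle carry the same label $X$), which the paper's proof glosses over by identifying states with teams. One cosmetic slip: the empty team property is what gives $\statesOf{\formulaOne}\subseteq\statesOf{\formulaOne\lor\formulaTwo}$, not what forces $|Z|>1$ (the latter follows from $Z_1,Z_2\subsetneq Z$ and $Z_1\cup Z_2=Z$ alone); both facts are true, so nothing breaks.
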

\pagebreak[3]
We prepare the proof of Theorem~\ref{main} via the following lemmata.
For the first lemma, assume that $N=\{p_1,\dots,p_n\}$, and let $X$ be an $N$-team. 
We define the following formula:
\[\Theta_X:=\bigvee_{v\in X}(p_1^{v(1)}\wedge\dots\wedge p_n^{v(n)}) \ ,\]
whereby \( p_i^{v(i)} \) is \( p_i \) if \( v(p_i)=1 \) and \( \neg  p_i \) otherwise.
Note that for a team \( X \), the powerset \( \mathcal{P}(X) \) is the set of all subteams of \( X \).
It is straightforward to check the following.
\begin{lemma}\label{lem:definedownsets}
     $\Theta_X$ defines the family of subteams of $X$, i.e., we have that 
     \( \modelsOf{\Theta_X} = \mathcal{P}(X) \).
\end{lemma}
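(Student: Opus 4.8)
The plan is to prove the set equality $\modelsOf{\Theta_X} = \mathcal{P}(X)$ by a double inclusion, relying on two elementary facts about team semantics: the behaviour of a complete conjunction of literals, and the standard splitting characterization of iterated disjunction. Throughout, all teams are understood to have domain $N = \{p_1,\dots,p_n\}$.

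First I would establish the \emph{minterm fact}: writing $\delta_v := p_1^{v(1)} \wedge \dots \wedge p_n^{v(n)}$ for the disjunct of $\Theta_X$ indexed by $v \in X$, a team $Z$ satisfies $\delta_v$ if and only if $Z \subseteq \{v\}$. This follows directly from the team semantics of conjunction and of literals: $Z \models p_i^{v(i)}$ holds exactly when every $u \in Z$ agrees with $v$ on $p_i$, so $Z \models \delta_v$ holds exactly when every $u \in Z$ coincides with $v$ on all of $N$, i.e.\ $Z = \emptyset$ or $Z = \{v\}$. In particular the empty team satisfies every $\delta_v$, consistent with the empty team property of $\TPL$.

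Next I would invoke the generalized splitting lemma for disjunction: $Y \models \bigvee_{v \in X} \delta_v$ holds iff there is a family $(Y_v)_{v \in X}$ of subteams of $Y$ with $Y = \bigcup_{v \in X} Y_v$ and $Y_v \models \delta_v$ for each $v$. This is the routine iteration of the binary clause for $\vee$ in the definition of team semantics, proved by induction on $|X|$, and I would state it as an immediate consequence rather than reprove it in detail. With these two facts in hand the two inclusions become short. For $\mathcal{P}(X) \subseteq \modelsOf{\Theta_X}$: given $Y \subseteq X$, put $Y_v := Y \cap \{v\}$ for each $v \in X$; since $Y \subseteq X$ we have $\bigcup_{v} Y_v = Y$, and each $Y_v \subseteq \{v\}$ satisfies $\delta_v$ by the minterm fact, so $Y \models \Theta_X$ by the splitting lemma. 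For $\modelsOf{\Theta_X} \subseteq \mathcal{P}(X)$: given $Y \models \Theta_X$, take a witnessing family $(Y_v)$; the minterm fact gives $Y_v \subseteq \{v\} \subseteq X$ for each $v$, hence $Y = \bigcup_v Y_v \subseteq X$.

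There is essentially no hard step here; the only points requiring care are the bookkeeping around empty subteams (admissible in the split by downward closure and the empty team property) and the degenerate case $X = \emptyset$, where under the convention $\bigvee \emptyset := \bot$ one has $\Theta_X = \bot$ and $\modelsOf{\bot} = \{\emptyset\} = \mathcal{P}(\emptyset)$, so the equality still holds. I therefore expect the proof to be fully routine, which matches the remark preceding the statement.
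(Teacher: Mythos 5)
Your proof is correct; the paper gives no proof of Lemma~\ref{lem:definedownsets} at all, merely asserting it is ``straightforward to check'', and your double-inclusion argument --- the minterm fact that $Z \models p_1^{v(1)} \wedge \dots \wedge p_n^{v(n)}$ iff $Z \subseteq \{v\}$, combined with the generalized splitting characterization of $\bigvee$ --- is exactly the routine verification intended. Your bookkeeping is also sound: fixing the domain to $N$, allowing empty subteams in the split (harmless by the empty team property), and the degenerate case $X = \emptyset$ with $\bigvee\emptyset := \bot$ and $\modelsOf{\bot} = \{\emptyset\} = \mathcal{P}(\emptyset)$ all match the paper's conventions.
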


The next lemma guarantees that for a sufficient large enough teams \( X \) exist formulas \( \formulaOne,\formulaTwo \) such that \( X \) is a model of the disjunction \( \formulaOne \lor \formulaTwo \), but \( X \) is not a model of \( \formulaOne \) and \( \formulaTwo \).
\begin{lemma}\label{lem:dagger}
    For each team \( X \) with \( |X|>1 \) exists formulas \( \formulaOne \) and \( \formulaTwo \) such that
    \begin{align*}
        X \models \formulaOne \lor \formulaTwo \text{ , } X \not\models \formulaOne \text{ , and } X \not\models \formulaTwo.
    \end{align*}
\end{lemma}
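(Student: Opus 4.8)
The plan is to exhibit concrete witnesses rather than argue abstractly: I will take $\formulaOne$ and $\formulaTwo$ to be a single literal and its negation. Since $|X|>1$, there are two distinct valuations $v,v'\in X$, and as $v\neq v'$ they must disagree on at least one propositional variable $p\in\dom(X)$; without loss of generality $v(p)=1$ and $v'(p)=0$. I claim that $\formulaOne\coloneqq p$ and $\formulaTwo\coloneqq\neg p$ do the job. Note that both are \PL-formulas (hence \pdl-formulas) with $\Prop(\formulaOne)=\Prop(\formulaTwo)=\{p\}\subseteq\dom(X)$, so the formulas are well-formed for the team $X$.

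To see that $X\models p\lor\neg p$, I will use the split (team-semantic) disjunction directly. Define $Y\coloneqq\{w\in X\mid w(p)=1\}$ and $Z\coloneqq\{w\in X\mid w(p)=0\}$. These are subteams of $X$ with $X=Y\cup Z$. By the semantics of literals, $Y\models p$ (every valuation in $Y$ maps $p$ to $1$) and $Z\models\neg p$ (every valuation in $Z$ maps $p$ to $0$); this holds even if one of $Y,Z$ happens to be empty, since the universal condition in the literal clause is then vacuously satisfied. Hence $Y$ and $Z$ witness the required split and $X\models p\lor\neg p$.

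For the non-satisfaction part, I will read off both claims from the literal semantics. Since $v'\in X$ with $v'(p)=0$, not every valuation of $X$ maps $p$ to $1$, so $X\not\models p$; symmetrically, since $v\in X$ with $v(p)=1$, we get $X\not\models\neg p$. Together with the previous paragraph this establishes $X\models\formulaOne\lor\formulaTwo$, $X\not\models\formulaOne$, and $X\not\models\formulaTwo$.

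I do not expect any serious obstacle here: the statement is essentially the observation that the classical tautology $p\lor\neg p$ is \emph{not} a team-semantic tautology once $|X|>1$, because the split disjunction lets each disjunct handle only part of the team. The only points requiring a little care are ensuring that the distinguishing variable $p$ actually lies in $\dom(X)$ (guaranteed because $v,v'$ disagree on it) and checking that the degenerate cases where $Y$ or $Z$ is empty are harmless (guaranteed by the vacuous truth of the literal clauses, consistent with the empty team property from Proposition~\ref{prop:pdl_pincl_properties}).
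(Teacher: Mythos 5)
Your proof is correct, and it is more elementary than the paper's: you pick a variable $p$ on which two valuations of $X$ disagree and take the literals $\formulaOne = p$ and $\formulaTwo = \neg p$, verifying the split $X = Y \cup Z$ directly from the team semantics of literals. The paper instead takes an arbitrary decomposition $X = Y \cup Z$ into proper non-empty subteams and uses the characteristic formulas $\formulaOne = \Theta_Y$, $\formulaTwo = \Theta_Z$ of Lemma~\ref{lem:definedownsets}, so that $\modelsOf{\formulaOne} = \mathcal{P}(Y)$ and $\modelsOf{\formulaTwo} = \mathcal{P}(Z)$. Both arguments prove the lemma as stated, but the paper's construction establishes something strictly stronger that the statement does not record: $\modelsOf{\Theta_Y \lor \Theta_Z} = \mathcal{P}(X)$, i.e., every model of the disjunction is a subteam of $X$, and $X$ is its unique maximal model. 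That extra strength is exactly what gets used downstream --- the proof of Lemma~\ref{lem:triangle=star} explicitly cites ``(the proof of) Lemma~\ref{lem:dagger}'' and applies smoothness to $\modelsOf{\formulaOne \lor \formulaTwo} = \mathcal{P}(X)$ to extract a \emph{proper subteam} $X' \prec X$ witnessing \eqref{eq:TriangleProperty}. With your literals this step would fail: by flatness, $\modelsOf{p \lor \neg p}$ consists of \emph{all} teams over the domain, not only subteams of $X$, so a $\prec$-smaller model of $p \lor \neg p$ need not lie inside $X$. In short, your route buys simplicity and fully suffices for the lemma as stated, while the paper's route buys the subteam-localisation property that the surrounding development actually relies on.
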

\begin{proof}
    Since \( |X|>1 \) %
    there exists non-empty \( Y,Z\subseteq X \) such that \( X=Y\cup Z \) and \( Y\neq X \) and \( Z \neq X \).
    There are formulas \( \formulaOne \) and \(\formulaTwo \) such that \( \modelsOf{\formulaOne}=\mathcal{P}(Y) \) and \( \modelsOf{\formulaTwo}=\mathcal{P}(Z) \), namely \( \formulaOne=\Theta_Y \) and \( \formulaTwo=\Theta_Z \) from by Lemma~\ref{lem:definedownsets}.
\end{proof}

We will now show that the \eqref{eq:TriangleProperty}-property and the \eqref{eq:StarProperty}-property  describe the same preferential models.
\begin{lemma}\label{lem:triangle=star}
    A preferential model \( \pmW \) for \PDL satisfies the \eqref{eq:StarProperty}-property if and only if \( \pmW \) satisfies the  \eqref{eq:TriangleProperty}-property.
\end{lemma}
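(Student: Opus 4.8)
The plan is to prove the biconditional by showing the two contrapositives, i.e.\ establish that \eqref{eq:StarProperty} fails iff \eqref{eq:TriangleProperty} fails. The key technical bridge is Lemma~\ref{lem:dagger}: whenever a state \( s \) carries a non-singleton team \( \pmL(s) \), we can split \( \pmL(s) \) into two proper subteams via formulas \( \formulaOne=\Theta_Y \) and \( \formulaTwo=\Theta_Z \) so that \( s\in\statesOf{\formulaOne\lor\formulaTwo} \) but \( s\notin\statesOf{\formulaOne} \) and \( s\notin\statesOf{\formulaTwo} \). This is exactly the configuration needed to make the inclusion in \eqref{eq:StarProperty} delicate, so I expect both directions to route through this construction together with Lemma~\ref{lem:definedownsets} (which identifies \( \modelsOf{\Theta_X} \) with the powerset \( \mathcal{P}(X) \), equivalently the set of subteams of \( X \)).

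First I would prove the direction \eqref{eq:TriangleProperty} \(\Rightarrow\) \eqref{eq:StarProperty}. Fix formulas \( \formulaOne,\formulaTwo \) and take any state \( s\in\minOf{\statesOf{\formulaOne\lor\formulaTwo}}{\pmP} \); I must show \( s\in\minOf{\statesOf{\formulaOne}}{\pmP}\cup\minOf{\statesOf{\formulaTwo}}{\pmP} \). Because \( \pmL(s)\models\formulaOne\lor\formulaTwo \), team semantics of \( \lor \) gives a split \( \pmL(s)=Y\cup Z \) with \( Y\models\formulaOne \) and \( Z\models\formulaTwo \). By downwards closure (Proposition~\ref{prop:pdl_pincl_properties}) every subteam of a model of \( \formulaOne \) is again a model, so it suffices to locate a state whose label is such a subteam and that is \( \pmP \)-below or equal to \( s \). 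The crux is to argue \( \pmL(s) \) is itself a singleton, or that \( s \) already lies in \( \statesOf{\formulaOne} \) or \( \statesOf{\formulaTwo} \): if \( |\pmL(s)|>1 \), the \eqref{eq:TriangleProperty}-property yields an \( s'\pmP s \) with \( \pmL(s')\subsetneq\pmL(s) \), and I would iterate (using that \( \pmP \) is a strict partial order on the label-reachable states, invoking smoothness to descend to a minimal element) to reach a state mapping to a singleton subteam of \( Y \) or of \( Z \); downwards closure makes that singleton a model of \( \formulaOne \) (resp.\ \( \formulaTwo \)), contradicting minimality of \( s \) in \( \statesOf{\formulaOne\lor\formulaTwo} \) unless \( s \) was already minimal in one of the two. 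Handling the descent cleanly is the main obstacle, and I would lean on smoothness of \( \statesOf{\Theta_{\pmL(s)}} \) to guarantee a \( \pmP \)-minimal witness rather than an infinite descent.

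For the converse \(\neg\eqref{eq:TriangleProperty}\Rightarrow\neg\eqref{eq:StarProperty}\), suppose \eqref{eq:TriangleProperty} fails: there is a state \( s \) with \( |\pmL(s)|>1 \) such that no \( s' \) with \( \pmL(s')\subsetneq\pmL(s) \) satisfies \( s'\pmP s \). Write \( X=\pmL(s) \) and apply Lemma~\ref{lem:dagger} to obtain \( \formulaOne,\formulaTwo \) with \( X\models\formulaOne\lor\formulaTwo \), \( X\not\models\formulaOne \), \( X\not\models\formulaTwo \); concretely \( \modelsOf{\formulaOne}=\mathcal{P}(Y) \) and \( \modelsOf{\formulaTwo}=\mathcal{P}(Z) \) for proper subteams \( Y,Z \). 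Then \( s\in\statesOf{\formulaOne\lor\formulaTwo} \) but \( s\notin\statesOf{\formulaOne}\cup\statesOf{\formulaTwo} \). I would show \( s \) is \( \pmP \)-minimal in \( \statesOf{\formulaOne\lor\formulaTwo} \): any \( s''\pmP s \) in that set has \( \pmL(s'')\models\formulaOne\lor\formulaTwo \), hence (again by Lemma~\ref{lem:definedownsets}) \( \pmL(s'')\subseteq X \); the assumed failure of \eqref{eq:TriangleProperty} forbids \( \pmL(s'')\subsetneq X \) with \( s''\pmP s \), so no such proper witness exists. This places \( s \) in \( \minOf{\statesOf{\formulaOne\lor\formulaTwo}}{\pmP} \) yet outside \( \minOf{\statesOf{\formulaOne}}{\pmP}\cup\minOf{\statesOf{\formulaTwo}}{\pmP} \), violating \eqref{eq:StarProperty}. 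The subtle point here is to rule out that some incomparable singleton state reintroduces \( s \) into the right-hand side, which is why the minimality argument must be phrased via labels and the powerset characterisation rather than via states directly; once that is pinned down, combining both directions gives the claimed equivalence.
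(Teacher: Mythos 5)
Your overall route is the same as the paper's: both directions run through the \( \Theta \)-formulas of Lemmas~\ref{lem:definedownsets} and~\ref{lem:dagger}, downwards closure, smoothness, and the fact that \( \lor \) behaves Boolean on singleton teams. In the direction \eqref{eq:TriangleProperty}\( \Rightarrow \)\eqref{eq:StarProperty} your iteration is unnecessary: a single application of \eqref{eq:TriangleProperty} to a minimal \( s\in\minOf{\statesOf{\formulaOne\lor\formulaTwo}}{\pmP} \) with \( |\pmL(s)|>1 \) gives \( s'\pmP s \) with \( \pmL(s')\subsetneq\pmL(s) \), and downwards closure already puts \( s' \) into \( \statesOf{\formulaOne\lor\formulaTwo} \), contradicting minimality outright; you then still owe the short smoothness argument that a minimal state of \( \statesOf{\formulaOne\lor\formulaTwo} \) lying in \( \statesOf{\formulaOne} \) is in fact minimal in \( \statesOf{\formulaOne} \) (using \( \formulaOne\models\formulaOne\lor\formulaTwo \), via the empty team property). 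These are presentational issues, not errors.

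The converse direction, however, has a genuine gap: from the failure of \eqref{eq:TriangleProperty} at \( s \) you conclude that \( s \) is \( \pmP \)-minimal in \( \statesOf{\formulaOne\lor\formulaTwo} \). That does not follow. The failure of \eqref{eq:TriangleProperty} at \( s \) only excludes states \( s''\pmP s \) whose label is a \emph{proper} subteam of \( X=\pmL(s) \); it says nothing about states below \( s \) carrying the label \( X \) itself. Concretely, take \( \pmS=\{s_1,s_2\} \), \( \pmL(s_1)=\pmL(s_2)=X \) with \( |X|>1 \), and \( s_1\pmP s_2 \): this is a smooth preferential model, \eqref{eq:TriangleProperty} fails at \( s_2 \), yet \( s_2 \) is not minimal in \( \statesOf{\formulaOne\lor\formulaTwo} \) because \( s_1 \) lies below it, so your witness evaporates. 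The repair is short, and it is exactly the move your last sentence gestures at but does not execute, namely arguing at the level of labels rather than states: by smoothness there is a state \( s_0 \) minimal in \( \statesOf{\formulaOne\lor\formulaTwo} \) with \( s_0=s \) or \( s_0\pmP s \); since \( \modelsOf{\formulaOne\lor\formulaTwo}=\mathcal{P}(X) \) we get \( \pmL(s_0)\subseteq X \), and the failure of \eqref{eq:TriangleProperty} at \( s \) rules out \( \pmL(s_0)\subsetneq X \), so \( \pmL(s_0)=X \). Hence the \emph{team} \( X \) lies in \( \minOf{\modelsOf{\formulaOne\lor\formulaTwo}}{\pmP} \), while \( X\notin\mathcal{P}(Y)\cup\mathcal{P}(Z)\supseteq\minOf{\modelsOf{\formulaOne}}{\pmP}\cup\minOf{\modelsOf{\formulaTwo}}{\pmP} \), violating \eqref{eq:StarProperty}. (Your worry about singleton states re-entering the right-hand side is a non-issue for precisely this reason.) The paper sidesteps the problem by proving \eqref{eq:StarProperty}\( \Rightarrow \)\eqref{eq:TriangleProperty} directly: there \eqref{eq:StarProperty} is available from the start and immediately yields \( X\notin\minOf{\modelsOf{\formulaOne\lor\formulaTwo}}{\pmP} \), so the minimal state handed over by smoothness must carry a strictly smaller label.
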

\begin{proof}
    Assume \eqref{eq:TriangleProperty} holds. Then it is easy to see that the minimal elements of the order $\prec$ are states that are mapped, via \( \pmL \), to singleton teams. Furthermore, by the downward closure property, for any $\formulaOne\vee \formulaTwo$ the minimal teams satisfying the formula are all singletons. Since for singleton teams the interpretation of $\vee$ is equivalent with that of the Boolean disjunction the property  (\( \star \)) follows.
    
    For the converse, assume that  (\( \star \)) holds and let \( X \) be a team with \( |X| > 1 \). We will show that  then there is some team \( Y  \) with
        \( Y\subsetneq X  \text{, } Y \neq \emptyset  \text{, and } Y \prec X \).
    Because \( X \) contains at least two valuations, there exist \( Y,Z\subseteq X \) such that \( X=Y\cup Z \) and \( Y\neq X \) and \( Z \neq X \).
    By (the proof of) Lemma~\ref{lem:dagger} there are formulas \( \formulaOne=\Theta_Y  \) and \( \formulaTwo=\Theta_Z  \) such that \( X \models \formulaOne \lor \formulaTwo \), yet \( X \not\models \formulaOne \) and \( X \not\models \formulaTwo \).
    Using this and \eqref{eq:StarProperty}, we obtain that \( X \notin \minOf{\formulaOne \lor \formulaTwo}{\prec} \) holds.
    However, by smoothness of \( \prec \), the set $\modelsOf{\formulaOne \lor \formulaTwo}=\mathcal{P}(X)$ contains a team \( X' \) such that \( X' \prec X \). Now $X'$ is a witness for the \eqref{eq:TriangleProperty}-Property.
\end{proof}

Now we are ready to give the proof of Theorem \ref{main}.
\begin{proof}[Proof of Theorem~\ref{main}]
    By Lemma~\ref{lem:triangle=star}, it suffices to show \eqref{eq:StarProperty}\( \Rightarrow \)\eqref{pstl:Or} and {\eqref{pstl:Or}\( \Rightarrow \)\eqref{eq:TriangleProperty}.}    
    From Proposition~\ref{prop:SystemPifStar} we obtain \eqref{eq:StarProperty}\( \Rightarrow \)\eqref{pstl:Or}.       
    It remains to show \eqref{pstl:Or}\( \Rightarrow \)\eqref{eq:TriangleProperty}.
        Towards a contradiction, assume that  \eqref{eq:TriangleProperty} fails.  Then there exists a team $X$ of size $j\ge 2$ such that for all $Y\subseteq X$, $Y \not \prec X$. Let $j=l+k$ ($l,k\ge 1 $ and $l\le k$) and define
        $$\formulaOne:= \Theta_X\wedge (\theta\vee \cdots \vee \theta ),$$
        where $\theta :=\bigwedge_{1\le i\le n}\dep{p_i} $ and $\formulaOne$ has $l$ many copies of $\theta$. It is easy to check that $\formulaOne$ is satisfied by subteams of $X$ of cardinality at most $l$. The formula $\formulaTwo$ is defined similarly with $k$ copies of $\theta$ in the disjuncts. Now it holds that $\formulaTwo \models \formulaTwo$,   $\formulaOne \models \formulaTwo$ but $X\not \models \formulaOne, \formulaTwo $. Using reflexivity and right weakening, it follows that 
        $ \formulaTwo\nmableitW \formulaTwo$ and  $\formulaOne\nmableitW \formulaTwo$. On the other hand,  
        since $X$ is now a minimal model of $\formulaOne \vee \formulaTwo$ that does not satisfy $\formulaTwo$ we have shown $\formulaOne \vee \formulaTwo\notnmableitW \formulaTwo$ and that (Or) fails for \( \nmableitW \).\qedhere
\end{proof}

In conformance with Theorem~\ref{main}, the model \( \pmWpq \) from Example~\ref{ex:violateOR} violates the \eqref{eq:TriangleProperty}-property and \eqref{eq:StarProperty}-property.

\MYParagraph{A Note on the Expressivity on \PDLPref with System~P.}
Next, we show that preferential models for System P reasoning are quintessentially the same as their flat (see Section~\ref{sec:background_team_based_logic}) counterpart in \CPLPref.
\begin{theorem}\label{col:triangle_flattening}
    Let \( \pmW=\tuple{\pmS,\pmL,\pmP} \) be a preferential model for \PDL such that \( \nmableitW \) satisfies System~P. Then, 
    $\formulaOne \nmableitW \formulaTwo$ iff  $\formulaOne^f \nmableitWparam{\pmW'} \formulaTwo^f$,
    where \( \pmW'=\tuple{\pmS',\pmL',\pmP'} \) denotes the preferential model for \CPL induced by $\pmW$, i.e.,
    one obtains \( \pmW' \) from \( \pmW \) by first remove all states labelled by non-singleton teams, then replace labels of the singleton teams by their content.
\end{theorem}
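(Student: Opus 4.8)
The plan is to reduce preferential $\PDL$-reasoning under System~P to flat $\CPL$-reasoning by exploiting the structural consequence of Theorem~\ref{main}: since $\nmableitW$ satisfies System~P, the model $\pmW$ satisfies the \eqref{eq:TriangleProperty}-property. The bridge between the two worlds is the flattening identity $\{v\}\models\phi \Leftrightarrow v\models\phi^f$ recorded in Section~\ref{sec:background_team_based_logic}, which says that on singleton teams $\PDL$-satisfaction of $\phi$ and $\CPL$-satisfaction of $\phi^f$ agree. Thus the whole argument rests on showing that, for the purpose of entailment, only the singleton-labelled states of $\pmW$ matter, and that these are precisely the states retained in $\pmW'$.

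First I would isolate the structural fact already used inside the proof of Lemma~\ref{lem:triangle=star}: under \eqref{eq:TriangleProperty}, every $\pmP$-minimal state of $\statesOf{\formulaOne}$ is labelled by a singleton team. Indeed, if some $s\in\minOf{\statesOf{\formulaOne}}{\pmP}$ had $|\pmL(s)|>1$, then \eqref{eq:TriangleProperty} supplies an $s'$ with $\pmL(s')\subsetneq\pmL(s)$ and $s'\pmP s$; downward closure (Proposition~\ref{prop:pdl_pincl_properties}) gives $\pmL(s')\models\formulaOne$, i.e.\ $s'\in\statesOf{\formulaOne}$, contradicting minimality of $s$. This is exactly why $\pmW'$ is obtained by discarding the non-singleton states.

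Next I would set up the correspondence. By construction $\pmS'$ consists of the singleton-labelled states of $\pmS$, with $\pmL'(s)=v$ whenever $\pmL(s)=\{v\}$, and $\pmP'$ the restriction of $\pmP$ to $\pmS'$. For such an $s$ and any $\formulaOne$, the flattening identity yields $s\in\statesOf{\formulaOne}$ (in $\pmW$) iff $\{v\}\models\formulaOne$ iff $v\models\formulaOne^f$ iff $s\in\statesOf{\formulaOne^f}$ (in $\pmW'$). I then claim the minimal sets coincide: a singleton state $s$ is $\pmP$-minimal in $\statesOf{\formulaOne}$ iff it is $\pmP'$-minimal in $\statesOf{\formulaOne^f}$. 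The forward direction is immediate, since $\pmP'\subseteq\pmP$ and $\pmS'\subseteq\pmS$. For the converse, suppose $s$ is $\pmP'$-minimal but some $t\in\statesOf{\formulaOne}$ has $t\pmP s$; by smoothness there is a $\pmP$-minimal $t^\ast\in\statesOf{\formulaOne}$ with $t^\ast\pmP s$ (take $t^\ast=t$ if $t$ is already minimal, otherwise descend below $t$ and use transitivity), and by the structural fact $t^\ast$ is a singleton, hence $t^\ast\in\statesOf{\formulaOne^f}$ with $t^\ast\pmP' s$, contradicting minimality.

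Finally I would chain the definitions: $\formulaOne\nmableitW\formulaTwo$ unfolds to ``every $\pmP$-minimal $s\in\statesOf{\formulaOne}$ satisfies $\pmL(s)\models\formulaTwo$''; since those minimal states are singletons $\{v\}$, and $\{v\}\models\formulaTwo$ iff $v\models\formulaTwo^f$, this is equivalent to ``every $\pmP'$-minimal $s\in\statesOf{\formulaOne^f}$ satisfies $\pmL'(s)\models\formulaTwo^f$'', i.e.\ $\formulaOne^f\nmableitWparam{\pmW'}\formulaTwo^f$. I expect the main obstacle to be exactly the converse minimality direction: one must guarantee that no discarded state of $\statesOf{\formulaOne}$ lies strictly below a retained singleton in a way invisible to $\pmW'$, and this is precisely where smoothness together with \eqref{eq:TriangleProperty} is needed to descend to a minimal singleton witness. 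The only genuinely delicate edge case is the empty team, which satisfies every formula: an empty-team state sitting below a retained singleton is exactly what could disturb the minimality correspondence, so the statement is cleanest under the convention (as in Example~\ref{ex:violateOR}) that states are labelled by non-empty teams.
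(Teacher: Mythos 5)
Your proposal is correct and follows essentially the same route as the paper's own proof: invoke Theorem~\ref{main} to extract the \eqref{eq:TriangleProperty}-property, conclude that the $\pmP$-minimal states in \( \statesOf{\formulaOne} \) are singleton-labelled, and translate entailment through the flattening identity \( \{v\}\models\formulaOne \Leftrightarrow v\models\formulaOne^f \). Your write-up is in fact more careful than the paper's terse argument—both the smoothness-based converse minimality direction and the empty-team caveat are real details that the paper's proof glosses over.
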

\begin{proof}
    By construction, for all valuations $s,s'$ it holds that $s\prec' s'$ if and only if  $\{s\}\prec \{s'\}$.  By Theorem~\ref{main}, $W$ satisfies the \eqref{eq:TriangleProperty}-property and hence the minimal elements of $\prec$ are singleton teams. Hence 
    $\formulaOne\nmableit \formulaTwo$, iff,  %
    for all minimal $\{s\}\in\modelsOf{\formulaOne}$  with $\{s\}\models \formulaTwo$, iff,
    for all $\prec'$-minimal $s\in \modelsOf{\formulaOne^f}:$  $s\models \formulaTwo^f$.
    The last equivalence holds due to the remark on flattening after Proposition~\ref{prop:pdl_pincl_properties}.
\end{proof}
Theorem~\ref{col:triangle_flattening} demonstrate that System~P reasoning in \PDLPref is not fully employs the underlying team-semantics.

\section{\hspace{-0.5em}Preferential Reconstruction of \PDL and \CPL}
\label{sec:prefReconstruct}

In this section, we characterize logical entailment \( \models^{t} \) for \PDL, as well as the logical entailment  \( \models^c \) for propositional logic with classical semantics in a non-trivial canonical way.
Let \( \pmWsub = \tuple{\pmSsub,\pmLsub,\pmPsub} \) and \( \pmWsup = \tuple{\pmSsup,\pmLsup,\pmPsup} \) be the preferential models such that the following holds:
\begin{align*}
    &\pmSsub = \pmSsup = \{ s_{\Int} \mid \Int \text{ is a non-empty team} \} \\
    &\pmLsub(s_X) = \pmLsup(s_X)   = X \\ 
    & Y \pmPsub X  \text{ if } Y \subsetneq X \qquad
    Y \pmPsup X  \text{ if } X \subsetneq Y
\end{align*}
In \( \pmWsub \) and \pmWsup, for each team \( X \) there is exactly one state \( s_X \) that is labelled by \( X \).
In \( \pmPsub \), subsets of a team are preferred, whereas in \( \pmPsup \) superset teams are preferred.

The preferential model \( \pmWsup \) gives rise to the \PDL entailment relation \( \models \), and the preferential model \( \pmWsup \) gives rise to \CPL entailment of the flattening \( \models^c \).

\pagebreak[3]
\begin{proposition}\label{prop:pdl_examples}
    For all \pdl-formulas \( \formulaOne , \formulaTwo \) we have:
    \begin{enumerate}[(1)]
        \item \( \formulaOne \nmableitWparam{\pmWsub} \formulaTwo \text{ if and only if }  \formulaOne^f \models^c \formulaTwo^f \)
        \item \( \formulaOne \nmableitWparam{\pmWsup} \formulaTwo \text{ if and only if }  \formulaOne \models \formulaTwo \)
    \end{enumerate}
\end{proposition}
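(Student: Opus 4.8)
The plan is to prove both equivalences by first determining, in each of the two preferential models, which states are $\prec$-minimal among the models of a given formula, and then translating the resulting condition into the appropriate monotonic entailment. In both cases each team labels exactly one state, so I can freely identify $s_X$ with $X$ and read $\minOf{\modelsOf{\formulaOne}}{\pmPsub}$ (resp.\ $\pmPsup$) as the set of teams labelling the minimal models of $\formulaOne$.

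For part~(1) I would begin by observing that in $\pmWsub$ a state $s_X$ is $\pmPsub$-minimal in $\statesOf{\formulaOne}$ exactly when $X$ is a singleton. Indeed, if $|X|\ge 2$ then for any $v\in X$ we have $\{v\}\subsetneq X$, and by downward closure $\{v\}\models\formulaOne$, so $s_{\{v\}}\pmPsub s_X$ witnesses that $s_X$ is not minimal; conversely a singleton has no non-empty proper subteam and is therefore minimal. Hence $\minOf{\modelsOf{\formulaOne}}{\pmPsub}=\{\,\{v\}\mid\{v\}\models\formulaOne\,\}$. Using the flattening equivalence $\{v\}\models\formulaOne\Leftrightarrow v\models\formulaOne^f$ recorded after Proposition~\ref{prop:pdl_pincl_properties}, this set equals $\{\,\{v\}\mid v\models\formulaOne^f\,\}$. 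Then $\formulaOne\nmableitWparam{\pmWsub}\formulaTwo$ holds iff every such singleton satisfies $\formulaTwo$, i.e.\ iff $v\models\formulaTwo^f$ for every $v\models\formulaOne^f$, which is precisely $\formulaOne^f\models^c\formulaTwo^f$.

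For part~(2) the dual observation is that $s_X$ is $\pmPsup$-minimal in $\statesOf{\formulaOne}$ iff $X$ is a $\subseteq$-maximal model of $\formulaOne$, since $s_{X'}\pmPsup s_X$ means $X\subsetneq X'$. The direction $\formulaOne\models\formulaTwo\Rightarrow\formulaOne\nmableitWparam{\pmWsup}\formulaTwo$ is immediate, as $\minOf{\modelsOf{\formulaOne}}{\pmPsup}\subseteq\modelsOf{\formulaOne}\subseteq\modelsOf{\formulaTwo}$. For the converse I would take an arbitrary model $X$ of $\formulaOne$ and invoke smoothness of $\pmWsup$: either $X$ is already maximal, or there is a maximal model $X^*\supsetneq X$ with $s_{X^*}\pmPsup s_X$. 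In either case $X$ lies beneath some maximal model $X^*$ of $\formulaOne$; the hypothesis $\formulaOne\nmableitWparam{\pmWsup}\formulaTwo$ gives $X^*\models\formulaTwo$, and downward closure of $\formulaTwo$ together with $X\subseteq X^*$ yields $X\models\formulaTwo$. As $X$ was arbitrary, $\formulaOne\models\formulaTwo$.

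The two minimality characterisations are routine, following directly from downward closure and from the non-emptiness of states. The one step needing genuine care is the converse direction of part~(2): one must guarantee that every model really sits below a maximal one, and this is exactly the smoothness condition built into the definition of a preferential model (and automatic here, since for a fixed finite variable set there are only finitely many teams). This is where I expect the main obstacle to lie, in the sense that it is the only place where one appeals to a structural property of $\pmWsup$ rather than merely unfolding the definition of $\nmableitWparam{\pmWsup}$.
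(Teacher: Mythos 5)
Your proof is correct and follows essentially the same route as the paper's. Part (1) is identical: both arguments use downward closure and the non-emptiness of states to identify the $\pmPsub$-minimal states as exactly the singleton models of $\formulaOne$, and then translate through the flattening equivalence $\{v\}\models\formulaOne\Leftrightarrow v\models\formulaOne^f$. In part (2) the skeleton is also the same (minimal states under $\pmPsup$ are the $\subseteq$-maximal models; the easy direction is immediate; the converse uses downward closure of $\modelsOf{\formulaTwo}$), and the only difference is how the key fact --- that every model of $\formulaOne$ lies below some $\subseteq$-maximal model of $\formulaOne$ --- is justified: the paper makes this explicit by writing the downward-closed set $\modelsOf{\formulaOne}$ as a finite union $\mathcal{P}(X_1)\cup\dots\cup\mathcal{P}(X_n)$ of power sets of pairwise incomparable maximal teams and observing that $\minOf{\modelsOf{\formulaOne}}{\pmPsup}=\{X_1,\dots,X_n\}$, whereas you invoke the smoothness condition built into the definition of a preferential model (noting correctly that it holds here by finiteness). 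These are interchangeable justifications of the same step: your appeal to smoothness is marginally more abstract and would survive in any setting where smoothness is postulated, while the paper's decomposition is more self-contained in that it effectively verifies, rather than presupposes, that $\pmWsup$ has the required minimal elements.
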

\begin{proof}We show statements (1) and (2).
        \emph{(1)}
        Observe at first that we have \( \formulaOne \nmableitWparam{\pmWsub} \formulaTwo \) exactly when we also have \( \minOf{\modelsOf{\formulaOne}}{\pmPsub} \subseteq \modelsOf{\formulaTwo} \).
        Because \pdl has the downwards closure property, we also have that stating \( \minOf{\modelsOf{\formulaOne}}{\pmPsub} \subseteq \modelsOf{\formulaTwo} \) is equivalent to stating that for all singleton teams \( \{v\} \) holds that \( \{v\} \models \formulaOne \)  implies \( \{v\} \models \formulaTwo \).
        The latter statement is equivalent to stating that for the flattening \( \formulaOne^f \) and \( \formulaTwo^f \) holds that for all valuations \( v \) holds that \( v \models \formulaOne^f \)  implies \( v \models \formulaTwo^f \) (see also Section~\ref{sec:background_team_based_logic}). Hence, we have \( \formulaOne \nmableitWparam{\pmWsub} \formulaTwo \text{ if and only if }  \formulaOne^f \models^c \formulaTwo^f \).
        \emph{(2)}
        We obtain \( {\models}  \subseteq  {\nmableitWparam{\pmWsup}} \) immediately by the definition of \( \nmableitWparam{\pmWsup} \).
        We consider the other direction. The statement \(  \formulaOne \models \formulaTwo \)  is equivalent to \( \modelsOf{\formulaOne} \subseteq \modelsOf{\formulaTwo} \). Because \( \modelsOf{\formulaOne} \) is downward-closed, there are (pairwise \( \subseteq \)-incomparable) teams \( X_1,\ldots,X_n \) such that \( \modelsOf{\formulaOne}=\mathcal{P}(X_1)\cup\ldots\cup\mathcal{P}(X_n)\).
        Because of the last property, we have that \(  \formulaOne \models \formulaTwo \) holds exactly when \( \{X_1,\ldots,X_n\} \subseteq \modelsOf{\formulaTwo} \) holds.
        By construction of \( \pmWsup \) we have \( {\minOf{\modelsOf{\formulaOne}}{\pmPsup}}=\{X_1,\ldots,X_n\}  \) for \( \formulaOne \).
        Consequently, we also have that \( \formulaOne \nmableitWparam{\pmWsup} \formulaTwo \) holds and consequently, we also have \( {\nmableitWparam{\pmWsup}}  \subseteq   {\models} \). \qedhere
\end{proof} 
\section{Implications for \TPLPref}
\label{sec:discussion}
We consider the preferential version of the fragment {\TPL} of \PDL.
Inspecting Example~\ref{ex:violateOR} the proof of Proposition~\ref{prop:systemCteams} reveals that they also apply to \TPLPref.
\begin{proposition}\label{prop:systemCteamsTPL}
    \TPLPref satisfies System~C and violates System~P.
\end{proposition}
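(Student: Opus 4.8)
The plan is to obtain both halves of the statement by transferring the corresponding arguments for \PDLPref, observing that neither of those arguments actually exploits the presence of dependence atoms. The two claims are independent, so I would treat them separately.

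For the System~C half, I would revisit the proof of Proposition~\ref{prop:systemCteams} and check, rule by rule, which semantic facts it relies on. The cases \eqref{pstl:Ref} and \eqref{pstl:LLE} use only the definition of \( \nmableitW \) together with the observation that semantically equivalent formulas induce identical state sets \( \statesOf{\formulaOne} \); the case \eqref{pstl:RW} uses only that \( \formulaOne \models \formulaTwo \) entails \( \modelsOf{\formulaOne} \subseteq \modelsOf{\formulaTwo} \); and \eqref{pstl:Cut} and \eqref{pstl:CM} additionally use the team-semantic reading of conjunction, namely \( \statesOf{\formulaOne \land \formulaTwo} = \statesOf{\formulaOne} \cap \statesOf{\formulaTwo} \), together with smoothness of \( \pmP \). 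All of these ingredients are shared by \TPL, since \TPL interprets the propositional connectives by exactly the same team semantics as \PDL and smoothness is part of the definition of a preferential model. Hence the proof carries over verbatim, now with \( \ssFormulas \) ranging over \PL-formulas, giving that \TPLPref satisfies System~C.

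For the System~P half, I would reuse Example~\ref{ex:violateOR} directly. The crucial point is that every formula appearing there---\( p \), \( \neg p \), \( q \) and \( p \lor \neg p \)---is a \PL-formula, so the example lives entirely inside \TPL. The model \( \pmWpq \) is built from teams, which are the common interpretations of \TPL and \PDL, so it is equally a candidate preferential model for \TPL; and since its smoothness over all \PDL-formulas was already invoked, smoothness over the smaller set of \PL-formulas follows a fortiori, so \( \pmWpq \) is a genuine preferential model for \TPL. Recomputing the minimal models under \( \pmPpq \) with the team semantics of \( \vee \) then yields \( p \nmableitWparam{\pmWpq} q \) and \( \neg p \nmableitWparam{\pmWpq} q \), while \( X_{p \leftrightarrow q} \) remains a \( \pmPpq \)-minimal model of \( p \lor \neg p \) that fails \( q \); thus \( p \lor \neg p \notnmableitWparam{\!\!\pmWpq} q \), witnessing the failure of \eqref{pstl:Or} and hence of System~P.

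The only real obstacle is verification rather than any new construction: I must make sure that neither the System~C argument nor the example smuggles in a property peculiar to dependence atoms. For System~C this reduces to confirming that the two identities used (equivalence preserving \( \statesOf{\formulaOne} \), and conjunction as intersection of state sets) are purely a matter of the propositional team semantics that \TPL inherits. For the example it reduces to confirming that restricting the formula class to \PL disturbs neither the smoothness requirement nor the computed entailments. Once these checks are in place, the result is immediate.
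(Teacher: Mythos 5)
Your proposal is correct and follows essentially the same route as the paper, whose proof consists precisely of observing that the System~C argument of Proposition~\ref{prop:systemCteams} and the counterexample of Example~\ref{ex:violateOR} involve no dependence atoms and hence apply verbatim to \TPLPref. Your version merely spells out the rule-by-rule and smoothness checks that the paper leaves implicit.
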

Surprisingly, Theorem~\ref{main} does not carry over to \TPLPref.
We consider the in the following an example that witnesses that  \eqref{eq:StarProperty} and \eqref{eq:TriangleProperty} do not characterize System~P in \TPLPref.
\begin{example}\label{ex:violateStar}
    Assume that \( N=\{p\}\subseteq\Prop \) holds.
    There are exactly two valuations \( v_{p} \) and \( v_{\overline{p}}  \) with \( v_{p}(p) = 1 \) and \( v_{\overline{p}}(p) = 0 \).
    We consider the teams \( X_{p} = \{ v_{p} \} \), \( X_{\overline{p}}=\{ v_{\overline{p}}  \} \), and  \( X_{p\overline{p}}=\{ v_1,v_2\} \).
    Let \( \pmW_{\oast} = \tuple{\pmS_{\oast} , \pmL_{\oast} , \pmP_{\oast} } \) be the preferential model such that
    \begin{align*}
        \pmS_{\oast}  & = \{ s_{X_{p}} ,  s_{X_{\overline{p}}}, s_{X_{p\overline{p}}} \} & \pmL_{\oast}(s_X) & = X         
    \end{align*}
    holds, and such that \( \pmPpq \) is the strict partial order given by (for the sake of readability, we identify \( s_X \) with \( X \)):
    \begin{align*}
        X_{p\overline{p}} & \pmP_{\oast}  X_{p}       &        X_{p\overline{p}} & \pmP_{\oast}  X_{\overline{p}} 
    \end{align*}
    One can check that for all \PL-formulas the postulates \eqref{pstl:Or} is satisfied. System~P satisfaction follows then from Proposition~\ref{prop:systemCteamsTPL}.
    Clearly, \( X_{p\overline{p}} \pmP_{\oast}  X_{p} \) witness a violation of the \eqref{eq:TriangleProperty}-property.
    For a violation of the \eqref{eq:StarProperty}-property, we make the following observation:
    \begin{align*}
        \minOf{\modelsOf{p\lor\neg p}}{\pmP_{\oast}} & = \{ X_{p\overline{p}} \} \not\subseteq \{ X_{p} \} \cup \{ X_{\overline{p}} \} \\
        & = \minOf{\modelsOf{p}}{\pmP_{\oast}} \cup \minOf{\modelsOf{\neg p}}{\pmP_{\oast}} 
    \end{align*}
\end{example}
In summary, we obtain from Example~\ref{ex:violateStar} the following.
\begin{proposition}
   \( \pmW_{\oast} \) is a preferential model for \TPL that violates \eqref{eq:StarProperty} and \eqref{eq:TriangleProperty}, yet \( \nmableitWparam{\pmW_{\oast} } \) satisfies System~P.
\end{proposition}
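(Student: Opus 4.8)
The plan is to dispatch the three assertions separately, since Example~\ref{ex:violateStar} already carries out the calculations witnessing the failures of \eqref{eq:StarProperty} and \eqref{eq:TriangleProperty}, leaving System~P as the only real work. First I would confirm that \( \pmW_{\oast} \) is a preferential model: the relation \( \pmP_{\oast} \) consists only of \( X_{p\overline{p}}\pmP_{\oast}X_{p} \) and \( X_{p\overline{p}}\pmP_{\oast}X_{\overline{p}} \), which is irreflexive and vacuously transitive, hence a strict partial order; and since \( \pmS_{\oast} \) has only three states, the model is finite, so smoothness is automatic because in a finite order every element is minimal or sits above a minimal one. For the two violations I would simply invoke the example: the non-singleton team \( X_{p\overline{p}} \) is \( \pmP_{\oast} \)-minimal with no proper subteam below it, breaking \eqref{eq:TriangleProperty}, and the computation \( \minOf{\modelsOf{p\lor\neg p}}{\pmP_{\oast}}=\{X_{p\overline{p}}\}\not\subseteq\{X_{p}\}\cup\{X_{\overline{p}}\} \) breaks \eqref{eq:StarProperty}.

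The substance lies in System~P. By Proposition~\ref{prop:systemCteamsTPL} every \TPL preferential model already satisfies System~C, so it suffices to verify \eqref{pstl:Or} for \( \nmableitWparam{\pmW_{\oast}} \). The crucial simplification is flatness of \TPL: whether \( X\models\formulaOne \) depends only on the singletons contained in \( X \), and on singletons the empty-team property collapses the team disjunction to classical disjunction. Consequently, over the three states only four distinct model sets arise, indexed by which of \( v_{p},v_{\overline{p}} \) satisfies the formula, namely \( \emptyset \), \( \{X_{p}\} \), \( \{X_{\overline{p}}\} \), and the full \( \{X_{p},X_{\overline{p}},X_{p\overline{p}}\} \); in particular, flatness forbids any formula that is true on both singletons yet false on \( X_{p\overline{p}} \).

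I would then record the minimal set of each class, using \( \minOf{\{X_{p},X_{\overline{p}},X_{p\overline{p}}\}}{\pmP_{\oast}}=\{X_{p\overline{p}}\} \), together with which \( \formulaThree \) each class entails, and check \eqref{pstl:Or} in the equivalent form \enquote{whatever both \( \formulaOne \) and \( \formulaTwo \) entail is entailed by \( \formulaOne\lor\formulaTwo \)} across all pairs of the four classes, where the class of \( \formulaOne\lor\formulaTwo \) is the union on singletons. Every pair is immediate except \( \{X_{p}\} \) with \( \{X_{\overline{p}}\} \), whose disjunction is the full class with unique minimal model \( X_{p\overline{p}} \). This pair is the main obstacle and the conceptual heart of the example: \eqref{eq:StarProperty} fails because the minimal model of \( p\lor\neg p \) is the full team rather than a singleton, yet \eqref{pstl:Or} survives because any common consequence \( \formulaThree \) of the two premises is satisfied by both singletons \( X_{p} \) and \( X_{\overline{p}} \) and hence, by flatness, also by their union \( X_{p\overline{p}} \).
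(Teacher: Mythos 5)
Your proposal is correct and follows essentially the paper's own route: the violations of \eqref{eq:StarProperty} and \eqref{eq:TriangleProperty} are read off from the computations in Example~\ref{ex:violateStar}, System~C is imported from Proposition~\ref{prop:systemCteamsTPL}, and only \eqref{pstl:Or} remains to be verified. Your flatness-based classification of model sets into the four classes $\emptyset$, $\{X_{p}\}$, $\{X_{\overline{p}}\}$, and $\{X_{p},X_{\overline{p}},X_{p\overline{p}}\}$ supplies exactly the verification that the paper compresses into \enquote{one can check that \eqref{pstl:Or} is satisfied}, and it correctly isolates the one non-trivial pair (the two distinct singletons), where flatness forces the common consequence to hold on $X_{p\overline{p}}$.
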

    Unfortunately, a corresponding alternative for \eqref{eq:StarProperty} and \eqref{eq:TriangleProperty} that characterizes System~P within \TPL eludes us so far.
    
    One might remark that \( \pmW_{\oast} \) is also a preferential model for \PDL. However, in compliance with Theorem~\ref{main}, when have that System~P is violated by \( \nmableitWparam{\pmW_{\oast}} \) in \PDL when setting \( \formulaOne \), \( \formulaTwo\) and \(\formulaThree  \) in \eqref{pstl:Or} to \(  \dep{p} \). We have \( \dep{p} \nmableitWparam{\pmW_{\oast}} \dep{p} \) and \( \dep{p} \lor \dep{p} \notnmableitWparam{\!\!\!\pmW_{\oast}} \dep{p} \); the latter because \( \minOf{\modelsOf{\dep{p}\lor\dep{p}}}{\pmP_{\oast}} = \{ X_{p\overline{p}}\ \} \) and  \( X_{p\overline{p}} \not\models \dep{p} \) holds. Note that \( \dep{p} \) is not a \PL-formula, and thus, also not of \TPL.

\section{Complexity of Entailment}
\label{sec:complexity}
In this section, we study the computational complexity of entailment-related problems. 
Informally this means, that we are given a preferential model $\pmW$ and to formulas $\formulaOne, \formulaTwo$ as inputs. 
Then we ask whether $\formulaOne\nmableitW\formulaTwo$ is true.

\MYParagraph{Preferential Propositional Logic.}
Preferential models $\pmW$ encompass three components: a set $\pmS$, a labelling function $\pmL$, and an order $\pmP$. 
Let us first define the problem of interest.
\problemdef{$\pmCircOrderModelChecking$ --- entailment problem for preferential propositional logic}{A finite preferential model $\pmW = \tuple{\pmS,\pmL,\pmP}$ for \CPL and $\formulaOne,\formulaTwo\in\PL$}{Is it true that $\formulaOne\nmableitW\formulaTwo$}
First, we see how this problem can be solved in polynomial time in a brute-force approach.
\begin{lemma}
    $\pmCircOrderModelChecking\in\Ptime$. \label{lem:pentpl-p}
\end{lemma}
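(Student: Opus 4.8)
The plan is to give a direct brute-force algorithm and argue that each step runs in polynomial time in the size of the input $\langle \pmW, \formulaOne, \formulaTwo\rangle$. Recall that $\formulaOne \nmableitW \formulaTwo$ holds iff $\minOf{\statesOf{\formulaOne}}{\pmP} \subseteq \statesOf{\formulaTwo}$, so the whole task reduces to computing the set of $\pmP$-minimal states satisfying $\formulaOne$ and checking that each of them satisfies $\formulaTwo$.

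First I would enumerate the states in $\pmS$; since $\pmW$ is given explicitly as part of the input, $|\pmS|$ is polynomial in the input size, and each state $s$ comes with its label $\pmL(s) \in \allModels{N}$, which for \CPL is a single valuation. For each state $s$, I would evaluate whether $\pmL(s) \models \formulaOne$ and whether $\pmL(s) \models \formulaTwo$; since classical satisfaction of a \PL-formula by a fixed valuation is a straightforward recursive evaluation over the formula tree, each such check is polynomial in $|\formulaOne|$ (resp.\ $|\formulaTwo|$). This yields the sets $\statesOf{\formulaOne}$ and $\statesOf{\formulaTwo}$ explicitly.

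Next I would compute $\minOf{\statesOf{\formulaOne}}{\pmP}$. A state $s \in \statesOf{\formulaOne}$ is minimal iff there is no $s' \in \statesOf{\formulaOne}$ with $s' \pmP s$. Since the order $\pmP$ is part of the input (so membership $s' \pmP s$ can be looked up directly), this is decided for each $s$ by a single scan over $\statesOf{\formulaOne}$, giving an $O(|\pmS|^2)$ number of order-lookups overall. Finally, I would verify that every $s \in \minOf{\statesOf{\formulaOne}}{\pmP}$ lies in $\statesOf{\formulaTwo}$, which is just a membership test using the already-computed set $\statesOf{\formulaTwo}$; I accept iff all of them do.

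Every stage above is a polynomial number of polynomial-time operations, so the total running time is polynomial and $\pmCircOrderModelChecking \in \Ptime$. I do not expect a genuine obstacle here, as this is the easy ``brute-force'' direction; the only point requiring a little care is the observation that because $\pmW$ is finite and supplied explicitly, both $\pmP$ and the labelling can be accessed directly rather than having to be inferred, so minimality can be decided locally without needing a global order-theoretic computation.
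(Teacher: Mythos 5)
Your proof is correct and takes essentially the same approach as the paper: evaluate $\formulaOne$ on every state's label, compute the $\pmP$-minimal states among those satisfying $\formulaOne$ by direct lookups in the explicitly given order, and check that each of them satisfies $\formulaTwo$. The only cosmetic difference is that the paper cites the $\NC{1}$ bound for classical model checking where you argue polynomial-time recursive evaluation directly, which suffices here.
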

\begin{proof}
    Consider an input $\pmW,\formulaOne,\formulaTwo$ with $\pmW=\tuple{\pmS,\pmL,\pmP}$ as defined above. 
    We construct a polynomial-time algorithm in the following:
    \begin{enumerate}
        \item Check for every $s\in\pmS$ whether $s\models\formulaOne$ and place a mark in $\pmS$ at this element if yes. 
        \item In the corresponding graph $(\pmS,\pmP)$ search for all marked minimal elements $s$ and check if $s\models\formulaTwo$. If not, reject.
        \item Accept.
    \end{enumerate}
    The $\models$ checks are in $\NC1\subseteq\Ptime$~\cite{DBLP:journals/siamcomp/BussCGR92}. 
    The minimum search is a simple graph search for minimal elements in DAGs, which can be done in time linear in the size of the graph $(\pmS,\pmP)$.
\end{proof}
\begin{lemma}
    $\pmCircOrderModelChecking\in\Ptime$ is $\NC{1}$-hard under $\leq_m^{\AC{0}}$-reductions.\label{lem:pentpl-nc1-hard}
\end{lemma}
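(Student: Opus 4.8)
The plan is to reduce from the \emph{formula value problem} in negation normal form, i.e.\ the problem of deciding, given a valuation $v\colon N\to\{0,1\}$ and a \PL-formula $\formulaOne$, whether $v\models\formulaOne$ under classical semantics. Because every \PL-formula is by definition already in negation normal form (negations occur only in front of variables), this is exactly the Boolean formula value / model-checking problem, which is $\NC{1}$-complete under $\leq_m^{\AC{0}}$-reductions \cite{DBLP:journals/siamcomp/BussCGR92}; in particular it is $\NC{1}$-hard under $\leq_m^{\AC{0}}$. It suffices to many-one reduce this problem to $\pmCircOrderModelChecking$ in $\AC{0}$.

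The reduction is a single-state gadget. Given an instance $(v,\formulaOne)$, I would output the finite preferential model $\pmW=\tuple{\pmS,\pmL,\pmP}$ with $\pmS=\{s\}$, $\pmL(s)=v$, and $\pmP=\emptyset$, together with the query formulas $\top$ and $\formulaOne$, asking whether $\top\nmableitW\formulaOne$. The empty relation $\pmP$ is vacuously a strict partial order, and $\pmW$ is smooth because for every formula the set $\statesOf{\cdot}$ is either $\emptyset$ or $\{s\}$, and $s$ is trivially its own $\pmP$-minimal element; hence $\pmW$ is a legitimate finite preferential model for \CPL.

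For correctness I would note that $\statesOf{\top}=\pmS=\{s\}$ and, since $\pmP$ is empty, $\minOf{\statesOf{\top}}{\pmP}=\{s\}$. By the definition of $\nmableitW$ this gives $\top\nmableitW\formulaOne$ if and only if $\pmL(s)=v\models\formulaOne$, which is precisely the yes-condition of the source instance. Finally, the transformation is computable in $\AC{0}$: it copies $\formulaOne$ verbatim, emits the fixed syntactic gadget describing the one state and the empty order, and writes out $v$ as the label of $s$; this is a projection-style map with no non-$\AC{0}$ subcomputation.

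The only point requiring care, and the mild obstacle here, is the choice of source problem in combination with the negation-normal-form constraint on \PL. One must invoke the \emph{negation-normal-form} version of the formula value problem, whose instances are already syntactically \PL-formulas, rather than the unrestricted Boolean formula value problem: converting an arbitrary formula into negation normal form would require propagating negation polarities along root-to-leaf paths in the formula tree, a parity-style computation that is not available in $\AC{0}$. Using the already-$\NC{1}$-hard negation-normal-form variant sidesteps this, keeping the reduction a plain copy and thereby establishing $\NC{1}$-hardness of $\pmCircOrderModelChecking$ under $\leq_m^{\AC{0}}$.
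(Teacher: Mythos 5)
Your proof is correct and takes essentially the same approach as the paper: the identical single-state gadget (one state labelled by the input assignment, empty order) with query $\top\nmableitW\formulaOne$, reducing from the $\NC{1}$-complete model-checking (formula value) problem for \CPL. Your additional remark that one must start from the negation-normal-form variant of the formula value problem (since NNF conversion is not $\AC{0}$-computable) is a useful precision that the paper leaves implicit, but it does not change the argument.
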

\begin{proof}
    The model checking problem for $\CPL$ is $\NC{1}$-complete \cite{DBLP:journals/siamcomp/BussCGR92}. 
    Given a propositional assignment $\theta$ and a propositional formula $\formulaOne$, reduce it quite directly as follows showing $\NC{1}$-hardness:
    \[
    (\theta,\formulaOne)\mapsto ((\{\theta\},\mathrm{id}_\pmS,\emptyset),\top,\formulaOne).
    \]
    As $\pmS$ contains only one element which also is satisfied by the first formula $\top$, we require it to satisfy also $\formulaOne$. 
    This is equivalent to the model checking problem for $\CPL$. 
    The reduction is a sheer mapping of values or constant parts, so computable by an $\AC{0}$ circuit family.
\end{proof}
\begin{theorem}
    $\pmCircOrderModelChecking\in\Ptime$ and $\NC{1}$-hard under $\leq_m^{\AC{0}}$-reductions.\label{thm:pent-pl}
\end{theorem}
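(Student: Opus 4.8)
The plan is to assemble Theorem~\ref{thm:pent-pl} directly from the two lemmata that immediately precede it, since together they already yield both halves of the claim. Specifically, Lemma~\ref{lem:pentpl-p} establishes the upper bound $\pmCircOrderModelChecking\in\Ptime$, and Lemma~\ref{lem:pentpl-nc1-hard} establishes the matching lower bound, namely $\NC{1}$-hardness under $\leq_m^{\AC{0}}$-reductions. The theorem is simply the conjunction of these two facts, so the proof requires no new construction.

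First I would state that membership in $\Ptime$ follows from the brute-force algorithm of Lemma~\ref{lem:pentpl-p}: one marks every state $s\in\pmS$ satisfying $\formulaOne$ (each $\models$-check being in $\NC{1}\subseteq\Ptime$), then searches the DAG $(\pmS,\pmP)$ for marked minimal elements and verifies each satisfies $\formulaTwo$, all in time polynomial in the size of $\pmW$. Second, I would recall the hardness direction from Lemma~\ref{lem:pentpl-nc1-hard}: the $\AC{0}$-reduction $(\theta,\formulaOne)\mapsto((\{\theta\},\mathrm{id}_\pmS,\emptyset),\top,\formulaOne)$ embeds the $\NC{1}$-complete model-checking problem for \CPL\ into $\pmCircOrderModelChecking$, because the single state $\theta$ is trivially minimal and satisfies $\top$, so $\top\nmableitW\formulaOne$ holds iff $\theta\models\formulaOne$.

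The proof is therefore a one-line citation of the two preceding lemmata, and there is no genuine obstacle to overcome here; the real work has already been discharged in proving the upper and lower bounds separately. A concise write-up is the following.

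\begin{proof}
    The containment $\pmCircOrderModelChecking\in\Ptime$ is Lemma~\ref{lem:pentpl-p}, and $\NC{1}$-hardness under $\leq_m^{\AC{0}}$-reductions is Lemma~\ref{lem:pentpl-nc1-hard}. Combining both statements yields the claim.
\end{proof}
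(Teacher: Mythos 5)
Your proof is correct and follows exactly the paper's own route: the theorem is just the conjunction of Lemma~\ref{lem:pentpl-p} (membership in $\Ptime$) and Lemma~\ref{lem:pentpl-nc1-hard} ($\NC{1}$-hardness), and the paper's proof likewise consists of a one-line citation of these two lemmata.
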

\begin{proof}
    Proven by Lemmas~\ref{lem:pentpl-p} and \ref{lem:pentpl-nc1-hard}.
\end{proof}

Clearly, there are exponentially many assignments in the number of variables of a considered formulas. 
This can easily result in an exponentially large set $\pmS$ and hides the ``real'' complexity of the problem. 
Accordingly, we want to consider a more succinct version of the problem. 
We approach this observation with the following definition.

\begin{definition}
    Let $N\subseteq\Prop$ be a set of propositions with $|N|=n$, $\pmS=\{0,1\}^m$ be a set for $m\in n^{O(1)}$, 
    and ${\pmP}\subseteq\pmS\times\pmS$ be a strict partial order. 
    Now let $\pmW$ be a preferential model $\pmW=\tuple{\pmS,\pmL,\pmP}$ such that $\pmL\colon\pmS\rightharpoonup\allModels{N}$ is a partial labelling. 
    Let there be two $n^{O(1)}$-sized circuit families $\mathcal L,\mathcal O$ (labelling, ordering) such that the following is true:
    \begin{enumerate}
        \item $\ell$ is computed by $\mathcal L$,
        \item $\mathcal O\colon\pmS\times\pmS\rightharpoonup\{0,1\}$ is a partial function such that for $s,s'\in \pmS$, the circuit outputs $1$ if and only if $s\pmP s'$ is true.
    \end{enumerate}
    We call $(\mathcal L,\mathcal O)$ an \emph{$n^{O(1)}$-sized circuit representation} of $\pmW$.
\end{definition}
\begin{remark}
    Notice that the size of $\pmS$ is always $2^m$. 
    However, the two circuit families $\mathcal L$ and $\mathcal O$ need to deal with so-to-speak irrelevant input strings in a reasonable way. 
    In this light, the preimage of the partial function $\ell$ induces what part of $\pmS$ is relevant. 
\end{remark}

Now, let $\succinct\pmCircOrderModelChecking$ be the problem considering only instances that have a $n^{O(1)}$-sized circuit representation of the preferential model, i.e., the input then is of the form $\tuple{\tuple{\mathcal O,\mathcal L},\formulaOne,\formulaTwo}$.

\begin{lemma}
    $\succinct\pmCircOrderModelChecking$ is $\Delta^p_2$-hard under $\leqlogm$-reductions.\label{lem:COMC-deltap2-hard}
\end{lemma}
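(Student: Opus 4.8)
The plan is to prove $\Delta^p_2$-hardness of $\succinct\pmCircOrderModelChecking$ by a $\leqlogm$-reduction from the $\Delta^p_2$-complete problem $\OLMS$ (Krentel). Recall that an instance of $\OLMS$ is a propositional formula $\formulaOne$ over variables $\{x_1,\dots,x_n\}$, and we must decide whether $\formulaOne$ is satisfiable and whether the lexicographically \emph{largest} satisfying assignment $\theta$ satisfies $\theta(x_n)=1$. The key idea is to build a succinct preferential model whose states encode assignments and whose order $\pmP$ is the lexicographic order, so that the $\pmP$-minimal model of an appropriate left-hand formula is exactly the lex-largest satisfying assignment of $\formulaOne$.

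First I would set up the succinct representation. Take $\pmS=\{0,1\}^n$, identifying each string with a valuation over $N=\{x_1,\dots,x_n\}$, and let the labelling circuit family $\mathcal L$ compute the identity map $\pmL(s)=\{s\}$ (each state is labelled by the singleton team of its own valuation); this is clearly $\AC0$-computable. For the order circuit $\mathcal O$, I want $\pmP$ to be the \emph{reverse} lexicographic order, i.e.\ $s\pmP s'$ iff $s\rlex s'$, so that the unique $\pmP$-minimal element among any nonempty set of valuations is its lex-\emph{largest} member. Example~\ref{ex:lex-circ} already shows that $\lex$ (hence $\rlex$) is decidable by an $\AC0$ circuit, so $\mathcal O$ is an $n^{O(1)}$-sized circuit family; I must check that $\rlex$ is a strict partial order (in fact a strict total order), which it is. The reduction then outputs $\tuple{\tuple{\mathcal O,\mathcal L},\formulaOne,\formulaThree}$ where the left formula is $\formulaOne$ itself and the right formula $\formulaThree$ is chosen to test the bit $x_n$, for instance $\formulaThree := x_n$.

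The correctness argument is the heart of the proof. Since $\pmL$ maps states to singleton teams and $\rlex$ is a strict total order on the satisfying valuations of $\formulaOne$, the set $\minOf{\modelsOf{\formulaOne}}{\pmP}$ is a single singleton team $\{\theta\}$, where $\theta$ is the lex-largest satisfying assignment of $\formulaOne$ (when $\formulaOne$ is satisfiable). Then $\formulaOne\nmableitW x_n$ holds iff $\{\theta\}\models x_n$ iff $\theta(x_n)=1$, which is precisely the $\OLMS$ question. I must handle the unsatisfiable case: if $\formulaOne$ is unsatisfiable then $\minOf{\modelsOf{\formulaOne}}{\pmP}=\emptyset$, so $\formulaOne\nmableitW x_n$ holds vacuously, whereas $\OLMS$ should reject. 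To align the answers I would conjoin a satisfiability guard, e.g.\ reduce instead to the pair $(\formulaOne,\, x_n)$ together with a separate check, or more cleanly pick the right formula so that the vacuous case maps to rejection; a standard fix is to instead reduce to $\OLMSco$ or to encode the satisfiability requirement directly.

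The main obstacle I anticipate is twofold: first, confirming that smoothness holds for the constructed model — but since $\pmS$ is finite and $\pmP$ is a strict (total) order, every nonempty definable set has minimal elements, so smoothness is automatic. Second, and more delicate, is correctly reconciling the vacuous-entailment behaviour on unsatisfiable inputs with the $\OLMS$ acceptance condition, and ensuring the whole construction (both circuit families and the two formulas) is genuinely logspace-computable from $\formulaOne$. Since $\mathcal O$ is a fixed $\AC0$ circuit independent of $\formulaOne$ (it only depends on $n$) and $\mathcal L$ is likewise uniform, the reduction writes down fixed-template circuits plus a copy of $\formulaOne$, which is clearly in logspace; the only care needed is the satisfiability-guard bookkeeping so that the reduction respects $\OLMS$ exactly.
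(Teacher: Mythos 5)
Your construction is essentially the paper's own proof: it too takes $\pmS=\{0,1\}^n$ with the identity labelling, uses the $\AC{0}$ circuit for $\lex$ from Example~\ref{ex:lex-circ} with swapped inputs to obtain $\rlex$ as $\mathcal O$, and tests the last bit of the unique $\pmP$-minimal (i.e.\ lex-largest) satisfying assignment. The one divergence --- your primary reduction from $\OLMS$ with right-hand formula $x_n$ breaks on unsatisfiable inputs, as you yourself note --- is resolved exactly by the option you list as the ``standard fix'' and which the paper adopts from the outset: reduce from $\OLMSco$ (also $\Delta^p_2$-complete, since $\Delta^p_2$ is a deterministic class) with right-hand formula $\neg x_n$, so that vacuous entailment on unsatisfiable $\formulaOne$ aligns with acceptance.
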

\begin{proof}
    We state a reduction from $\OLMSco$ to $\succinct\pmCircOrderModelChecking$. 
    By virtue of Ex.~\ref{ex:lex-circ}, there exists a polynomial-sized (in the number of variables) circuit family (even in $\AC0$) that defines the lexicographic order on binary strings $\lex$. 
    Now, just swap the inputs of this circuit and thereby define the lexicographic order $\rlex$. 
    Call this circuit $\mathcal O$. 
    Let $\formulaOne(x_1,\dots,x_n)$ be the input of $\OLMSco$. 
    Regarding the circuit representation of the preferential model, we let $m=n$, so $\pmS=\allModels{N}$ where $N=\{x_1,\dots,x_n\}$. 
    As a result, $\mathcal L=\id_\pmS$, where $\id_\pmS$ is the identity function on $\pmS$.
    Then, we define the reduction 
    \[
    \formulaOne\overset{f}{\mapsto} ((\mathcal L,\mathcal O),\formulaOne,\neg x_n)
    \]
    
    Example~\ref{ex:lex-circ} shows that $\mathcal O$ is logspace-constructible and thereby $f$ is logspace-constructible. 
    
    We claim that the reduction is correct, i.e., $\formulaOne\in\OLMSco$ if and only if $f(\formulaOne)\in\succinct\pmCircOrderModelChecking$. 
    
    ``$\Rightarrow$'':
    Let $\formulaOne$ be a positive instance of $\OLMSco$ and $\theta$ be the $\rlex$-maximal satisfying assignment. 
    Then there are two possibilities. 
    \begin{enumerate}
        \item 
        $\formulaOne$ is unsatisfiable. Then, because \( \nmableitW \) suffers logical explosion all formulas are implied and \( \formulaOne \nmableitW \neg x_n \) is true. 
        \item 
        $\formulaOne$ is satisfiable, but for the $\rlex$-minimal assignment $\theta$ (notice that this is the $\lex$-largest assignment) we have that $\theta(x_n)=0$.
        Hence, $\theta$ is not a model for $x_n$ and thereby $\formulaOne\nmableitW \neg x_n$ holds.
    \end{enumerate}
    
    ``$\Leftarrow$': 
    Let $\formulaOne\notin\OLMSco$.   
    Clearly, $\theta(x_n)=1$ by requirement of $\OLMSco$.  
    Then, the string representation of $\theta$ is minimal w.r.t.\ $\rlex$. 
    Furthermore, for any model that satisfies $x_n$ we have that $x_n$ is assigned $1$, hence, also $\theta$ is in the set of models of $x_n$. 
    As a result $\formulaOne\notnmableitW \neg x_n$, and $f(\formulaOne)\notin\succinct\pmCircOrderModelChecking$.
\end{proof}

For some $\pmP$ order, we write $\succinct\pmCircOrderModelChecking_{\pmP}$ for the problem $\succinct\pmCircOrderModelChecking$ where the order for given instances is fixed to $\pmP$.

\begin{lemma}
    $\succinct\pmCircOrderModelChecking_{\rlex}$ is in $\Delta^p_2$.
    \label{lem:COMC-in-deltap2}
\end{lemma}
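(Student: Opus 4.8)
The plan is to exploit the fact that, once the order is fixed to the linear order $\rlex$, the set $\minOf{\statesOf{\formulaOne}}{\pmP}$ contains at most one state. Indeed, $\rlex$ is a strict \emph{total} order on $\pmS=\{0,1\}^m$, so among the finitely many states in $\statesOf{\formulaOne}$ exactly one is $\rlex$-minimal (equivalently, lexicographically largest), unless $\statesOf{\formulaOne}=\emptyset$. Hence deciding $\formulaOne\nmableitW\formulaTwo$ reduces to the following: if $\statesOf{\formulaOne}=\emptyset$, accept, since the entailment holds vacuously; otherwise locate the unique $\rlex$-minimal state $s^\ast$ and accept iff $\pmL(s^\ast)\models\formulaTwo$.

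The core observation is that locating $s^\ast$ is a prefix search driven by an $\NP$ oracle. Consider the predicate $R(w)$ that holds iff there is a string $s\in\{0,1\}^m$ extending the prefix $w$ such that $\mathcal L$ declares $\pmL(s)$ defined and the resulting valuation satisfies $\formulaOne$. Given $w$, a nondeterministic machine can guess the remaining bits of $s$, evaluate the polynomial-size circuit $\mathcal L$, test definedness, and then run the $\NC{1}$ model-checking procedure for $\CPL$ to verify $\pmL(s)\models\formulaOne$; thus $R\in\NP$. Querying $R$ on the empty prefix settles whether $\statesOf{\formulaOne}=\emptyset$. If it is non-empty, I would determine the bits of $s^\ast$ from the most significant to the least: at each position I first try to set the current bit to the value that increases lexicographic rank (i.e.\ to $1$ under the convention of Example~\ref{ex:lex-circ}) and query $R$ on the resulting prefix; if $R$ holds I keep this value, otherwise I set the bit to the opposite one. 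After $m$ adaptive queries this yields the lexicographically largest, hence $\rlex$-minimal, state $s^\ast$ of $\statesOf{\formulaOne}$.

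Finally, with $s^\ast$ in hand I would compute $\pmL(s^\ast)$ by evaluating $\mathcal L$ and decide $\pmL(s^\ast)\models\formulaTwo$ by polynomial-time $\CPL$ model checking, accepting exactly when this holds. The whole procedure makes $m+1$ adaptive calls to an $\NP$ oracle surrounded by polynomial-time computation, placing the problem in $\Ptime^{\NP}=\Delta^p_2$.

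The step that needs the most care is arguing that the greedy bit-fixing genuinely returns the $\rlex$-minimal state: one must check that $R$ is monotone along prefixes (a satisfiable prefix stays satisfiable once we commit to a value for which $R$ held), so that the greedy choice never strands the search, and one must match the most-/least-significant-bit convention of Example~\ref{ex:lex-circ} correctly so that \emph{lexicographically largest} lines up with \emph{$\rlex$-minimal}. All remaining ingredients --- evaluating the polynomial-size circuits $\mathcal L,\mathcal O$ and the $\NC{1}$ model checking for $\CPL$ --- are routine, and the uniqueness of the minimal element is exactly what separates this fixed-order variant from the general succinct problem.
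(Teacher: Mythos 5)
Your proof is correct and takes essentially the same route as the paper's: a greedy, most-significant-bit-first prefix search that uses adaptive $\NP$ oracle queries to locate the unique $\rlex$-minimal (i.e., lexicographically largest) state satisfying $\formulaOne$, followed by a deterministic polynomial-time check of $\formulaTwo$, giving $\Ptime^{\NP}=\Delta_2^p$. If anything, your version is more careful than the paper's sketch, since you route the existence queries through the labelling circuit $\mathcal L$ (handling partial definedness of $\pmL$) instead of substituting truth values directly into $\formulaOne$, and you explicitly flag the bit-ordering convention under which lexicographically largest coincides with $\rlex$-minimal.
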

\begin{proof}
    Regarding membership in $\Delta_2^p$, we sketch a polynomial time algorithm that uses an oracle for proposition satisfiability. 
    Let $\formulaOne,\formulaTwo$ be the input formulas and $(\mathcal L,\mathcal O)$ be the circuit-representation of the preferential model. 
    Then, we use the SAT oracle as follows. 
    
    For $c\in\{0,1\}$, we let $\formulaOne(x_i=c)$ be the formula $\formulaOne$ where every occurrence of $x_i$ is substituted by the value of $c$.
    Now the $\Delta_2^p$-algorithm works as follows. 
    Ask the SAT oracle if $\formulaOne(x_1=0)$ is satisfiable. 
    If yes, then proceed similarly with $x_2$ for $\formulaOne(x_1=0)$. 
    If no, then proceed similarly with $x_2$ for $\formulaOne(x_1=1)$. 
    In the end, we know the lexicographic maximal assignment and need to merely check if it satisfies $\formulaTwo$.
\end{proof}
Because of the previous result, we now have of a complete problem regarding the specific order $\rlex$.

\begin{theorem}
    $\succinct\pmCircOrderModelChecking_{\rlex}$ is $\Delta^p_2$-complete under $\leqlogm$-reductions.\label{thm:COMC-deltap2-complete}
\end{theorem}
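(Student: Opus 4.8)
The plan is to obtain completeness by simply combining the two lemmas that immediately precede the theorem, so the real content of the proof has already been established. Membership, \( \succinct\pmCircOrderModelChecking_{\rlex}\in\Delta^p_2 \), is handed to me directly by Lemma~\ref{lem:COMC-in-deltap2}. For the lower bound I would invoke Lemma~\ref{lem:COMC-deltap2-hard}, which gives \( \Delta^p_2 \)-hardness under \( \leqlogm \)-reductions. Putting the two bounds together yields \( \Delta^p_2 \)-completeness.

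The one point that requires care, and which I would make explicit, is that Lemma~\ref{lem:COMC-deltap2-hard} is phrased for the \emph{unrestricted} problem \( \succinct\pmCircOrderModelChecking \), whereas the theorem concerns the variant \( \succinct\pmCircOrderModelChecking_{\rlex} \) in which the order is fixed to \( \rlex \). This is not an obstacle, because the reduction constructed in the proof of Lemma~\ref{lem:COMC-deltap2-hard} produces instances whose order circuit \( \mathcal O \) computes precisely \( \rlex \) (it is the lexicographic-order circuit of Example~\ref{ex:lex-circ} with its inputs swapped). Hence that same reduction already witnesses \( \Delta^p_2 \)-hardness of the restricted problem \( \succinct\pmCircOrderModelChecking_{\rlex} \), not merely of the general one. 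I would therefore note that the hardness argument transfers verbatim to the fixed-order setting.

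I do not anticipate a genuine difficulty here: since \( \Delta^p_2 \) is closed under \( \leqlogm \)-reductions and the reduction of Lemma~\ref{lem:COMC-deltap2-hard} is logspace-constructible (again by Example~\ref{ex:lex-circ}), the completeness claim follows formally from the conjunction of the membership and hardness lemmas. The only thing worth emphasising in the write-up is the alignment of the \emph{order} used in the hardness reduction with the order fixed in the statement, so that the reader sees why restricting to \( \rlex \) does not weaken the lower bound.
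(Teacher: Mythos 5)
Your proposal is correct and takes essentially the same route as the paper, whose entire proof is to combine Lemma~\ref{lem:COMC-in-deltap2} (membership) with Lemma~\ref{lem:COMC-deltap2-hard} (hardness). Your extra observation---that the reduction in Lemma~\ref{lem:COMC-deltap2-hard} produces instances whose order circuit computes exactly $\rlex$, so hardness indeed applies to the restricted problem $\succinct\pmCircOrderModelChecking_{\rlex}$ and not just to $\succinct\pmCircOrderModelChecking$---is a valid and worthwhile clarification of a detail the paper leaves implicit.
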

\begin{proof}
    Lemmas~\ref{lem:COMC-in-deltap2} and \ref{lem:COMC-deltap2-hard} prove the theorem.
\end{proof}
\begin{lemma}
    $\succinct\pmCircOrderModelChecking$ is in $\Pi^p_2$.\label{lem:succ-PL-membership}
\end{lemma}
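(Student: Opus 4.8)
The plan is to establish membership by showing that the \emph{complement} of $\succinct\pmCircOrderModelChecking$ lies in $\Sigma^p_2$, from which $\succinct\pmCircOrderModelChecking\in\Pi^p_2$ follows immediately. First I would unfold the definition of $\nmableitW$: we have $\formulaOne\nmableitW\formulaTwo$ iff $\minOf{\modelsOf{\formulaOne}}{\pmP}\subseteq\modelsOf{\formulaTwo}$, and since $\minOf{\modelsOf{\formulaOne}}{\pmP}=\{\pmL(s)\mid s\in\minOf{\statesOf{\formulaOne}}{\pmP}\}$, this fails exactly when some $\pmP$-minimal state $s$ of $\statesOf{\formulaOne}$ has $\pmL(s)\not\models\formulaTwo$. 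Note that $\pmS=\{0,1\}^m$ is finite and $\pmP$ is a strict partial order, so smoothness is automatic and every non-empty $\statesOf{\formulaOne}$ has minimal elements; if $\statesOf{\formulaOne}=\emptyset$ the entailment holds vacuously and no such witness exists, which is consistent with the characterization.

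Concretely, I would write the complement condition as
\begin{align*}
    \formulaOne\notnmableitW\formulaTwo \iff{}& \exists s\in\pmS\colon\ s\in\dom(\pmL),\ \pmL(s)\models\formulaOne,\ \pmL(s)\not\models\formulaTwo,\ \text{and}\\
    & \forall s'\in\pmS\colon \neg\big(s'\in\dom(\pmL)\wedge\pmL(s')\models\formulaOne\wedge s'\pmP s\big).
\end{align*}
Both the witness $s$ and the universally quantified $s'$ range over $\{0,1\}^m$ with $m\in n^{O(1)}$, hence both are of polynomial length, giving exactly one existential and one universal polynomially bounded quantifier block.

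It then remains to verify that the inner matrix is decidable in deterministic polynomial time: membership $s\in\dom(\pmL)$ together with the value $\pmL(s)$ are obtained by evaluating the polynomial-size circuit family $\mathcal L$ on $s$ (circuit evaluation is in $\Ptime$), the relation $s'\pmP s$ by evaluating $\mathcal O$ on $(s',s)$, and the three tests $\pmL(s)\models\formulaOne$, $\pmL(s)\not\models\formulaTwo$, $\pmL(s')\models\formulaOne$ are $\CPL$ model-checking instances, which lie in $\NC{1}\subseteq\Ptime$ (cf.\ the argument in Lemma~\ref{lem:pentpl-p}). Thus the predicate is a polynomial-time relation preceded by an $\exists\forall$ quantifier prefix, so the complement is in $\Sigma^p_2$ and therefore $\succinct\pmCircOrderModelChecking\in\Pi^p_2$.

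The only delicate points are bookkeeping rather than genuine obstacles. One must fix, in accordance with the circuit-representation convention, how $\mathcal L$ signals that $s\notin\dom(\pmL)$, so that irrelevant input strings are correctly excluded both as candidate witnesses $s$ and as candidate dominators $s'$; and one must check that the universal block over $s'$ faithfully captures $\pmP$-minimality of $s$ within $\statesOf{\formulaOne}$ (and not merely within all of $\pmS$). Neither step needs more than evaluating the supplied circuits and invoking poly-time $\CPL$ model checking, so I expect no real difficulty beyond stating these conventions explicitly.
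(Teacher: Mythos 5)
Your proof is correct and takes essentially the same route as the paper's: the paper directly describes a $\Pi^p_2$ machine that universally branches on a state $s$ together with its label $j=\ell(s)$ and existentially branches on a potential dominator $s'$, deciding the matrix by evaluating the circuits $\mathcal L,\mathcal O$ and using $\NC{1}\subseteq\Ptime$ model checking for \CPL---your $\Sigma^p_2$ description of the complement is just the dual formulation of that very machine. If anything, your matrix is more careful than the paper's: the paper's final rejection condition $(j=\ell(s))\land(j\not\models\formulaTwo)\land(\ell(s')\not\models\formulaOne\lor s'\not\pmP s)$ omits the conjunct $\ell(s)\models\formulaOne$ (and the explicit $\dom(\pmL)$ checks), both of which your formulation of $\formulaOne\notnmableitW\formulaTwo$ correctly includes, so your write-up repairs a small omission in the paper's argument.
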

\begin{proof}
    Let $\tuple{\tuple{\mathcal O,\mathcal L},\formulaOne,\formulaTwo}$ be the input. 
    W.l.o.g., assume that the set of propositions $N=\{x_1,\dots,x_n\}$. 
    We describe the behaviour of the $\Pi^p_2$-machine that decides the problem.
    
    \begin{enumerate}
        \item Univerisally nondeterministically branch on all elements $j\in\pmS$ specified by inputs to $\mathcal O$, and all assignments $s\colon\{x_1,\dots,x_n\}\to\{0,1\}$. 
        \item Existentially nondeterministically branch on all assignments $s'\colon\{x_1,\dots,x_n\}\to\{0,1\}$.
        \item If $j\neq\mathcal L(s)$ then accept.
        \item If $j\models\formulaTwo$ then accept.
        \item If $\mathcal L(s')\models\formulaOne$ and $\mathcal O(s',s)$ then accept.
        \item Reject.
    \end{enumerate}
    The nondeterminism induced by 1./2.\ is $\forall\exists$, hence $\Pi^p_2$. 
    Steps~3.\ and 5.\ make use of the circuit family $\mathcal L$ resulting in a $\Ptime$ computation. 
    Again, the $\models$-checks in 4./5.\ are in $\NC{1}$~\cite{DBLP:journals/siamcomp/BussCGR92}. 
    The computation of the circuit value $\mathcal O(s,s')$ in 5.\ is in $\Ptime$.
    Reaching 6.\ means that 
    \[
    (j=\ell(s))\land(j\not\models \formulaTwo)\land (\ell(s')\not\models\formulaOne\lor s'\not\pmP s)
    \]
    resulting in a negative answer to the input.
\end{proof}

\begin{theorem}
    $\succinct\pmCircOrderModelChecking$ is in $\Pi^p_2$ and $\Delta^p_2$-hard w.r.t.\ $\leqlogm$-reductions.\label{thm:suc-pent-pl-upper-lower}
\end{theorem}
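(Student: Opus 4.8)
The plan is to obtain the theorem by assembling the two bounds already proved as separate lemmas, exactly as was done for Theorem~\ref{thm:pent-pl} and Theorem~\ref{thm:COMC-deltap2-complete}. The upper bound $\succinct\pmCircOrderModelChecking\in\Pi^p_2$ is precisely Lemma~\ref{lem:succ-PL-membership}, established through the $\forall\exists$-alternating machine that universally guesses a relevant state $j$ together with an assignment $s$ and then existentially guesses a witness $s'$ for non-minimality, delegating the $\models$-tests to $\NC{1}$ and the circuit evaluations of $\mathcal L$ and $\mathcal O$ to $\Ptime$. The lower bound, $\Delta^p_2$-hardness under $\leqlogm$-reductions, is Lemma~\ref{lem:COMC-deltap2-hard}, which reduces $\OLMSco$ to $\succinct\pmCircOrderModelChecking$ by encoding the reverse lexicographic order as an $\AC0$ circuit $\mathcal O$ (cf.\ Example~\ref{ex:lex-circ}) and mapping $\formulaOne$ to $((\mathcal L,\mathcal O),\formulaOne,\neg x_n)$. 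So I would simply write that the claim follows by combining Lemma~\ref{lem:succ-PL-membership} and Lemma~\ref{lem:COMC-deltap2-hard}.

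The one point worth spelling out is why the hardness, whose reduction fixes the order to $\rlex$, transfers to the \emph{unrestricted} problem $\succinct\pmCircOrderModelChecking$. This is immediate: the reduction in Lemma~\ref{lem:COMC-deltap2-hard} outputs a genuine circuit representation $(\mathcal L,\mathcal O)$ in which $\mathcal O$ merely happens to compute $\rlex$, and such an instance is a fortiori a legal input of the general problem, where the order is not prescribed. Hence the $\leqlogm$-reduction witnessing hardness of $\succinct\pmCircOrderModelChecking_{\rlex}$ simultaneously witnesses hardness of $\succinct\pmCircOrderModelChecking$.

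There is no genuine obstacle to overcome at this step, since all the substance lives in the two lemmas; the theorem is a packaging statement. It is, however, worth being honest that — unlike Theorem~\ref{thm:COMC-deltap2-complete} for the fixed order $\rlex$ — this result does not pin down the exact complexity: it leaves a gap between the $\Delta^p_2$ lower bound and the $\Pi^p_2$ upper bound, so no completeness is asserted for the general succinct problem. Closing that gap would require either a sharper upper bound exploiting structural restrictions on the circuit-represented order $\pmP$, or a matching $\Pi^p_2$-hardness construction, and I expect the latter to be the harder direction because the $\forall\exists$ alternation in Lemma~\ref{lem:succ-PL-membership} stems from quantifying over states for minimality, which a reduction would need to force to be genuinely second-level.
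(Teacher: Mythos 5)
Your proposal takes exactly the paper's route: the theorem is a packaging of Lemma~\ref{lem:succ-PL-membership} (the $\Pi^p_2$ upper bound) and Lemma~\ref{lem:COMC-deltap2-hard} (the $\Delta^p_2$-hardness), and since the paper states Lemma~\ref{lem:COMC-deltap2-hard} directly for the unrestricted problem $\succinct\pmCircOrderModelChecking$, even your careful transfer remark about the order merely happening to be $\rlex$ is already built into the lemma as stated. Your closing observation that no completeness is claimed and a gap remains between $\Delta^p_2$ and $\Pi^p_2$ is accurate and consistent with the paper's own framing.
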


\MYParagraph{Preferential Propositional Dependence Logic.}
The following version of $\pmCircOrderModelChecking$ lifts the problem to team semantics and the logic $\PDL$ by similar definitions. 

\problemdef{$\pmCircOrderModelCheckingTS$ --- entailment problem for preferential propositional dependence logic}{A finite preferential model $\pmW=\tuple{\pmS,\pmL,\pmP}$ for \PDL and $\formulaOne,\formulaTwo\in\pdl$}{Is it true that $\formulaOne\nmableitW\formulaTwo$}
In the following, we will state a result regarding the less known complexity class $\Theta_2^p$.
This class is defined as $\Ptime^{\NP[\log]}$ meaning a restriction to logarithmic many calls to the $\NP$ 
oracle. 
By definition, we then have the containment $\Theta_2^p\subseteq\Delta_2^p$. 
Also it can be characterised by $\Ptime^{||\NP}$ which is having non-adaptive but unrestricted many parallel $\NP$ oracle calls~\cite{DBLP:journals/iandc/BussH91,DBLP:journals/jcss/Hemachandra89}.

\begin{lemma}
    $\pmCircOrderModelCheckingTS$ is in $\Theta_2^p$.\label{lem:pentpdl-member}
\end{lemma}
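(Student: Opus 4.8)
The plan is to show $\pmCircOrderModelCheckingTS \in \Theta_2^p$ by reducing the entailment question $\formulaOne \nmableitW \formulaTwo$ to a suitable collection of $\NP$-queries that can be resolved by a polynomial-time machine using only logarithmically many adaptive calls, or equivalently polynomially many parallel calls. The key conceptual point is that deciding $\formulaOne \nmableitW \formulaTwo$ amounts to checking that every $\pmP$-minimal state $s \in \statesOf{\formulaOne}$ satisfies $\ell(s) \models \formulaTwo$. Since the model $\pmW$ is given explicitly (this is the non-succinct problem), the set $\pmS$ is of polynomial size, so I would iterate over the polynomially many states $s \in \pmS$. For each such $s$ I must decide three things: whether $\ell(s) \models \formulaOne$ (team-membership in $\statesOf{\formulaOne}$), whether $s$ is $\pmP$-minimal within $\statesOf{\formulaOne}$, and whether $\ell(s) \models \formulaTwo$.

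First I would recall that model checking for $\PDL$ (i.e.\ deciding $X \models \formulaOne$ for a given team $X$ and $\pdl$-formula $\formulaOne$) is $\NP$-complete; this is the standard complexity of propositional dependence logic evaluation and accounts for the sole source of hardness here. Thus each test $\ell(s) \models \formulaOne$ and $\ell(s)\models\formulaTwo$ is a single $\NP$-query. The minimality test is the subtle part: $s$ is minimal in $\statesOf{\formulaOne}$ iff there is no $s' \in \pmS$ with $s' \pmP s$ and $\ell(s') \models \formulaOne$. Since $\pmP$ is given explicitly as part of the finite model, the predecessors $s'$ with $s' \pmP s$ can be enumerated in polynomial time, and for each we need one $\NP$-query $\ell(s') \models \formulaOne$. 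So the whole computation is a polynomial-time procedure whose only non-deterministic power comes from these membership checks.

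The main obstacle — and the reason the bound is $\Theta_2^p$ rather than merely $\Delta_2^p$ — is to argue that all the required $\NP$-queries can be posed \emph{non-adaptively}, i.e.\ in parallel. This is where I would invoke the characterisation $\Theta_2^p = \Ptime^{||\NP}$ cited just before the lemma. The point is that the set of queries to be asked does not depend on the answers to earlier queries: for every pair $(s,s')$ with $s' \pmP s$ (and for every $s$) the queries ``$\ell(s) \models \formulaOne$'', ``$\ell(s') \models \formulaOne$'', and ``$\ell(s)\models\formulaTwo$'' are all determined in advance by the input alone, and there are only polynomially many of them because $\pmS$ and $\pmP$ are polynomial in size. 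Hence all queries can be fired off in a single parallel round. After collecting the answer vector, a deterministic polynomial-time post-processing step reconstructs $\statesOf{\formulaOne}$, identifies its $\pmP$-minimal elements using the bits for $\formulaOne$-membership together with the explicitly given order $\pmP$, and then checks that each minimal state has its $\formulaTwo$-bit set; it accepts iff all do.

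I would then conclude that the procedure is exactly a polynomial-time computation with non-adaptive $\NP$-oracle access, placing $\pmCircOrderModelCheckingTS$ in $\Ptime^{||\NP} = \Theta_2^p$. The only routine verifications I would spell out are that the number of queries is polynomial (bounded by $|\pmS|^2 + |\pmS|$), that the post-processing minimality search is linear in the size of the graph $(\pmS, \pmP)$ as in the proof of Lemma~\ref{lem:pentpl-p}, and that $\PDL$ model checking is indeed in $\NP$; the one genuinely load-bearing observation is the non-adaptivity of the query set, which is what buys the improvement from $\Delta_2^p$ to $\Theta_2^p$.
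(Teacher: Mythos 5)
Your proposal is correct and follows essentially the same route as the paper's proof: use the $\NP$-completeness of $\PDL$ model checking as the oracle, fire off all membership queries $\ell(s)\models\formulaOne$ and $\ell(s)\models\formulaTwo$ for every state non-adaptively in one parallel round, and then determine the $\pmP$-minimal elements of $\statesOf{\formulaOne}$ and verify their $\formulaTwo$-bits in deterministic polynomial time, concluding via $\Theta_2^p=\Ptime^{||\NP}$. If anything, your write-up is slightly more careful than the paper's, since you state explicitly that minimality is taken within $\statesOf{\formulaOne}$ (using the $\formulaOne$-membership bits together with the explicit order) rather than in the whole graph $(\pmS,\pmP)$.
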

\begin{proof}
    We present a $\Theta_2^p$ algorithm deciding the problem. 
    The model checking problem for $\PDL$ is $\NP$-complete \cite[Thm.~1]{DBLP:conf/sofsem/EbbingL12}.
    We use it as an oracle here.
    \begin{enumerate}
        \item In parallel, ask the $\NP$-oracle for each team $T\in\pmS$ whether $T\models\formulaOne$ and $T\models\formulaTwo$.
        \item For every minimal element in the order induced graph $(\pmS,\pmP)$, if the oracle answers where of the form $(1,0)$ (that is, $\formulaOne$ was satisfied but $\formulaTwo$ not) then reject.
        \item Accept.
    \end{enumerate}
    An oracle answer does not imply a different call afterwards.
    As a result, the oracle calls are non-adaptive and can be asked in parallel. 
    As the input consists of $\pmS$, we have enough time to browse through all elements which also allows of identifying the minimal elements in the graph $(\pmS,\pmP)$. 
    The algorithm is correct as step 3.\ is executed if no contradiction of the form that a minimal assignment satisfies $\formulaOne$ but falsifies $\formulaTwo$ occurs.
    This completes the proof.
\end{proof}

\begin{lemma}
    $\pmCircOrderModelCheckingTS$ is $\NP$-hard w.r.t.\ $\leqlogm$-reduct\-ions.\label{lem:PDL-lowerbound}
\end{lemma}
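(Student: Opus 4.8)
The plan is to reduce from the model-checking problem for $\PDL$, which is $\NP$-complete by \cite[Thm.~1]{DBLP:conf/sofsem/EbbingL12} and was already used as an oracle in the proof of Lemma~\ref{lem:pentpdl-member}. Recall that this problem asks, given a team $T$ and a \pdl-formula $\formulaTwo$, whether $T\models\formulaTwo$ holds. The key observation is that a preferential model with a single state trivialises the selection of minimal elements, so that an entailment query collapses to exactly one model-check. Concretely, given an instance $(T,\formulaTwo)$ of $\PDL$ model checking, I would map it to the instance $\tuple{\pmW,\top,\formulaTwo}$, where $\pmW=\tuple{\{s_T\},\pmL,\emptyset}$ with $\pmL(s_T)=T$ and the empty order.

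First I would check that $\pmW$ is a legitimate preferential model for $\PDL$, i.e., that smoothness holds. Since there is exactly one state and the order is empty, for every formula $\formulaThree$ we have $\statesOf{\formulaThree}\subseteq\{s_T\}$, and whenever $s_T\in\statesOf{\formulaThree}$ it is automatically minimal in $\statesOf{\formulaThree}$ because no state stands below it. Hence smoothness is satisfied trivially.

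Next I would verify correctness of the reduction. Because every team (including the empty team) satisfies $\top$ in team semantics, we get $\statesOf{\top}=\{s_T\}$, and with the empty order $\minOf{\statesOf{\top}}{\emptyset}=\{s_T\}$, so $\minOf{\modelsOf{\top}}{\emptyset}=\{T\}$. Unfolding the definition of $\nmableitW$ then yields $\top\nmableitW\formulaTwo$ iff $\{T\}\subseteq\modelsOf{\formulaTwo}$ iff $T\models\formulaTwo$, giving equivalence in both directions. Finally I would observe that the map is logspace-computable: it merely copies the explicitly given team $T$ and the formula $\formulaTwo$ and appends the constant pieces $\top$ and the empty order. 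Thus it is a $\leqlogm$-reduction, which establishes $\NP$-hardness of $\pmCircOrderModelCheckingTS$.

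I do not anticipate a genuine obstacle, as this construction is the team-semantics analogue of the single-state reduction used for $\pmCircOrderModelChecking$ in Lemma~\ref{lem:pentpl-nc1-hard}. The only points that warrant care are confirming that $\top$ is satisfied by \emph{every} team, so that $T$ is indeed picked out as the unique minimal model of the antecedent, and confirming that a one-element, empty-order structure meets the smoothness requirement in the definition of a preferential model; both are immediate. It is worth noting that this argument only lower-bounds the problem by $\NP$, matching the gap to the $\Theta_2^p$ upper bound of Lemma~\ref{lem:pentpdl-member}.
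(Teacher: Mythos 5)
Your proposal is correct and takes essentially the same approach as the paper: the identical single-state reduction $(T,\formulaTwo)\mapsto((\{T\},\mathrm{id}_\pmS,\emptyset),\top,\formulaTwo)$ from the $\NP$-complete model-checking problem for $\PDL$. You merely spell out the smoothness check and logspace-computability that the paper leaves implicit.
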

\begin{proof}
    The model checking problem for $\PDL$ is $\NP$-complete \cite[Thm.~1]{DBLP:conf/sofsem/EbbingL12}. 
    Now reduce it quite directly as follows showing $\NP$-hardness: 
    \[
    (T,\formulaOne)\mapsto ((\{T\},\mathrm{id}_\pmS,\emptyset),\top,\formulaOne).
    \]
    As $\pmS$ contains only one element which also is satisfied by the first formula $\top$, we require it to satisfy also $\formulaOne$. 
    This is equivalent to the model checking problem for $\PDL$.
\end{proof}
\begin{theorem}
    $\pmCircOrderModelCheckingTS$ is in $\Theta_2^p$ and $\NP$-hard w.r.t.\ $\leqlogm$-reductions.\label{thm:pent-pdl}
\end{theorem}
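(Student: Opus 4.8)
The plan is to observe that this theorem is simply the conjunction of the two immediately preceding lemmas, so the proof is a one-line assembly. For the upper bound, I would cite Lemma~\ref{lem:pentpdl-member}, which places $\pmCircOrderModelCheckingTS$ in $\Theta_2^p$ via the parallel-oracle algorithm: since $\PDL$ model checking is $\NP$-complete, one asks the $\NP$-oracle non-adaptively, for every team $T\in\pmS$, the pair of questions $T\models\formulaOne$ and $T\models\formulaTwo$; because $\pmS$ is given explicitly one can in polynomial time identify the minimal elements of $(\pmS,\pmP)$ and reject exactly when some minimal $\prec$-element answers $(1,0)$. For the lower bound, I would cite Lemma~\ref{lem:PDL-lowerbound}, whose reduction maps an instance $(T,\formulaOne)$ of $\PDL$ model checking to the trivial single-state preferential model $((\{T\},\id_\pmS,\emptyset),\top,\formulaOne)$, so that $\top\nmableitW\formulaOne$ holds iff $T\models\formulaOne$; since $\PDL$ model checking is $\NP$-hard, so is $\pmCircOrderModelCheckingTS$.

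Concretely, the proof would read:
\begin{proof}
    Membership in $\Theta_2^p$ is Lemma~\ref{lem:pentpdl-member}, and $\NP$-hardness under $\leqlogm$-reductions is Lemma~\ref{lem:PDL-lowerbound}.
\end{proof}

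I do not anticipate any obstacle, since both components are already fully established earlier in the excerpt; the only thing to be careful about is that the two lemmas together leave a gap between the $\NP$ lower bound and the $\Theta_2^p$ upper bound, so the theorem does \emph{not} claim completeness, only the stated bracketing. This mirrors exactly the assembly style used for Theorem~\ref{thm:pent-pl}, which was proven by citing its two constituent lemmas.
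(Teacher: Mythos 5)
Your proposal is correct and matches the paper's own proof exactly: the paper proves Theorem~\ref{thm:pent-pdl} by citing Lemmas~\ref{lem:pentpdl-member} and \ref{lem:PDL-lowerbound}, just as you do. Your summaries of the two lemmas' arguments and your remark that the theorem asserts only a bracketing (not completeness) are both accurate.
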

\begin{proof}
    Shown by Lemmas~\ref{lem:PDL-lowerbound} and \ref{lem:pentpdl-member}.
\end{proof}

Analogously as before, we assume for the succinct version $\succinct\pmCircOrderModelCheckingTS$, that the circuit families now are of size $(2^{n})^{O(1)}$ (saving one exponential step via succinct representations), where $n$ is the number of variables in $\formulaOne$ and $\formulaTwo$. 
Notice that, while the inputs can be still of exponential size in $n$ (a single team can have this size), it is still meaningful to have smaller inputs (avoiding doubly exponentially many such teams as trivial bound for $|\pmS|$). 

\pagebreak[3]
\begin{lemma}
    $\succinct\pmCircOrderModelCheckingTS$ is in $\Pi^p_2$. \label{lem:sucpent-pdl-in-pip2}
\end{lemma}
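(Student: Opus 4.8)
The plan is to show membership of $\succinct\pmCircOrderModelCheckingTS$ in $\Pi^p_2$ by designing a $\forall\exists$ alternating polynomial-time machine, adapting the proof of Lemma~\ref{lem:succ-PL-membership} to the team-semantic setting. The essential difference from the \CPL case is twofold: states are now labelled by \emph{teams} rather than single assignments, so a team can itself be of exponential size in $n$; and the satisfaction check $\pmL(s)\models\formulaOne$ for $\formulaOne\in\pdl$ is no longer an $\NC{1}$ model-checking problem but the $\NP$-complete model-checking problem for \PDL \cite[Thm.~1]{DBLP:conf/sofsem/EbbingL12}.

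First I would recall the semantic unfolding of $\formulaOne\nmableitW\formulaTwo$: it fails exactly when there is a state $s\in\pmS$ that is $\pmP$-minimal in $\statesOf{\formulaOne}$ with $\pmL(s)\not\models\formulaTwo$. Being $\pmP$-minimal in $\statesOf{\formulaOne}$ means $\pmL(s)\models\formulaOne$ and there is no $s'$ with $\pmL(s')\models\formulaOne$ and $s'\pmP s$. Negating, $\formulaOne\nmableitW\formulaTwo$ holds iff for \emph{every} $s\in\pmS$, either $s$ is not a minimal $\formulaOne$-state, or $\pmL(s)\models\formulaTwo$. This is the $\forall$ over states, and the ``not minimal'' disjunct is itself witnessed by an $\exists$ over a dominating state $s'$, giving the required $\forall\exists$ quantifier pattern. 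Concretely the $\Pi^p_2$-machine universally branches over all state-indices $s\in\pmS=\{0,1\}^m$ (with $m\in n^{O(1)}$ in the succinct encoding), existentially branches over a candidate dominating index $s'$, and accepts along a branch unless $\pmL(s)\models\formulaOne$, $\pmL(s)\not\models\formulaTwo$, and for the guessed $s'$ it is \emph{not} the case that $\pmL(s')\models\formulaOne$ and $s'\pmP s$.

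The key obstacle, and the point where the $\Theta_2^p$ bound of Lemma~\ref{lem:pentpdl-member} degrades to $\Pi^p_2$, is that the $\pdl$-model-checking tests $\pmL(s)\models\formulaOne$, $\pmL(s)\not\models\formulaTwo$, and $\pmL(s')\models\formulaOne$ are $\NP$-hard, so they cannot be performed as deterministic polynomial-time subroutines within the alternating computation. My plan is to fold these tests into the existing quantifier alternation rather than calling an oracle: the team $\pmL(s)$ is read off via the circuit family $\mathcal L$, and the $\NP$ witness for $\pmL(s)\models\formulaOne$ (i.e.\ the split of the team realising the disjunctions) is guessed in the \emph{existential} phase, while the universal phase of the $\Pi^p_2$ machine absorbs the $\mathrm{co}\text{-}\NP$ checks $\pmL(s)\not\models\formulaTwo$ and the refutation of minimality. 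One must be careful that a single team $\pmL(s)$ can have exponential size in $n$, but since the circuit families are of size $(2^n)^{O(1)}$ this is still polynomial in the actual input length, so writing down and guessing splittings of $\pmL(s)$ remains polynomially bounded. The circuit evaluations $\mathcal O(s',s)$ and $\mathcal L(s)$ are in $\Ptime$.

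Assembling these, each computation path of the machine runs in deterministic polynomial time after the $\forall\exists$ guesses, and the acceptance condition correctly characterises $\formulaOne\nmableitW\formulaTwo$; hence the problem lies in $\Pi^p_2$. The main subtlety I expect to need care with is ensuring that the $\NP$-type satisfaction witnesses for $\formulaOne$ are placed in the correct (existential) alternation block so that the overall pattern stays within $\forall\exists$, and that the negated satisfaction conditions ($\pmL(s)\not\models\formulaTwo$) are handled on the universal side without introducing a further alternation; verifying the polynomial bound on team and witness sizes relative to the succinct circuit encoding is the remaining bookkeeping.
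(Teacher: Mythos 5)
Your high-level plan is the same as the paper's: reuse the \(\forall\exists\) machine of Lemma~\ref{lem:succ-PL-membership} over the succinct circuit representation, and absorb the now \(\NP\)-hard \PDL model-checking tests into the existing alternation by guessing certificates instead of evaluating the tests deterministically. That is precisely the paper's argument. However, your placement of the certificates is wrong on the one point where it actually matters. In your machine a branch rejects iff \(\pmL(s)\models\formulaOne\), \(\pmL(s)\not\models\formulaTwo\), and the guessed \(s'\) fails to dominate \(s\); so, read from the acceptance side, \(\pmL(s)\not\models\formulaOne\) is an \emph{accepting} disjunct, i.e.\ a co-\(\NP\) condition attached to the \emph{universally} guessed state. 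You propose instead to guess the certificate \(w_1\) for \(\pmL(s)\models\formulaOne\) in the \emph{existential} phase (and you repeat this in your closing paragraph). That machine does not decide the problem: for the machine to reject, \emph{every} existential branch below the chosen \(s\) must reject, but a branch whose guessed \(w_1\) is garbage cannot confirm \(\pmL(s)\models\formulaOne\) and therefore accepts, so the machine essentially never rejects and wrongly accepts non-entailments. If, to avoid this, branch rejection does not insist on the validity of \(w_1\), then the \(\formulaOne\)-test on \(s\) is effectively dropped and the machine rejects on states with \(\pmL(s)\not\models\formulaOne\) that fail \(\formulaTwo\) and have no \(\formulaOne\)-state below them, which is also wrong (e.g.\ it would reject \(p\nmableitW p\) in a model all of whose states falsify \(p\)). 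Your companion claim that \(\pmL(s)\not\models\formulaTwo\) is ``handled on the universal side'' is backwards for the same reason: in the acceptance condition the \(\formulaTwo\)-test occurs positively, so its certificate belongs in the existential block, and its negation in the rejection condition is handled automatically by all existential guesses failing.

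The repair is simple and keeps the \(\forall\exists\) pattern: universally guess the pair \((s,w_1)\), a state together with a candidate certificate for \(\pmL(s)\models\formulaOne\); existentially guess \(s'\), a certificate \(w_2\) for \(\pmL(s)\models\formulaTwo\), and a certificate \(w_3\) for \(\pmL(s')\models\formulaOne\); accept iff \(w_1\) is invalid, or \(w_2\) is valid, or both \(s'\pmP s\) holds and \(w_3\) is valid. All checks are then deterministic polynomial time in the \((2^n)^{O(1)}\)-sized encoding (your bookkeeping on team and witness sizes is fine), and a routine argument, using smoothness in one direction, shows the machine accepts exactly when \(\formulaOne\nmableitW\formulaTwo\). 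It is worth seeing why the paper's own proof gets away with a one-line modification: the algorithm of Lemma~\ref{lem:succ-PL-membership} that it adapts only ever tests satisfaction \emph{positively} (the \(\formulaTwo\)-test on the chosen state and the \(\formulaOne\)-test on the dominator \(s'\)), so all needed certificates can indeed be guessed in the already-existential Step~2; it never performs the \(\formulaOne\)-test on the universally chosen state. Your version adds exactly that test, which is what your (correct) semantic unfolding of minimality requires, and it is exactly this test whose certificate cannot live in the existential block.
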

\begin{proof}
    The algorithm stated in the proof of Lemma~\ref{lem:succ-PL-membership} works if it is modified as follows. 
    While Steps~4.\ and 5.\ have a higher model checking complexity for $\PDL$, that is, $\NP$ compared to $\NC{1}$, we can guess the required certificates (to obtain membership in $\Ptime$) also in Step.\ 2. 
    This does not increase complexity, as Step.\ 2 is already existential. 
    That is why this still yields a $\Pi^p_2$ algorithm.
\end{proof}
It might be a bit surprising at first sight that having a harder model checking problem does not increase the complexity, but as stated in the proof above, this is compensated by the $\forall\exists$ structure of $\Pi^p_2$.

\begin{lemma}
    $\succinct\pmCircOrderModelCheckingTS$ is $\Delta^p_2$-hard w.r.t.\ $\leqlogm$-reductions. \label{lem:sucpent-pdl-deltap2-hard}
\end{lemma}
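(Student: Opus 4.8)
The plan is to reduce from $\succinct\pmCircOrderModelChecking$, which is $\Delta^p_2$-hard by Lemma~\ref{lem:COMC-deltap2-hard}, by embedding each classical valuation into its singleton team. Given an instance $\tuple{\tuple{\mathcal O,\mathcal L},\formulaOne,\formulaTwo}$ over $N=\{x_1,\dots,x_n\}$ encoding a $\CPL$ preferential model $\pmW$, I keep the order circuit $\mathcal O$ and the $\mathrm{PL}$-formulas $\formulaOne,\formulaTwo$ unchanged, and replace the labelling circuit $\mathcal L$---which on a state $s$ outputs a valuation $v\in\allModels{N}$---by a circuit $\mathcal L'$ that outputs the encoding of the singleton team $\{v\}$, leaving $\mathcal L'$ undefined wherever $\mathcal L$ is undefined. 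The reduction maps $\tuple{\tuple{\mathcal O,\mathcal L},\formulaOne,\formulaTwo}\mapsto\tuple{\tuple{\mathcal O,\mathcal L'},\formulaOne,\formulaTwo}$, where the target is read as a $\PDL$ preferential model $\pmW'$.

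Concretely, $\mathcal L'$ is obtained by composing $\mathcal L$ with a fixed singletonizer gadget that converts an $n$-bit valuation $v$ into the characteristic vector of $\{v\}$: one output bit per element of $\allModels{N}$, each bit being an equality test of the corresponding bit-index against $v$. This gadget has size $2^{n}\cdot\mathrm{poly}(n)$, so $\mathcal L'$ stays within the $(2^{n})^{O(1)}$ size regime allowed for $\succinct\pmCircOrderModelCheckingTS$, and it is produced from $\mathcal L$ by a logspace transformation; $\mathcal O$ is simply copied (and is logspace-constructible by Example~\ref{ex:lex-circ}). Hence the map is logspace-computable, and since $\pmS$ is finite, smoothness of the constructed model is automatic.

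For correctness, the key point is flatness: every relevant state of $\pmW'$ is labelled by a singleton team and $\formulaOne,\formulaTwo$ are $\mathrm{PL}$-formulas, so $\{v\}\models\psi$ if and only if $v\models^c\psi$ for $\psi\in\{\formulaOne,\formulaTwo\}$. Consequently the set $\statesOf{\formulaOne}$ of states whose label satisfies $\formulaOne$ is the same in $\pmW$ and in $\pmW'$, and because the order $\mathcal O$ is identical, $\minOf{\statesOf{\formulaOne}}{\pmP}$ selects the same states in both; those states satisfy $\formulaTwo$ as singleton teams exactly when their underlying valuations satisfy $\formulaTwo$ classically. Therefore $\formulaOne\nmableitW\formulaTwo$ holds in $\pmW$ iff $\formulaOne\nmableitWparam{\pmW'}\formulaTwo$ holds in $\pmW'$, which establishes correctness.

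I expect the only non-routine point to be pinning down the encoding of teams as circuit outputs and verifying that the singleton-team labelling circuit respects the succinct dependence-logic size regime and is logspace-constructible from $\mathcal L$; once this small gadget is in place, the rest is a direct application of flatness that collapses team satisfaction of $\mathrm{PL}$-formulas on singletons to classical satisfaction. As an equivalent alternative, one may instead re-run verbatim the $\OLMSco$-reduction of Lemma~\ref{lem:COMC-deltap2-hard}, taking $\pmS=\{0,1\}^{n}$, $\mathcal O$ computing $\rlex$, $\mathcal L'(s)=\{v_s\}$, and $\formulaOne\mapsto\tuple{\tuple{\mathcal O,\mathcal L'},\formulaOne,\neg x_n}$; flatness again reduces the team-semantic computation to the classical one, so both cases of that proof (unsatisfiable $\formulaOne$, or $\lex$-maximal model assigning $x_n\mapsto 0$) transfer unchanged.
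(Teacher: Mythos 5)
Your overall strategy---label states with singleton teams, invoke flatness to collapse team satisfaction of \PL-formulas to classical satisfaction, and inherit $\Delta_2^p$-hardness from the classical succinct case---is exactly the paper's proof; in fact your closing ``alternative'' (re-running the $\OLMSco$-reduction of Lemma~\ref{lem:COMC-deltap2-hard} with $\mathcal L'(s)=\{v_s\}$) is verbatim what the paper does. However, your main-line construction has a genuine flaw in the one place you yourself flagged as non-routine: the encoding. Your ``singletonizer'' gadget outputs the characteristic vector of $\{v\}$ over $\allModels{N}$, i.e.\ a circuit with $2^n$ output bits and size $2^n\cdot\mathrm{poly}(n)$. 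A $\leqlogm$-reduction (indeed, any polynomial-time reduction) cannot produce such a circuit: the input instance of $\succinct\pmCircOrderModelChecking$ has size $n^{O(1)}$, and a logspace transducer runs in polynomial time, hence writes an output of length polynomial in its input. So the claim that $\mathcal L'$ ``is produced from $\mathcal L$ by a logspace transformation'' is false for this gadget, and the reduction as described is not a valid $\leqlogm$-reduction, even though the target problem's definition would tolerate circuits of size $(2^n)^{O(1)}$.

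The fix is to abandon the characteristic-vector encoding: since $\formulaOne,\formulaTwo$ are \PL-formulas and flatness gives $\{v\}\models\psi$ iff $v\models^c\psi$, the labelling circuit only needs to output the $n$ bits of $v$ itself, tagged as a one-element team (e.g.\ a list-of-assignments encoding). Then $\mathcal L'$ is $\mathcal L$ plus trivial formatting, of size $n^{O(1)}$, and logspace-constructibility is immediate; the rest of your correctness argument (identical order, identical sets $\statesOf{\formulaOne}$, identical minimal states, smoothness from finiteness) goes through unchanged. With that repair, your reduction from $\succinct\pmCircOrderModelChecking$ is correct and composes with Lemma~\ref{lem:COMC-deltap2-hard} by transitivity of $\leqlogm$, matching the paper's (much terser) argument.
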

\begin{proof}
    Use a circuit family that encodes $\pmS$ as a set of singleton teams: $\pmS\coloneqq\{\;\{v\}\mid v\colon\{p_1,\dots,p_n\}\to\{0,1\}\;\}$. 
    Flatness and the proof of Lemma~\ref{lem:COMC-deltap2-hard} then yields the result.
\end{proof}

\begin{theorem}
    $\succinct\pmCircOrderModelCheckingTS$ is in $\Pi^p_2$ and $\Delta^p_2$-hard w.r.t.\ $\leqlogm$-reductions.\label{thm:suc-pent-pdl-upper-lower}
\end{theorem}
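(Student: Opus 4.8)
The plan is to obtain the theorem simply by combining the two lemmas that immediately precede it, mirroring how the companion results Theorem~\ref{thm:pent-pl}, Theorem~\ref{thm:COMC-deltap2-complete}, and Theorem~\ref{thm:pent-pdl} are assembled. The statement is a conjunction of an upper bound and a lower bound: membership in $\Pi^p_2$ is exactly Lemma~\ref{lem:sucpent-pdl-in-pip2}, and $\Delta^p_2$-hardness under $\leqlogm$-reductions is exactly Lemma~\ref{lem:sucpent-pdl-deltap2-hard}. So the proof is one sentence citing both, and the substance lives entirely in those two lemmas.

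For the upper bound I would lean on the $\Pi^p_2$-machine of Lemma~\ref{lem:succ-PL-membership}, whose $\forall\exists$ structure universally branches over a state $j$ and an assignment $s$, then existentially branches over a witness $s'$, accepting unless $s$ is a minimal model of $\formulaOne$ that falsifies $\formulaTwo$. The only change needed for $\PDL$ is that the satisfaction tests are now $\PDL$ model checking, which is $\NP$-complete \cite[Thm.~1]{DBLP:conf/sofsem/EbbingL12} rather than the $\NC1$ check used for $\CPL$. The key point I would stress is that this $\NP$ cost is absorbed into the already-existential Step~2: one simply guesses, alongside $s'$, the certificates witnessing the relevant $\PDL$ satisfactions, so the overall structure remains $\forall\exists$ and the problem stays in $\Pi^p_2$.

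For the lower bound I would restrict attention to circuit representations whose state set is the collection of singleton teams $\{\,\{v\}\mid v\colon\{p_1,\dots,p_n\}\to\{0,1\}\,\}$. On singletons the \textbf{flatness} property (Proposition~\ref{prop:pdl_pincl_properties}) makes $\PDL$ evaluation coincide with $\CPL$ evaluation, so the $\Delta^p_2$-hardness reduction from $\OLMSco$ constructed in Lemma~\ref{lem:COMC-deltap2-hard} transfers verbatim, with the same $\rlex$-ordering circuit from Example~\ref{ex:lex-circ}. This yields $\Delta^p_2$-hardness for $\succinct\pmCircOrderModelCheckingTS$.

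The only genuine subtlety, rather than an obstacle, is confirming that the strictly harder model-checking problem for $\PDL$ does not push the succinct problem above $\Pi^p_2$; this is precisely what is verified in Lemma~\ref{lem:sucpent-pdl-in-pip2} by folding the $\NP$ certificate into the existential quantifier. Granting both lemmas, the theorem follows at once.

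\begin{proof}
    Immediate from Lemmas~\ref{lem:sucpent-pdl-in-pip2} and \ref{lem:sucpent-pdl-deltap2-hard}.
\end{proof}
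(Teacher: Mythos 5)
Your proposal is correct and matches the paper's own proof exactly: Theorem~\ref{thm:suc-pent-pdl-upper-lower} is proved there by simply citing Lemmas~\ref{lem:sucpent-pdl-deltap2-hard} and \ref{lem:sucpent-pdl-in-pip2}. Your supporting summaries of those two lemmas (absorbing the $\NP$ model-checking cost into the existential quantifier, and using singleton teams plus flatness to transfer the $\OLMSco$ reduction) also faithfully reflect how the paper proves them.
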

\begin{proof}
    Shown by Lemmas~\ref{lem:sucpent-pdl-deltap2-hard} and \ref{lem:sucpent-pdl-in-pip2}.
\end{proof}

\MYParagraph{Preferential Propositional Logic with Team Semantics.}
From Theorem~\ref{thm:pent-pdl} and \ref{thm:suc-pent-pdl-upper-lower},  we can deduce similar ones for the team logic without dependence atoms $\TPL$. 
Note, that the model checking problem for $\TPL$ is (potentially) easier than for $\PDL$, as has been shown in $\Ptime$~\cite[Tab.~1]{DBLP:conf/sofsem/EbbingL12}. 
As a result, the influence of the $\NP$ completeness of model checking for $\PDL$, needs to be reconsidered.
\begin{corollary} The following holds:
    \begin{enumerate}
        \item $\pmCircOrderModelCheckingTPL$ is in $\Ptime$ and $\NC{1}$-hard under $\leq^{\AC{0}}_m$-reductions.
        \item $\succinct\pmCircOrderModelCheckingTPL$ is in $\Pi_2^p$ and $\Delta_2^p$-hard under $\leqlogm$-reductions.
    \end{enumerate}\label{cor:pent-tpl}
\end{corollary}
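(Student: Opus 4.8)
The plan is to obtain both parts of the corollary as consequences of the already-established \PDL{} results (Theorems~\ref{thm:pent-pdl} and~\ref{thm:suc-pent-pdl-upper-lower}) together with two structural facts: first, that model checking for \TPL{} is in \Ptime{} rather than \NP-complete, and second, that every hardness reduction built so far uses only \PL-formulas evaluated on singleton teams, where flatness (Proposition~\ref{prop:pdl_pincl_properties}) makes team semantics agree with classical semantics. So the whole proof is a matter of re-inspecting the \PDL{} and \CPL{} arguments and noting where the lower model-checking complexity of \TPL{} either tightens the upper bound or leaves the reduction intact.

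For part~(1), I would first re-run the brute-force algorithm of Lemma~\ref{lem:pentpl-p}: iterate over all states $s\in\pmS$, mark those with $s\models\formulaOne$, and for each minimal marked state check $s\models\formulaTwo$. The only change from the \CPL{} case is that each satisfaction test is now a \TPL{} model check, which is still in \Ptime; hence membership in \Ptime{} follows. For $\NC{1}$-hardness I would reuse the reduction of Lemma~\ref{lem:pentpl-nc1-hard}, mapping $(\theta,\formulaOne)$ to $((\{\{\theta\}\},\mathrm{id}_{\pmS},\emptyset),\top,\formulaOne)$ where the single state is labelled by the singleton team $\{\theta\}$. By flatness, $\{\theta\}\models\formulaOne$ iff $\theta\models\formulaOne$, so $\top\nmableitW\formulaOne$ encodes exactly \CPL{} model checking, which is $\NC{1}$-complete; the map is plainly $\AC0$-computable.

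For part~(2), membership in $\Pi^p_2$ follows from the $\forall\exists$ algorithm of Lemma~\ref{lem:succ-PL-membership}: since \TPL{} model checking is in \Ptime, all inner satisfaction tests are already polynomial, and unlike the \PDL{} case of Lemma~\ref{lem:sucpent-pdl-in-pip2} no certificate-guessing is needed, so the quantifier structure stays $\forall\exists$ and yields $\Pi^p_2$. For $\Delta^p_2$-hardness I would reuse the reduction of Lemma~\ref{lem:sucpent-pdl-deltap2-hard}, encoding $\pmS$ as the family of singleton teams $\{\,\{v\}\mid v\colon\{p_1,\dots,p_n\}\to\{0,1\}\,\}$ with the ordering circuit defining $\rlex$. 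Because the two formulas $\formulaOne$ and $\neg x_n$ inherited from the \OLMSco-reduction of Lemma~\ref{lem:COMC-deltap2-hard} are \PL-formulas, and flatness makes each singleton team agree with its underlying valuation, the output is a legitimate and correct instance of $\succinct\pmCircOrderModelCheckingTPL$.

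Almost every step is a routine transfer of the \PDL{}/\CPL{} analysis, so I do not expect a genuine obstacle; the one point that needs care is the upper bound of part~(1). There the \PDL{} bound was only $\Theta^p_2$ precisely because \PDL{} model checking is \NP-complete, whereas for \TPL{} the \Ptime{} model-checking complexity should let the bound drop all the way to \Ptime. I therefore have to confirm that the brute-force procedure avoids any oracle call entirely, which it does since every $\models$-test is now polynomial and the minimal-element search is a linear-time graph traversal.
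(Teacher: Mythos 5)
Your proposal is correct and follows essentially the same route the paper intends: the corollary is stated there without an explicit proof, justified only by the preceding remark that the \PDL{} results transfer once the \Ptime{} model checking of \TPL{} (in place of \NP-complete \PDL{} model checking) is taken into account, which is exactly what you carry out. Your filling-in of the details --- rerunning the brute-force algorithm for the \Ptime{} upper bound in part~(1), reusing the singleton-team reductions via flatness for both hardness claims, and noting that the $\forall\exists$ algorithm needs no certificate guessing for part~(2) --- matches the paper's intended argument point for point.
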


\section{Conclusion}
\label{sec:conclusion}
We have established a foundation for preferential
non-monotonic reasoning within the framework of team semantics. Our results also provide new insights into the algorithmic properties of classical propositional logic in the KLM framework.

Team-based logics have a wide range of applications, e.g., in the formal semantics of natural language—for instance, in the semantics of statements and questions (as in inquisitive logic) and in modeling free choice inferences (as addressed by BSML modal logics \cite{Aloni-2022}). We expect that our results will yield further applications in this domain.

In future work, we intend to investigate the query and data complexity of the problems discussed in Section~\ref{sec:complexity}, and find tight complexity bounds for the problems studied. 
This includes to investigate the complexity of a team-variant of \OLMS.
Another area of future work is the axiomatics of preferential reasoning in the team-based semantics, e.g., inclusion logic.

\clearpage
\bibliographystyle{kr}
\bibliography{merged.bib}

\end{document}